\newcommand{\longdash}[1][2em]{%
  \makebox[#1]{$\m@th\smash-\mkern-5mu\cleaders\hbox{$\mkern-2mu\smash-\mkern-2mu$}\hfill\mkern-5mu\smash-$}}
\newcommand{\omitskip}{\kern-\arraycolsep}
\newcommand{\llongdash}[1][2em]{\longdash[#1]\omitskip}
\newcommand{\rlongdash}[1][2em]{\omitskip\longdash[#1]}
\setlist{noitemsep, leftmargin=*, topsep=0pt}
\setlist[1]{labelindent=\parindent}
\declaretheoremstyle[mdframed={outerlinewidth=1pt,innertopmargin=5pt, innerbottommargin=2pt, backgroundcolor=Gray!5, linecolor=Gray!30}]{thmstyle}
\declaretheorem[style=thmstyle]{theorem}
\declaretheorem[style=thmstyle]{proposition}
\declaretheorem[style=thmstyle]{lemma}
\declaretheorem[style=remark]{remark}
\declaretheorem[style=thmstyle, numbered=no]{assumption}
\declaretheoremstyle{exstyle}
\title{Regularization properties of\\ adversarially-trained linear regression}
\author{%
    Ant\^onio H. Ribeiro  \\
    Uppsala University \\
    \texttt{antonio.horta.ribeiro@it.uu.se} \\
    \And Dave Zachariah \\
    Uppsala University \\
    \texttt{dave.zachariah@it.uu.se} \\
    \AND Francis Bach \\
    PSL Research University,  INRIA \\
    \texttt{francis.bach@inria.fr} \\
    \And Thomas B. Sch\"on\\
    Uppsala University \\
    \texttt{thomas.schon@it.uu.se}\\
}
\begin{document}

\maketitle

\doparttoc 
\faketableofcontents 

\begin{abstract}
  State-of-the-art machine learning models can be vulnerable to very small input perturbations that are adversarially constructed. Adversarial training is an effective approach to defend against it. Formulated as a min-max problem, it searches for the best solution when the training data were corrupted by the worst-case attacks. Linear models are among the simple models where vulnerabilities can be observed and are the focus of our study. In this case, adversarial training leads to a convex optimization problem which can be formulated as the minimization of a finite sum. We provide a comparative analysis between the solution of adversarial training in linear regression and other regularization methods. Our main findings are that: (A) Adversarial training yields the  minimum-norm  interpolating solution in the overparameterized regime (more parameters than data), as long as the maximum disturbance radius is smaller than a threshold. And, conversely, the minimum-norm interpolator is the solution to adversarial training with a given radius. (B) Adversarial training can be equivalent to parameter shrinking methods (ridge regression and Lasso). This happens in the underparametrized region, for an appropriate choice of adversarial radius and zero-mean symmetrically distributed covariates. (C) For $\ell_\infty$-adversarial training---as in square-root Lasso---the choice of adversarial radius for optimal bounds does not depend on the additive noise variance. We confirm our theoretical findings with numerical examples.
\end{abstract}

\section{Introduction}

Adversarial attacks generated striking examples of the brittleness of modern machine learning.
The framework considers inputs contaminated with disturbances deliberately chosen to maximize the model error and, even for small disturbances, can cause a substantial performance drop in otherwise state-of-the-art models~\citep{bruna_intriguing_2014,goodfellow_explaining_2015,kurakin_adversarial_2018,fawzi_analysis_2018,ilyas_adversarial_2019,yuan_adversarial_2019}. Adversarial training~\citep{madry_towards_2018} is one of the most effective approaches for deep learning models to defend against adversarial attacks~\citep{huang_learning_2016, madry_towards_2018, bai_recent_2021}. It considers training models on samples that have been modified by an adversary, with the goal of obtaining a model that will be more robust when faced with new adversarially perturbed samples. The training procedure is formulated as a min-max problem, searching for the best solution to the worst-case attacks. 

Despite its success in producing state-of-the-art results in adversarial defense benchmarks~\cite{croce_robustbench_2021}, there are still major challenges in using adversarial training. The min-max problem is a hard optimization problem to solve and it is still an open question how to solve it efficiently. Moreover, these methods still produce large errors in new adversarially disturbed test points, often much larger than the adversarial error obtained during training. 

To get insight into adversarial training and try to tackle these challenges, a growing body of work~\cite{taheri_asymptotic_2022,javanmard_precise_2020,hassani_curse_2022,min_curious_2021,yin_rademacher_2019} studies fundamental properties of adversarial attacks  and adversarial training in linear models.
Linear models allow for analytical analysis while still reproducing phenomena observed in state-of-the-art models. 
Consider a training dataset $\{(\x_i, y_i)\}_{i=1}^{\ntrain}$ consisting of $\ntrain$ data points of dimension $\R^\nfeatures \times \R$, adversarial training in linear regression corresponds to minimizing (in $\param \in \R^\nfeatures$)
\begin{equation}
  \label{eq:AdvTrainOrig}
    R^{\text{adv}}(\param; \delta, \|\cdot\|) = \frac{1}{\ntrain}\sum_{i=1}^\ntrain{\max_{\|\dx_i\| \le \delta} |y_i - (\x_i + \dx_i)^\top\param|^2}.
\end{equation}
Restricting the analysis to linear models allows for a simplified analysis: the problem is convex and the next proposition allows us to express $R^{\text{adv}}$ in terms of the dual norm, $\|\param\|_* = \sup_{\|\x\|\le 1} |\param^\top \x|$. 
\begin{proposition}[Dual formulation]
\label{thm:advtraining-closeform}
Let $\|\cdot\|_*$ be the dual norm of $\|\cdot\|$, then 
\begin{equation}
\label{eq:advtraining-closeform}
R^{\text{adv}}(\param; \delta, \|\cdot\|) =   \frac{1}{n}\sum_{i=1}^n\left(|y_i - \x_i^\top\param| + \delta\|\param\|_*\right)^2.
\end{equation}
\end{proposition}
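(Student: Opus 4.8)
The plan is to reduce the claim to a pointwise statement about the inner maximization, handled separately for each data index $i$, and then sum. Fix $i$ and consider the single term $\max_{\|\dx_i\| \le \delta} |y_i - (\x_i + \dx_i)^\top\param|^2$. Since squaring is monotone on the nonnegative reals, this equals $\left(\max_{\|\dx_i\| \le \delta} |y_i - \x_i^\top\param - \dx_i^\top\param|\right)^2$, so it suffices to show that the inner (unsquared) maximum equals $|y_i - \x_i^\top\param| + \delta\|\param\|_*$.

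The key step is the following elementary fact: for a fixed scalar $r = y_i - \x_i^\top\param$ and fixed vector $\param$,
\[
  \max_{\|\dx\| \le \delta} |r - \dx^\top\param| = |r| + \delta\|\param\|_*.
\]
First I would establish the upper bound: by the triangle inequality $|r - \dx^\top\param| \le |r| + |\dx^\top\param|$, and by the definition of the dual norm $|\dx^\top\param| \le \|\dx\|\,\|\param\|_* \le \delta\|\param\|_*$ whenever $\|\dx\|\le\delta$. For the matching lower bound (attainability), I would pick $\dx^\star$ achieving $\|\dx^\star\| \le \delta$ and $\dx^{\star\top}\param = \delta\|\param\|_*$ — such a vector exists because the supremum defining the dual norm, $\|\param\|_* = \sup_{\|\x\|\le 1}\param^\top\x$, is attained on the (compact) unit ball when $\|\cdot\|$ is a genuine norm on $\R^\nfeatures$; scaling that maximizer by $\delta$ and flipping its sign to match the sign of $r$ gives $|r - \dx^{\star\top}\param| = |r| + \delta\|\param\|_*$. (If $\param = 0$ the identity is trivial; if $r = 0$ either sign works.) Combining the two bounds yields equality, and substituting $r = y_i - \x_i^\top\param$, squaring, and averaging over $i$ gives \eqref{eq:advtraining-closeform}.

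The main obstacle — really the only subtlety — is the attainability of the dual-norm supremum, i.e. justifying that there is a single $\dx$ simultaneously feasible and aligned with $\param$. On a finite-dimensional space this is just compactness of the unit ball plus continuity of the linear functional, so it is routine; one should simply be careful to note that $\|\cdot\|$ is assumed to be an honest norm (the argument also goes through for seminorms with minor care, and one may prefer to state it via a supremum rather than a maximum to sidestep the issue entirely). A secondary point worth a sentence is the interchange of the maximum with the square, which is immediate from monotonicity of $t \mapsto t^2$ on $[0,\infty)$ since $|y_i - (\x_i+\dx_i)^\top\param| \ge 0$. No convexity or optimization machinery beyond this is needed; the proposition is essentially a restatement of the definition of the dual norm.
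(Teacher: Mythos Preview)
Your argument is correct. You reduce to the per-sample identity $\max_{\|\dx\|\le\delta}|r-\dx^\top\param|=|r|+\delta\|\param\|_*$, prove the upper bound by triangle inequality plus the defining inequality of the dual norm, and prove attainment by compactness of the $\|\cdot\|$-ball in $\R^\nfeatures$; pulling the square outside the max by monotonicity on $[0,\infty)$ and averaging finishes it. The only cosmetic wrinkle is the phrase ``flipping its sign to match the sign of $r$'': what you actually want is that $-\dx^{\star\top}\param$ has the same sign as $r$, but the intent is clear and the norm is symmetric, so the feasibility check is immediate.

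The paper takes a genuinely different route. Rather than proving Proposition~\ref{thm:advtraining-closeform} directly, it establishes the more general Theorem~\ref{thm:rewriting-adv-error}: for any convex lower-semicontinuous $L:\R\to\R$,
\[
\max_{\|\dx\|\le\delta} L\big((\x+\dx)^\top\param\big)=\max_{s\in\{-1,1\}} L\big(\x^\top\param+\delta s\|\param\|_*\big),
\]
proved by writing $L$ via its Fenchel biconjugate $L(z)=\sup_u\{uz-L^*(u)\}$, swapping the two suprema, and evaluating $\max_{\|\dx\|\le\delta} u\,\dx^\top\param=\delta|u|\,\|\param\|_*$. Proposition~\ref{thm:advtraining-closeform} then drops out by specializing to $L(z)=(z-y)^2$ and computing the optimal sign $s_*$. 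Your approach is shorter and entirely elementary for the squared loss, relying only on the triangle inequality and finite-dimensional compactness; the paper's Fenchel--Moreau argument is heavier machinery for this particular statement but buys generality---the same proof covers classification margin losses and any other convex lsc loss without redoing the attainment computation case by case.
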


This simple reformulation removes one of the major challenges: it removes the inner optimization problem and yields a model that is more amenable to analysis.   Similar reformulations can be obtained for classification.  This simplification made the analysis of adversarial attacks in linear models fruitful to get insights into the properties of adversarial training and examples. From giving a counterexample to the idea that deep neural networks' vulnerabilities to adversarial attacks were caused by their nonlinearity~\citep{goodfellow_explaining_2015,tsipras_robustness_2019,ilyas_adversarial_2019}, to help explaining how implicitly regularized overparametrized models can be robust, see~\cite{ribeiro_overparameterized_2023}.

\begin{figure}[t]
    \vspace{-30pt}
    \centering
    \subfloat[Lasso regression]{\includegraphics[width=0.42\textwidth]{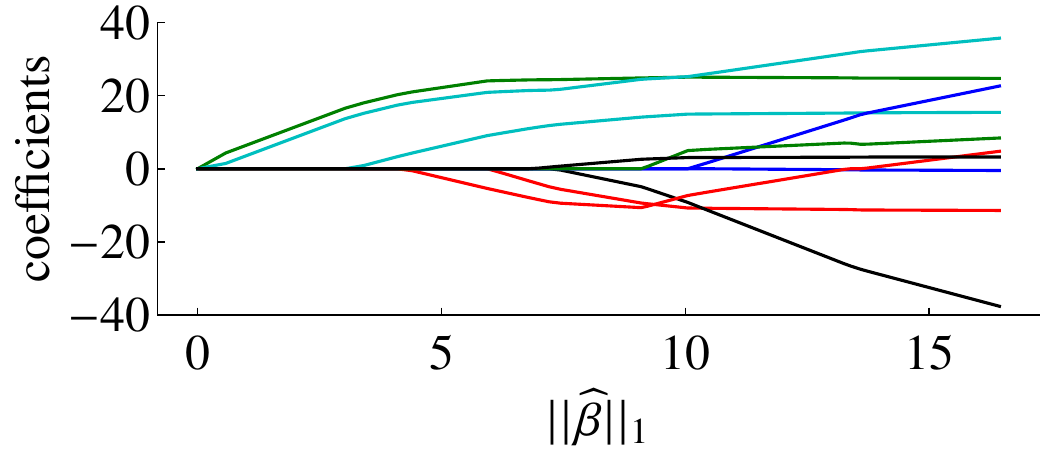}}
    \subfloat[$\ell_\infty$-adversarial training]{\includegraphics[width=0.42\textwidth]{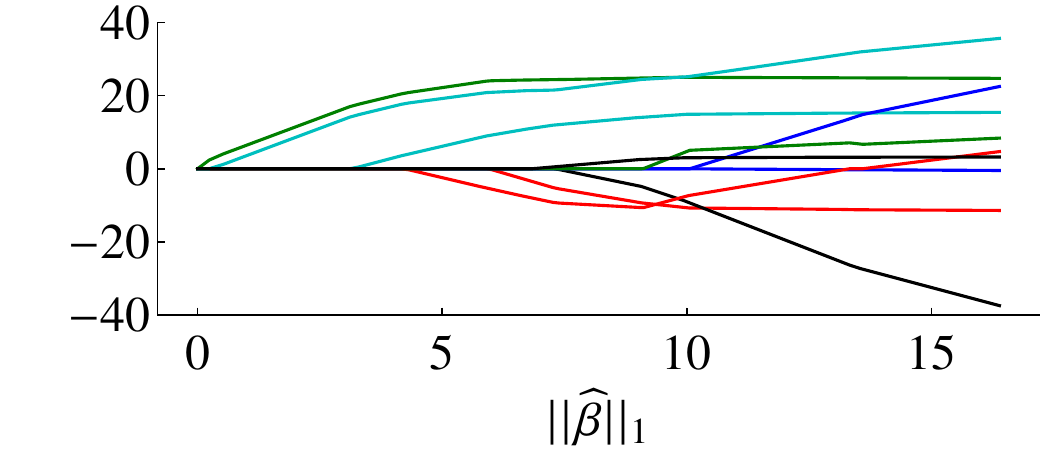}} 
    \caption{\textbf{Regularization paths} estimated in the Diabetes dataset~\citep{efron_least_2004}. Regularization paths are plots showing the coefficient estimates for varying regularization parameter $\lambda$ or, in adversarial training, perturbation radius $\delta$. This type of plot is commonly used in analysing Lasso and variants, i.e.~\citep{efron_least_2004,hastie_elements_2009}. On the horizontal axis, we give the $\ell_1$-norm of the estimated parameter. On the vertical axis, the coefficients obtained for each method.  The dataset has $\nfeatures=10$ baseline variables (age, sex, body mass index, average blood pressure, and six blood serum measurements), which were obtained for $n = 442$ diabetes patients. The model output is a quantitative measure of the disease progression. See \Cref{sec:additional_details} for the relationship between $\|\eparam\|_1$ and $\delta$ and $\lambda$.}
    \label{fig:diabetes_model_l1}\vspace{-10pt}
\end{figure}

In this paper, we further explore this reformulation and \emph{provide a thorough characterization of adversarial training in linear regression problems}.
We do this in a comparative fashion, establishing when the solution of adversarially-trained linear models coincides with the solution of traditional regularization methods.
For instance, a simple inspection of~\eqref{eq:advtraining-closeform} hints at the similarities with other parameter shrinking methods. This similarity is confirmed in \Cref{fig:diabetes_model_l1},\footnote{Code for reproducing the figures is available in: \url{https://github.com/antonior92/advtrain-linreg}} which illustrates how the solutions of $\ell_\infty$-adversarial training can be almost indistinguishable from Lasso. We explain the similarities and differences with other methods. Our contributions are:
\begin{enumerate}[label=\Alph*.]
\item In the overparametrized region, we prove that the \emph{minimum-norm interpolator  is the solution to adversarial training for $\delta$ smaller than a certain threshold} (Section~\ref{sec:overparametrized}).
\item  We establish \emph{conditions under which the solution coincides with Lasso and ridge regression}, respectively  (Section~\ref{sec:regularization}).
\item We show that adversarial training can be framed in the \emph{robust regression} framework, and use it to establish connections with the \emph{square-root Lasso}. We show that adversarial training (like the \emph{square-root Lasso}) can obtain bounds on the prediction error that do not require the noise variance to be known (Section~\ref{sec:robust-regression-and-sqrtlasso}).
\item We prove a more \emph{general version of~\Cref{thm:advtraining-closeform}}, valid for general lower-semicontinuous and convex loss functions (\Cref{sec:arbitrary-spaces}).
\end{enumerate}


\section{Related work}

\textbf{Generalization of minimum-norm interpolators.} The study of minimum-norm interpolators played a key role in explaining why overparametrized models generalize---an important open question for which traditional theory failed to explain empirical results~\cite{zhang_understanding_2017}---and for which significant progress has been made over the past few years~\cite{bartlett_deep_2021}. These estimates provide a simple scenario where we can interpolate noisy data and still generalize well. Minimum-norm interpolators have  indeed been quite a fruitful setting to study the phenomenon of \textit{benign overfitting}, i.e., when the model interpolates the training data but still generalizes well to new samples. In~\citep{bartlett_benign_2020} the authors use these estimates to prove consistency, a development that later had several extensions~\cite{koehler_uniform_2021}.  From another angle, in a series of insightful papers,~\citet{belkin_reconciling_2019,belkin_two_2020} explore the phenomena of~\emph{double-descent}: where a double-descent curve subsumes the textbook U-shaped bias–variance trade-off curve, with a second decrease in the error occurring beyond the point where the model has reached the capacity of interpolating the training data. Interestingly, minimum-norm interpolators are also basic scenarios for observing this phenomena~\cite{belkin_reconciling_2019,belkin_two_2020, hastie_surprises_2022}. 
 \emph{Our research connects the research on robustness to adversarial attacks to the study of generalization of minimum-norm interpolators.}
 
\textbf{Adversarial training in linear models.} 
The generalization of adversarial attacks in linear models is well-studied. \citet{tsipras_robustness_2019} and \citet{ilyas_adversarial_2019} use linear models to explain the conflict between robustness and high-performance models observed in neural networks; \citet{ribeiro_overparameterized_2023} use these models to show how overparameterization affects robustness to perturbations; \citet{taheri_asymptotic_2022} derive asymptotics for adversarial training in binary classification. \citet{dan_sharp_2020, dobriban_provable_2022} study adversarial robustness in Gaussian classification problems.
\citet{javanmard_precise_2020} provide asymptotics for adversarial training in linear regression,
\citet{javanmard_precise_2022} in classification settings and 
\citet{hassani_curse_2022} for random feature regressions.
\citet{min_curious_2021} investigate how the dataset size affects adversarial performance.
\citet{yin_rademacher_2019} provide an analysis of $\ell_\infty$-attack on linear classifiers based on Rademacher complexity.  These works, however, focus on the generalization properties. \emph{We provide a fresh perspective on the problem by clarifying the connection of adversarial training to other regularization methods.}

\textbf{Robust regression and square-root Lasso.} The properties of Lasso~\cite{tibshirani_regression_1996} for recovering sparse parameters in noisy settings have been extensively studied, and bounds on the parameter estimation error are well-known~\cite{wainwright_highdimensional_2019}. However, these results rely on choices of regularization parameters that depend on the variance of additive noise. Square-root Lasso~\citep{belloni_square-root_2011} was proposed to circumvent this difficulty. We show that for $\ell_\infty$-adversarial, \emph{similarly to the square-root Lasso}, bounds can be obtained with the adversarial radius set without knowledge about the variance of the additive noise. We also show that both methods fit into the robust regression framework~\citep{xu_robust_2008}. The work on robust classification~\cite {xu_robustness_2009} is also connected to our work, there they show that certain robust support vector machines are equivalent to SVM. Our results for classification in \Cref{sec:arbitrary-spaces} can be viewed as a generalization of their Theorem 3.

\textbf{Dual formulation.} 
Variations of~\Cref{thm:advtraining-closeform} are presented in~\citep{ribeiro_overparameterized_2023,xing_generalization_2021,javanmard_precise_2020}. Equivalent results in the context of classification are presented in~\cite{goodfellow_explaining_2015,yin_rademacher_2019}. We use the reformulation extensively in our developments and also present a generalized statement for it.

\section{Background}
\label{sec:background}

Different estimators will be relevant to our developments and will be compared in this paper.

\textbf{Adversarially-trained linear regression.} Our main object of study is the minimization of
\[R^{\text{adv}}(\param; \delta, \|\cdot\|) = \frac{1}{\ntrain}\sum_{i=1}^\ntrain{\max_{\|\dx_i\| \le \delta} |y_i - (\x_i + \dx_i)^\top\param|^2} \labelrel={prop1}   \frac{1}{n}\sum_{i=1}^n\left(|y_i - \x_i^\top\param| + \delta\|\param\|_*\right)^2,\]
where equality \eqref{prop1} follows from \Cref{thm:advtraining-closeform}. We use  $\|\param\|_* = \sup_{\|\x\|\le 1} |\param^\top \x|$ to denote the dual norm of $\|\cdot\|$. We highlight that the $\ell_2$-norm, $\|\param\|_2 = \sum_i |\param_i|^2$ is dual to itself. The  $\ell_1$-norm,   $\|\param\|_1 = \sum_i |\param_i|$, is the dual norm of the $\ell_\infty$-norm, $\|\param\|_\infty = \max_i |\param_i|$. When the adversarial disturbances are constrained to the $\ell_p$ ball: $\{\dx: \|\dx\|_p\le  \delta\}$ we will call it $\ell_p$-adversarial training. Our focus will be on $\ell_\infty$ and $\ell_2$-adversarial training.

\textbf{Parameter-shrinking methods.}
Parameter-shrinking methods explicitly penalize large values of $\param$. Regression methods that involve shrinkage include
Lasso~\citep{tibshirani_regression_1996} and ridge regression, which minimize, respectively
\[R^{\text{lasso}}(\param; \lambda) = \frac{1}{\ntrain}\sum_{i=1}^\ntrain|y_i - \x_i^\top\param|^2 +  \lambda\|\param\|_1 \hspace{10pt} \mathrm{and} \hspace{10pt} R^{\text{ridge}}(\param; \lambda) =  \frac{1}{\ntrain}\sum_{i=1}^\ntrain |y_i - \x_i^\trnsp\beta|^2 +  \lambda \|\param\|_2^2.\]

\textbf{Square-root Lasso:} Setting (with guarantees) the Lasso regularization parameters requires knowledge about the variance of the noise. Square-root Lasso~\citep{belloni_square-root_2011} circumvent this difficulty by minimizing:
\begin{equation}
   \label{eq:square-root-lasso}
   R^{\sqrt{\text{lasso}}}(\param, \lambda) = \sqrt{\tfrac{1}{n}{\textstyle\sum_{i=1}^n}  |y_i - \x_i^\top\param|^2} + \lambda \|\param\|_1.
  \end{equation}
  
\textbf{Minimum-norm interpolator:} Let $\X\in \R^{\ntrain\times \nfeatures}$ denote the matrix of stacked vectors $\x_i$ and $\y\in \R^{\ntrain}$ is the vector of stacked outputs. When matrix $\X$ has full row rank, the linear system $\X \param = \y$ has multiple solutions. In this case the minimum $\|\cdot\|_*$-norm interpolator is the solution of
\begin{equation}
\label{eq:min-norm-interp}
\min_{\param}  \|\param\|_* \mathrm{~~~subject~to~~~} \X \param = \y.
\end{equation}


\section{Adversarial training in the overparametrized regime}
\label{sec:overparametrized}

Our first contribution is to provide conditions for when the minimum-norm interpolator is equivalent to adversarial training in the overparametrized case (we will assume that $\text{rank}(\X) = \ntrain$ and $\ntrain < p$).
On the one hand, our result gives further insight into adversarial training. 
On the other hand, it allows us to see minimum-norm interpolators as a solution of the adversarial training problem.

\begin{theorem}
\label{thm:dual-minnormsol}
Assume the matrix $\X\in \R^{\ntrain \times \nfeatures}$ to have  full row rank, let $\edualp$  denote the solution  of the dual problem $\max_{ \|\dualp^\top \X \| \le 1} \dualp^\top \vv{y}$, and $\bar \delta$ denote the  threshold\vspace{-2pt}
\begin{equation}
\label{eq:delta}
\bar \delta = \left(n\|\edualp\|_\infty\right)^{-1}.
\end{equation}
The minimum $\|\cdot\|_*$-norm interpolator minimizes the adversarial risk $R^{\text{adv}}(\param, \delta, \|\cdot\| )$ if and \\\vspace{-7pt}\\only if $\delta \in (0, \bar \delta]$.
\end{theorem}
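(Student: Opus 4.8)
The plan is to exploit the dual reformulation from \Cref{thm:advtraining-closeform}, which writes $R^{\text{adv}}(\param;\delta,\|\cdot\|) = \frac1n\sum_i(|y_i-\x_i^\top\param| + \delta\|\param\|_*)^2$, and to show that the minimum $\|\cdot\|_*$-norm interpolator $\eparam$ is a minimizer of this convex objective exactly when $\delta\le\bar\delta$. The key observation is that, since $\X$ has full row rank and $\ntrain<p$, the interpolator satisfies $\X\eparam=\y$, so at $\eparam$ every residual $y_i-\x_i^\top\eparam$ vanishes and $R^{\text{adv}}(\eparam;\delta,\|\cdot\|) = \delta^2\|\eparam\|_*^2$. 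First I would argue that any minimizer of $R^{\text{adv}}$ with $\delta>0$ must in fact interpolate the data when the model is overparameterized: on the affine interpolation set one can drive the objective down to $\delta^2\|\eparam\|_*^2$, while any $\param$ with a nonzero residual incurs a strictly larger cost unless its dual norm is correspondingly smaller — this needs a short argument but follows because scaling toward the interpolation subspace decreases both terms. Granting this, the problem reduces to: among interpolators, $\eparam$ minimizes $\|\param\|_*$ by definition, hence it minimizes $R^{\text{adv}}$ restricted to interpolators; the remaining question is whether some non-interpolating $\param$ could do better for large $\delta$, and whether $\eparam$ remains optimal globally for small $\delta$.

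The heart of the argument is a first-order optimality (subgradient) condition for the convex function $R^{\text{adv}}$ at the point $\eparam$. Writing $r_i(\param)=|y_i-\x_i^\top\param|$, the objective is $\frac1n\sum_i (r_i(\param)+\delta\|\param\|_*)^2$; its subdifferential at $\eparam$ (where all $r_i=0$) involves the subdifferentials of $|{\cdot}|$ at $0$, namely $[-1,1]$, contracted with the rows $\x_i$, together with the subdifferential of $\|\cdot\|_*$ at $\eparam$. The condition $0\in\partial R^{\text{adv}}(\eparam)$ becomes: there exist $s_i\in[-1,1]$ and $g\in\partial\|\eparam\|_*$ with $\frac{2\delta}{n}\sum_i(\text{sign-term})\big(s_i\x_i + \delta\|\eparam\|_* g\big)=0$, which after rescaling reads $\sum_i s_i \x_i = -\,n\delta\|\eparam\|_*\, g$, i.e. $\X^\top s = -n\delta\|\eparam\|_* g$ for some $s\in[-1,1]^n$ and $g\in\partial\|\eparam\|_*$. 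Here is where the dual problem $\max_{\|\dualp^\top\X\|\le1}\dualp^\top\y$ enters: by Lagrangian duality / the KKT conditions for the minimum-norm problem \eqref{eq:min-norm-interp}, the optimal dual variable $\edualp$ satisfies $\X^\top\edualp\in\partial\|\eparam\|_*$ (complementary slackness forces $\|\X^\top\edualp\|=1$ when $\eparam\ne0$) and $\edualp^\top\y = \|\eparam\|_*$. Combining, the subgradient condition is solvable precisely when one can write $\edualp = -\frac{1}{n\delta\|\eparam\|_*}\, s$ with $\|s\|_\infty\le1$, equivalently $n\delta\|\eparam\|_*\,\|\edualp\|_\infty\le 1$. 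I would still need to reconcile the factor $\|\eparam\|_*$ with the stated threshold $\bar\delta = (n\|\edualp\|_\infty)^{-1}$: the normalization $\edualp^\top\y=\|\eparam\|_*$ together with $\y=\X\eparam$ and $\|\X^\top\edualp\|=1$ should yield exactly the cancellation, since $\edualp^\top\y = \edualp^\top\X\eparam = (\X^\top\edualp)^\top\eparam \le \|\X^\top\edualp\|\,\|\eparam\|_* = \|\eparam\|_*$ with equality at optimum — so the $\|\eparam\|_*$ absorbs correctly and the condition collapses to $n\delta\|\edualp\|_\infty\le1$, i.e. $\delta\le\bar\delta$.

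The two directions then follow. For the ``if'' direction ($\delta\in(0,\bar\delta]$): construct the required subgradient certificate explicitly by taking $s = -n\delta\|\eparam\|_*\edualp$ (valid since $\|s\|_\infty = n\delta\|\eparam\|_*\|\edualp\|_\infty\le 1$ when combined with the normalization, using $\delta\le\bar\delta$) and $g = \X^\top\edualp\in\partial\|\eparam\|_*$, verifying $0\in\partial R^{\text{adv}}(\eparam)$; convexity then gives global optimality. For the ``only if'' direction ($\delta>\bar\delta$): show the certificate cannot exist — any putative subgradient representation would force $\|s\|_\infty\le1$ yet the normalization pins down $\sum s_i$ (via pairing with $\eparam$ or $\y$) at a value incompatible with $\delta>\bar\delta$, so $0\notin\partial R^{\text{adv}}(\eparam)$ and $\eparam$ is not a minimizer; one should also note $\delta=0$ is excluded since then $R^{\text{adv}}$ is ordinary least squares with infinitely many minimizers not equal to $\eparam$ in general. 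The main obstacle I anticipate is handling the nonsmoothness carefully: both $|{\cdot}|$ at $0$ and $\|\cdot\|_*$ (which may be nonsmooth, e.g. the $\ell_1$-norm when the dual norm is $\ell_\infty$) contribute set-valued subdifferentials, and at $\eparam$ every residual term is at a kink, so the subdifferential calculus — in particular justifying that $\partial R^{\text{adv}}(\eparam)$ is exactly the sum/chain-rule expression and that no additional constraint qualification issue arises — is the delicate bookkeeping step; the rest is a clean duality argument.
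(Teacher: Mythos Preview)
Your approach is essentially the paper's: compute the subdifferential of $R^{\text{adv}}$ at the minimum-norm interpolator $\eparam$, use the dual of the minimum-norm problem to identify $\X^\top\edualp\in\partial\|\eparam\|_*$, and characterize when $0\in\partial R^{\text{adv}}(\eparam)$.

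There is, however, a concrete slip in the subgradient computation that propagates into your ``reconciliation'' paragraph. With $L_i(\param)=|y_i-\x_i^\top\param|+\delta\|\param\|_*$ one has $\partial(L_i^2)(\eparam)=2L_i(\eparam)\,\partial L_i(\eparam)$, where $L_i(\eparam)=\delta\|\eparam\|_*$ and $\partial L_i(\eparam)=\{s_i\x_i+\delta g:s_i\in[-1,1],\,g\in\partial\|\eparam\|_*\}$. The factor $\|\eparam\|_*$ thus appears only as an \emph{overall} scalar (coming from $L_i(\eparam)$), not inside the braces multiplying $g$:
\[
\partial R^{\text{adv}}(\eparam)=\frac{2\delta\|\eparam\|_*}{n}\bigl\{\X^\top s+n\delta g:\ \|s\|_\infty\le1,\ g\in\partial\|\eparam\|_*\bigr\}.
\]
So the optimality condition is $\X^\top s=-n\delta g$, not $\X^\top s=-n\delta\|\eparam\|_*\, g$. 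Taking $g=\X^\top\edualp$ and using that $\X^\top$ is injective (full row rank of $\X$) forces $s=-n\delta\edualp$, and $\|s\|_\infty\le1$ is exactly $\delta\le(n\|\edualp\|_\infty)^{-1}=\bar\delta$. Nothing needs reconciling; the identity $\edualp^\top\y=\|\eparam\|_*$ does not cancel anything and is not used here. Two smaller points: the opening claim that every minimizer with $\delta>0$ must interpolate is false for large $\delta$ (cf.\ \Cref{thm:zero-solution}, where $\vv{0}$ is optimal), so simply drop that detour; and for the ``only if'' direction, note that any $g\in\partial\|\eparam\|_*\cap\mathrm{range}(\X^\top)$ can be written as $g=\X^\top\alpha$ with $\|\X^\top\alpha\|\le1$ and $\alpha^\top\y=\alpha^\top\X\eparam=\|\eparam\|_*$, so $\alpha$ is itself dual-optimal --- this is the clean way to show no certificate exists when $\delta>\bar\delta$, rather than the vague ``pins down $\sum s_i$''.
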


\begin{remark}[Bounds on $\bar\delta$]
The theorem allows us to directly compute $\bar \delta$, but it does require solving the dual problem to the minimum-norm solution. In \Cref{sec:bounds_on_bardelta}, we provide bounds on $\bar \delta$ that avoid solving the optimization problem. For instance, for $\ell_\infty$-adversarial attacks, we have the following bounds depending on the singular values of $\X$: $\frac{1}{\sqrt{p}}\sigma_n(\X) \le  n \bar \delta \le \sqrt{p}\sigma_1(\X)$. Where  $\sigma_1$  and $\sigma_n$ denote the largest and smallest \emph{positive} singular values of $\X$, respectively.
\end{remark}
\vspace{-12pt}

\begin{proof}[Proof of \Cref{thm:dual-minnormsol}]

Let $\eparam$ and $\edualp$ be the minimum $\|\cdot\|_*$-norm solution and the solution of the associated dual problem. Throughout the proof we denote $R^\text{adv}(\param) = R^\text{adv}(\param; \delta, \|\cdot\|)$, dropping the last two arguments.
The subgradient of $R^\text{adv}(\param)$ evaluated at $\eparam$ is given by\footnote{The subgradient of a function $\omega:\R^\nfeatures \rightarrow \R$ is the set: \[\partial \omega(\param_0) = \{\vv{v} \in \R^\nfeatures: \omega(\param) - \omega(\param_0) \ge \vv{v}( \param - \param_0), ~ \forall \param \in \R^\nfeatures  \}.\] See~\citep{bertsekas_convex_2003,clarke_optimization_1990, boyd_subgradients_2022} for properties.
We use $\partial \|\eparam\|_*$ to denote the subgradient of $\omega(\param) =  \|\param\|_*$ evaluated at $\eparam$. }
\begin{equation}
\partial R^{\text{adv}}(\eparam) = \frac{2\delta }{\ntrain} \sum _{i = 1}^\ntrain \|\eparam\|_*(\rho_i \x_i + \delta\partial \|\eparam\|_*),
\end{equation}
for any $\boldsymbol \rho = (\rho_1, \dots, \rho_\ntrain) \in \R^\ntrain$ that satisfies $\|\boldsymbol \rho\|_\infty \le 1 $. We have that any element of $\partial \|\eparam\|_*$ can be written as $\X^\top \edualp$ (we prove that in Lemma~\ref{thm:dual-equivalence} in the Appendix). Hence, we can rewrite
\begin{align*}
\partial R^{\text{adv}}(\eparam) &= \frac{2\delta}{\ntrain} \sum _{i = 1}^\ntrain  \|\eparam\|_* (\rho_i \x_i + \delta \X^T \edualp)
     = 2\delta \|\eparam\|_* \Big(\frac{\X^\top \vv{\rho}}{\ntrain} + \delta \X^\top \edualp\Big).
\end{align*}
If $\delta \le \bar\delta = 1 / (\ntrain \|\edualp\|_\infty)$, we can take  $\vv{\rho} =- \ntrain \delta \edualp$ and the subderivative contains zero. On the other hand, if ${\delta >1 / (\ntrain \|\edualp\|_\infty)}$ then 
$\big(\frac{\X^\top \vv{\rho}}{\ntrain } + \delta \X^\top \edualp \big)$ is not zero for $\|\boldsymbol \rho\|_\infty \le 1$.
\end{proof}

The minimum $\ell_1$-norm interpolator is well studied in the context of `basis pursuit' and allows the recovery of low-dimensional representations in sparse signals~\cite{chen_atomic_1998}. The interest in the minimum $\ell_2$-norm is more recent. It played an important role in studying the interplay between interpolation and generalization, being used in many recent papers where the double-descent~\cite{belkin_reconciling_2019, hastie_surprises_2022}  and the benign overfitting phenomena~\cite{bartlett_benign_2020} were observed. 

\begin{wrapfigure}{r}{0.5\textwidth}
    \centering
    \includegraphics[width=0.4\textwidth]{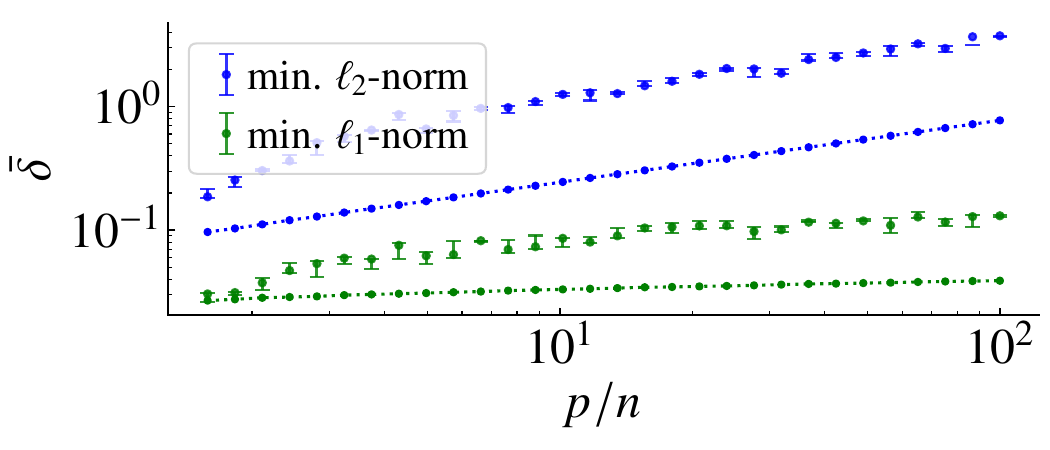}
    \caption{\textbf{Threshold $\bar \delta$ \textit{vs.} number of features.}  We fix $\ntrain = 60$ and show  the value of $\bar \delta$ (defined in~\eqref{eq:delta}) as a function of the number of features $\nfeatures$. The dotted lines give the reference ${\delta_{\mathrm{test}} = 0.01 \Exp{\|\x\|}}$ for comparison.}
    \label{fig:threshold}
    \vspace{-10pt}
\end{wrapfigure}
\Cref{thm:dual-minnormsol} gives a new perspective on the robustness of the minimum-norm interpolator, allowing us to see it as a solution to adversarial training with adversarial radius $\bar \delta$.
\Cref{fig:threshold} shows that the radius $\bar \delta$ of the adversarial training problem corresponding to the minimum-norm interpolator increases with the ratio $p/n$.  It is natural to expect that training with a larger radius would yield more adversarially robust models on new test points. The next result formalizes this intuition by establishing an upper bound on the robustness gap:  the difference between the square-root of the expected adversarial squared error ${\mathcal{R}^\text{adv}_*(\param; \delta_{\textrm{test}}, \|\cdot\|)=  \E{y_0, \x_0}{\max_{\|\dx_i\| \le \delta_{\mathrm{test}}} (y_0 - (\x_0 + \dx_0)^\top\param)^2}}$  and the expected squared error $\mathcal{R}_*(\param)= \E{y_0, \x_0}{(y_0 - \x_0^\top\param)^2}$. The upper bound depends on the in-training adversarial error and  on the ratio $\frac{\delta_{\mathrm{test}}}{\bar \delta}$ which are quantities that you can directly compute.

\begin{proposition}
\label{thm:robustness-gap-ratio}
Assume $\X$ to have full row rank and let  $\eparam$ be the minimum $\|\cdot\|_*$-norm interpolator, than
\begin{equation} 
\label{eq:robustness-gap-ratio}
\sqrt{\mathcal{R}^\text{adv}_*(\eparam; \delta_{\textrm{test}}, \|\cdot\|)} - \sqrt{\mathcal{R}_*(\eparam)}\le \frac{\delta_{\mathrm{test}}}{\bar \delta}\sqrt{R^\text{adv}(\eparam; \bar \delta, \|\cdot\|)}.
\end{equation}
\end{proposition}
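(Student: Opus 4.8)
The plan is to reduce the statement to two elementary facts: the population adversarial risk admits the same dual reformulation as in \Cref{thm:advtraining-closeform}, and the minimum-norm interpolator turns the in-training adversarial risk into a pure penalty term. First I would apply \Cref{thm:advtraining-closeform} pointwise. For a fixed test pair $(\x_0, y_0)$, the inner maximization $\max_{\|\dx_0\| \le \delta_{\mathrm{test}}} (y_0 - (\x_0 + \dx_0)^\top\eparam)^2$ equals $(|y_0 - \x_0^\top\eparam| + \delta_{\mathrm{test}}\|\eparam\|_*)^2$ by exactly the argument of \Cref{thm:advtraining-closeform} (it only uses the definition of the dual norm and the monotonicity of $t \mapsto t^2$ on $[0,\infty)$), so taking expectations gives
\[
\mathcal{R}^\text{adv}_*(\eparam; \delta_{\mathrm{test}}, \|\cdot\|) = \E{y_0, \x_0}{\big(|y_0 - \x_0^\top\eparam| + \delta_{\mathrm{test}}\|\eparam\|_*\big)^2}.
\]

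Next I would view the right-hand side as the squared $L^2$-norm of the random variable $|y_0 - \x_0^\top\eparam| + \delta_{\mathrm{test}}\|\eparam\|_*$ and apply Minkowski's inequality (the triangle inequality in $L^2$), noting that the second summand is a constant, to obtain
\[
\sqrt{\mathcal{R}^\text{adv}_*(\eparam; \delta_{\mathrm{test}}, \|\cdot\|)} \le \sqrt{\E{y_0, \x_0}{(y_0 - \x_0^\top\eparam)^2}} + \delta_{\mathrm{test}}\|\eparam\|_* = \sqrt{\mathcal{R}_*(\eparam)} + \delta_{\mathrm{test}}\|\eparam\|_*,
\]
hence $\sqrt{\mathcal{R}^\text{adv}_*(\eparam; \delta_{\mathrm{test}}, \|\cdot\|)} - \sqrt{\mathcal{R}_*(\eparam)} \le \delta_{\mathrm{test}}\|\eparam\|_*$. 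It then remains to identify $\delta_{\mathrm{test}}\|\eparam\|_*$ with the claimed bound. Since $\eparam$ interpolates the training data, $y_i = \x_i^\top\eparam$ for every $i$, so all residuals vanish and \Cref{thm:advtraining-closeform} gives $R^\text{adv}(\eparam; \bar\delta, \|\cdot\|) = \frac{1}{n}\sum_{i=1}^n (\bar\delta\|\eparam\|_*)^2 = (\bar\delta\|\eparam\|_*)^2$, i.e. $\sqrt{R^\text{adv}(\eparam; \bar\delta, \|\cdot\|)} = \bar\delta\|\eparam\|_*$. Substituting, $\delta_{\mathrm{test}}\|\eparam\|_* = \tfrac{\delta_{\mathrm{test}}}{\bar\delta}\sqrt{R^\text{adv}(\eparam; \bar\delta, \|\cdot\|)}$, and combining with the previous display yields \eqref{eq:robustness-gap-ratio}.

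Because both steps are elementary there is no genuine obstacle; the only points that require a little care are (i) checking that the dual reformulation of \Cref{thm:advtraining-closeform} transfers verbatim from the empirical to the population risk — which is immediate since the maximization decouples over samples — and (ii) noting that the second step is an \emph{equality}, so all the slack in \eqref{eq:robustness-gap-ratio} comes from Minkowski's inequality; in particular the bound is tight when the prediction residual $y_0 - \x_0^\top\eparam$ is (almost surely) constant.
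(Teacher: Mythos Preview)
Your proof is correct and follows essentially the same route as the paper: reformulate the population adversarial risk via the dual norm, bound $\sqrt{\mathcal{R}^\text{adv}_*}$ by $\sqrt{\mathcal{R}_*}+\delta_{\mathrm{test}}\|\eparam\|_*$, and then use interpolation to identify $\bar\delta\|\eparam\|_*=\sqrt{R^{\text{adv}}(\eparam;\bar\delta,\|\cdot\|)}$. The only cosmetic difference is that the paper expands the square and applies Jensen's inequality $\E{}{|Z|}\le\sqrt{\E{}{Z^2}}$, whereas you invoke Minkowski in $L^2$; when one summand is a constant these are literally the same inequality.
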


\begin{remark}
\label{cmt:mismatched-norm}
The example in \Cref{fig:threshold} is one case where adding more features makes the minimum-norm interpolator more robust. Examples showing the opposite and illustrating that adding more features to linear models can make them less adversarially robust are abound in the literature~\cite{tsipras_robustness_2019,ribeiro_overparameterized_2023}.  Indeed, examples that consider the minimum $\ell_2$-interpolator subject to $\ell_\infty$-adversarial attacks during test-time result in this scenario, as discussed in~\cite{ribeiro_overparameterized_2023}. \Cref{thm:robustness-gap-ratio-l2-linf} in the Appendix is the equivalent of \Cref{thm:robustness-gap-ratio}  for this case (mismatched norms in train and test). There, the upper bound grows with $\sqrt{p}$, which explains the vulnerability in this scenario.
\end{remark}

\begin{remark}
\label{cmt:random-proj}
A pitfall of analyzing the minimum-norm solution properties in linear models is that it requires the analysis of nested problems: different choices of $p$ require different covariates~$\x$. The random projection model proposed by~\citet{bach_high-dimensional_2023} avoids this pitfall by considering the input~$\x$ is fixed, but only a projected version of it $\mm{S}\x$ is observed,  with the number of parameters being estimated changing with the number of projections observed. Adversarially training this model consists in finding a parameter $\eparam$ that minimizes: $\frac{1}{\ntrain}\sum_{i=1}^\ntrain{\max_{\|\dx_i\| \le \delta} |y_i - (\x_i + \dx_i)^\top \mm{S}^\top \param|^2}.$
We generalize our results so that we can also study this case (See Appendix~\ref{sec:linear-maps-extra}).
\end{remark}

\section{Adversarial training and parameter-shrinking methods}
\label{sec:regularization}

\Cref{thm:dual-minnormsol} stated an equivalence with the minimum-norm solution when $\delta$ is small. The next result applies to the other extreme of the spectrum, characterizing the case when $\delta$ is large.

\begin{proposition}[Zero solution of adversarial training]
\label{thm:zero-solution}
The zero solution $\eparam = \vv{0}$ minimizes the adversarial training if and only if $\delta \ge  \frac{\|\X^\top \y\|}{\|\y\|_1}$.
\end{proposition}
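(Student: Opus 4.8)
The plan is to use convexity to reduce the statement to a first-order optimality condition at the origin. By \Cref{thm:advtraining-closeform}, $R^{\text{adv}}(\param;\delta,\|\cdot\|) = \frac1n\sum_{i=1}^n f_i(\param)^2$ with $f_i(\param) = |y_i - \x_i^\top\param| + \delta\|\param\|_*$; each $f_i$ is a nonnegative convex function and $t\mapsto t^2$ is convex and nondecreasing on $[0,\infty)$, so $R^{\text{adv}}$ is convex. Hence $\param=\vv 0$ is a global minimizer if and only if $\vv 0\in\partial R^{\text{adv}}(\vv 0)$, and the whole proof reduces to computing that subdifferential.

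For the computation I would assemble $\partial R^{\text{adv}}(\vv 0)$ from standard rules, all of which apply here because every function involved is finite-valued and convex (so no constraint qualification is needed for the sum rule). First, the chain rule for the square gives $\partial(f_i^2)(\vv 0) = 2 f_i(\vv 0)\,\partial f_i(\vv 0) = 2|y_i|\,\partial f_i(\vv 0)$; note that at indices with $y_i=0$ this contribution collapses to $\{\vv 0\}$ and so plays no role. Second, the sum rule gives $\partial f_i(\vv 0) = \partial_\param|y_i - \x_i^\top\param|\big|_{\vv 0} + \delta\,\partial\|\cdot\|_*(\vv 0)$; for $y_i\ne 0$ the first term is the singleton $\{-\operatorname{sign}(y_i)\,\x_i\}$, while $\partial\|\cdot\|_*(\vv 0)$ is the unit ball of the norm dual to $\|\cdot\|_*$, which by biduality in finite dimensions is $B:=\{\vv v:\|\vv v\|\le 1\}$. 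Putting these together and discarding the $y_i=0$ terms,
\[
\partial R^{\text{adv}}(\vv 0) \;=\; \frac{2}{n}\sum_{i=1}^n\bigl(-y_i\,\x_i + \delta|y_i|\,B\bigr) \;=\; \frac{2}{n}\bigl(-\X^\top\y + \delta\|\y\|_1\,B\bigr),
\]
where the last equality uses convexity of $B$ to merge the scaled Minkowski summands.

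It then remains only to translate the inclusion $\vv 0\in\partial R^{\text{adv}}(\vv 0)$: it holds iff $\X^\top\y\in\delta\|\y\|_1\,B$, i.e.\ (assuming $\y\ne\vv 0$, so that $\|\y\|_1>0$) iff $\|\X^\top\y\|\le\delta\|\y\|_1$, which is exactly $\delta\ge\|\X^\top\y\|/\|\y\|_1$; this establishes both implications at once. The degenerate case $\y=\vv 0$ is immediate since then $\param=\vv 0$ attains $R^{\text{adv}}=0$ for every $\delta\ge 0$.

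The step I expect to require the most care is the subdifferential bookkeeping at this nonsmooth point: correctly applying the chain rule for $f_i^2$ at indices where $f_i(\vv 0)=|y_i|=0$ (so such terms contribute $\{\vv 0\}$ and do not inject a spurious $\x_i$-direction), and correctly identifying $\partial\|\cdot\|_*(\vv 0)$ with the $\|\cdot\|$-unit ball. An equivalent and perhaps cleaner route that avoids set-valued arithmetic is to test optimality through the one-sided directional derivative, $(R^{\text{adv}})'(\vv 0;\vv d) = \frac{2}{n}\bigl(-(\X^\top\y)^\top\vv d + \delta\|\y\|_1\,\|\vv d\|_*\bigr)$, and require it to be $\ge 0$ for all $\vv d\in\R^\nfeatures$; since $\sup_{\vv d\ne\vv 0}(\X^\top\y)^\top\vv d/\|\vv d\|_* = \|\X^\top\y\|$, this again yields $\|\X^\top\y\|\le\delta\|\y\|_1$.
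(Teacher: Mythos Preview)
Your proposal is correct and follows essentially the same approach as the paper: reduce to the convex first-order condition $\vv 0\in\partial R^{\text{adv}}(\vv 0)$, compute the subdifferential at the origin as $\tfrac{2}{n}(\pm\X^\top\y+\delta\|\y\|_1 B)$ with $B$ the $\|\cdot\|$-unit ball, and read off the condition $\|\X^\top\y\|\le\delta\|\y\|_1$. You are in fact slightly more careful than the paper in handling the indices with $y_i=0$ and the degenerate case $\y=\vv 0$, and your optional directional-derivative route is a clean equivalent that the paper does not mention.
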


In the remaining of this section, we will characterize the solution between these two extremes. We compare adversarial training to Lasso and ridge regression.

\subsection{Relation to Lasso and ridge regression}

\begin{figure}[t]
    \centering
       \subfloat[Ridge regression]{\includegraphics[width=0.42\textwidth]{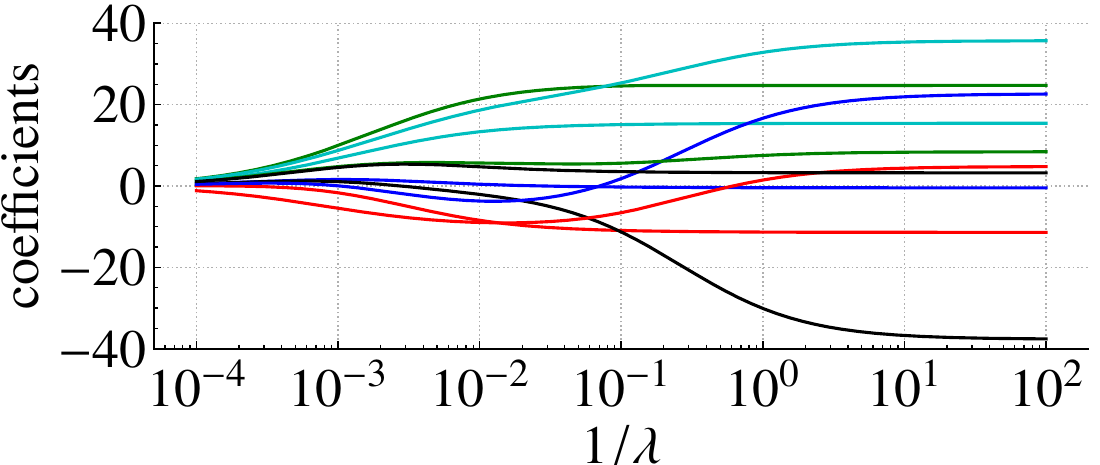}}
    \subfloat[$\ell_2$-adversarial training  ]{\includegraphics[width=0.42\textwidth]{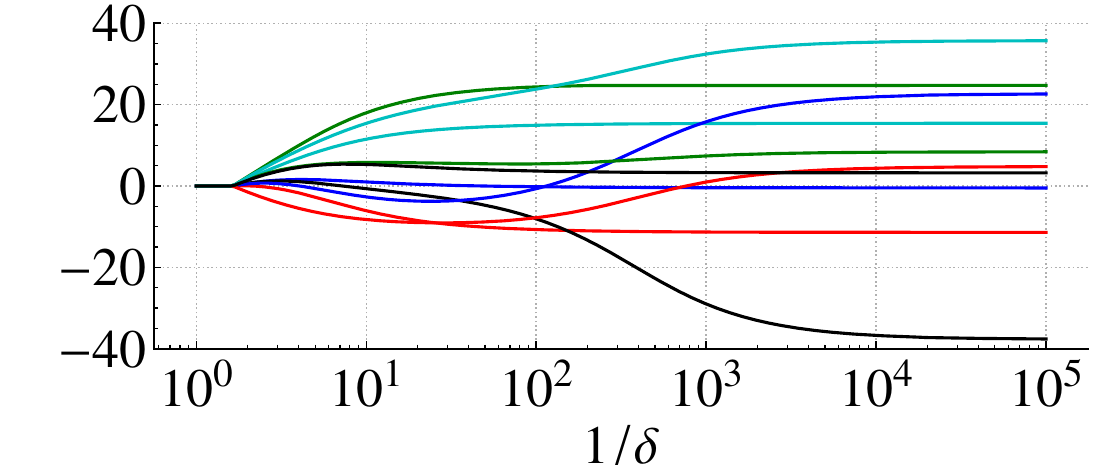}}
    \caption{\textbf{Regularization paths} in the Diabetes dataset~\citep{efron_least_2004} (see  dataset description in~\Cref{fig:diabetes_model_l1}). On the horizontal axis, we give the inverse of the regularization parameter (in log scale). On the vertical axis, the coefficients.}
    \label{fig:diabetes_model}\vspace{-15pt}
\end{figure}
\Cref{thm:advtraining-closeform} makes it clear by inspection that adversarial training is similar to other well-known parameter-shrinking regularization methods. Indeed, for $\ell_{\infty}$-adversarial attacks, the cost function $R^{\text{adv}}(\beta; \delta, \|\cdot\|_\infty)$ in its dual form is remarkably similar to Lasso~\citep{tibshirani_regression_1996} and  $R^{\text{adv}}(\beta; \delta, \|\cdot\|_2)$, to ridge regression (the cost functions were presented in \Cref{sec:background}).
In \Cref{fig:diabetes_model_l1,fig:diabetes_model}, we show the regularization paths of these methods (the dataset is described in~\citep{efron_least_2004}). We observe that \emph{$\ell_\infty$-adversarial training produces sparse solutions} and that 
Lasso and $\ell_\infty$-adversarial training have extremely similar regularization paths. There are also striking similarities between 
ridge regression and $\ell_2$-adversarial training, with the notable difference that for large $\delta$ the solution of $\ell_2$-adversarial training is zero (which is explained by \Cref{thm:zero-solution}). The next proposition justifies why we can observe such similarities in part of the regularization path. It applies to data that has been normalized and for positive responses $y$ (as it is the case in our example). 
\begin{proposition}
\label{thm:proxy-paramshriking3}
Assume the output is positive and the data is normalized: $\y\ge 0$ and $\X^\top \vv{1} =  \vv{0}$. The solution of adversarial training  $\eparam$, for $\|\eparam\|_* \le \min_i \frac{|y_i|}{\|x_i\|} $  is also the solution of 
\[\min_{\param}\frac{1}{\ntrain}\sum_{i=1}^n  |y_i - \x_i^\top\param|^2 + \big(\delta\|\param\|_* + \tfrac{1}{\ntrain} \|\y\|_1\big)^2.\]
\end{proposition}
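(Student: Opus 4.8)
The plan is to expand both risks, isolate the single term in which they differ, and show that on a convex region containing $\eparam$ that term reduces to a constant. Write $R^{\text{adv}}(\param) := R^{\text{adv}}(\param;\delta,\|\cdot\|)$ and let $g$ denote the objective in the statement. Everything hinges on two elementary facts, valid for every $\param$: since $\y \ge \vv{0}$ we have $\sum_i y_i = \|\y\|_1$, and since $\X^\top\vv{1} = \vv{0}$ we have $\sum_i \x_i^\top\param = 0$. Hence $\sum_i (y_i - \x_i^\top\param) = \|\y\|_1$, so by the triangle inequality $\sum_i |y_i - \x_i^\top\param| \ge \|\y\|_1$, with equality exactly on the polyhedron $\mathcal{C} := \{\param : \x_i^\top\param \le y_i \text{ for all } i\}$. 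Moreover $\eparam \in \mathcal{C}$: from the definition of the dual norm, $\x_i^\top\eparam \le \|\x_i\|\,\|\eparam\|_* \le |y_i| = y_i$ for every $i$, using the hypothesis $\|\eparam\|_* \le \min_i |y_i|/\|\x_i\|$.

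Next I would use the dual formulation of \Cref{thm:advtraining-closeform} and expand the squares:
\[
R^{\text{adv}}(\param) = \tfrac{1}{n}\|\y - \X\param\|_2^2 + \tfrac{2\delta\|\param\|_*}{n}{\textstyle\sum_i}|y_i - \x_i^\top\param| + \delta^2\|\param\|_*^2,
\]
whereas $g(\param) = \tfrac{1}{n}\|\y - \X\param\|_2^2 + \delta^2\|\param\|_*^2 + \tfrac{2\delta\|\param\|_*}{n}\|\y\|_1 + \tfrac{1}{n^2}\|\y\|_1^2$. Subtracting, $R^{\text{adv}}(\param) - g(\param) = \tfrac{2\delta\|\param\|_*}{n}\big({\textstyle\sum_i}|y_i - \x_i^\top\param| - \|\y\|_1\big) - \tfrac{1}{n^2}\|\y\|_1^2$, which by the previous paragraph is $\ge -\tfrac{1}{n^2}\|\y\|_1^2$ throughout $\R^\nfeatures$, with equality precisely on $\mathcal{C}$. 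Thus $g$ and $R^{\text{adv}}$ agree up to the additive constant $\tfrac{1}{n^2}\|\y\|_1^2$ on all of $\mathcal{C}$.

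It remains to upgrade ``$\eparam$ minimizes $g$ on $\mathcal{C}$'' to ``$\eparam$ minimizes $g$ on $\R^\nfeatures$''. Since $\eparam$ minimizes $R^{\text{adv}}$ globally and lies in $\mathcal{C}$, it minimizes $R^{\text{adv}}$, hence --- by the constant offset --- $g$, over $\mathcal{C}$. As $g$ is convex ($t \mapsto t^2$ is nondecreasing on the range $[\tfrac1n\|\y\|_1, \infty)$ of $\delta\|\param\|_* + \tfrac1n\|\y\|_1$), a minimizer of $g$ over $\mathcal{C}$ lying in the interior of $\mathcal{C}$ is a local, hence global, minimizer of $g$; this settles the generic case. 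An equivalent route is to verify $\vv{0} \in \partial g(\eparam)$ directly, plugging the subgradient selection realizing $\vv{0} \in \partial R^{\text{adv}}(\eparam)$ (computed as in the proof of \Cref{thm:dual-minnormsol}) into $\partial g(\eparam)$ and simplifying: the cross terms collapse because $\sum_i \x_i = \X^\top\vv{1} = \vv{0}$ and $\sum_i (y_i - \x_i^\top\eparam) = \|\y\|_1$, turning the $R^{\text{adv}}$-subgradient at $\eparam$ into a $g$-subgradient at $\eparam$. I expect the one delicate point to be the borderline configuration where $\eparam \in \partial\mathcal{C}$, i.e.\ some residual $y_i - \x_i^\top\eparam$ vanishes (which forces $\|\eparam\|_* = \min_i|y_i|/\|\x_i\|$ with the dual-norm inequality tight at the active indices): there one must ensure the free subgradient choices at those indices cooperate, which I would settle either through the shrinking perturbation $\eparam \mapsto (1-t)\eparam$ --- which strictly decreases $\|\cdot\|_*$ and makes all residuals positive, so optimality of $\eparam$ pins down the relevant signs --- or by perturbing $\y$ (or $\delta$) and passing to the limit. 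The degenerate case $\eparam = \vv{0}$ is consistent with \Cref{thm:zero-solution} and is checked directly.
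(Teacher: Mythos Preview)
Your argument is correct and your alternative subgradient route is precisely the paper's proof. The paper proceeds via an intermediate result (their Theorem~\ref{thm:proxy-paramshriking}): assuming all residuals $y_i - \x_i^\top\eparam$ are nonzero, it computes $\partial R^{\text{adv}}(\eparam)$, uses $\partial\|\X\eparam-\y\|_1 = \{-\vv{s}\}$ with $\vv{s}=\sign(\y-\X\eparam)$ and $\|\X\eparam-\y\|_1 = \vv{s}^\top\y$, and observes that when $\X^\top\vv{s}=\vv{0}$ the resulting expression is exactly the subgradient of $g$ at $\eparam$. Proposition~\ref{thm:proxy-paramshriking3} then follows by noting that under $\y\ge 0$ and $|\x_i^\top\eparam|\le|y_i|$ one has $\vv{s}=\vv{1}$, whence $\X^\top\vv{s}=\X^\top\vv{1}=\vv{0}$ and $\vv{s}^\top\y=\|\y\|_1$.

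Your primary route --- expanding both objectives, exhibiting the pointwise inequality $R^{\text{adv}}(\param)-g(\param)\ge -\tfrac{1}{n^2}\|\y\|_1^2$ with equality on the polyhedron $\mathcal{C}$, and then invoking convexity of $g$ to pass from ``local minimizer in $\operatorname{int}\mathcal{C}$'' to global --- is genuinely different and arguably more transparent: it avoids subgradient calculus entirely in the generic case and makes the geometric reason for the equivalence (the two functions coincide up to a constant on a region containing the optimum) explicit. The paper's subgradient computation, on the other hand, is a single mechanical step once Theorem~\ref{thm:proxy-paramshriking} is in hand and does not need to argue local-to-global. Your careful flagging of the boundary configuration $\eparam\in\partial\mathcal{C}$ is apt: the paper's route has the same gap, since Theorem~\ref{thm:proxy-paramshriking} is stated only under the hypothesis $y_i\neq\x_i^\top\eparam$ for all~$i$, and the deduction of Proposition~\ref{thm:proxy-paramshriking3} does not address equality in $|\x_i^\top\eparam|\le|y_i|$.
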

The proposition is proved in~\Cref{sec:properties-advattack}. It establishes the equivalence between 
$\ell_\infty$-adversarial training and Lasso (even though there is not a closed-formula expression for the map between $\delta$ and $\lambda'$). Indeed, \emph{under the  assumptions of the propositions, $\eparam$ is the solution of $\ell_\infty$-adversarial problem only if it is the solution of}
\begin{equation}
    \label{eq:lagrangian_form}
    \min_{\beta}\frac{1}{n} \sum_{i=1}^n (y_i - \x_i^\top \param) +  \lambda' ||\param||_1,
\end{equation}
\emph{for some $\lambda'$.} We can prove this statement using the proposition: under the appropriate assumptions, the $\ell_\infty$-adversarial problem solution is also a solution to the problem
$$\min_{\param}\frac{1}{n} \sum_{i=1}^n (y_i - \x_i^\top \param)^2 + \Big(\delta ||\param||_1 + \frac{1}{n}||\y||_1\Big)^2,$$
which in turn, is the Lagrangian formulation of the following constrained optimization problem
$$\min_{\param} \frac{1}{n} \sum_{i=1}^n (y_i - \x_i^\top \param)^2\text{ subject to }\Big(\delta||\param||_1 + \frac{1}{n}||\y||_1\Big)^2 \le \Delta,$$
for some $\Delta \ge \frac{1}{n}||\y||_1$. The constraint can be rewritten as $\delta ||\param||_1 \le \sqrt{\Delta} - \frac{1}{n}||\y||_1.$ And, the result follows because \eqref{eq:lagrangian_form} is the Lagrangian formulation of this modified problem.
A similar reasoning could be used to connect the result for 
$\ell_2$-adversarial training with ridge regression.

More generally, even when the condition on $\eparam$ is not satisfied and if $\y$ is negative, we show in  Appendix~\ref{sec:properties-advattack} that for zero-mean and symmetrically distributed covariates, i.e., $\Exp{\x} = 0$ and ${(\x \sim -\x)}$, adversarial training has approximately the same solution as
\vspace{-3pt}\[\min_{\param}  \frac{1}{\ntrain}\sum_{i=1}^n |y_i - \x_i^\top\param|^2 + \big(\delta\|\param\|_* + \tfrac{1}{\ntrain} \vv{s}^\top \y\big)^2,\]
for $\vv{s}=\sign(\y - \X^\top\eparam)$. This explains the similarities in the regularization paths.

\subsection{Transition into the interpolation regime}
\label{sec:transition}

The discussion above hints at the similarities between adversarial training and parameter-shrinking methods.  Interestingly, Lasso and ridge regression are also connected to minimum-norm interpolators.
 The ridge regression solution converges to the minimum-norm solution as the parameter vanishes i.e.,
$\eparam^{\text{ridge}}(\lambda) \rightarrow \eparam^{\text{min}-\ell_2}$ as 
$\lambda \rightarrow 0^+$.
Similarly, there is a relation between the minimum $\ell_1$-norm solution and Lasso.
The relation requires additional constraints because, for the overparameterized case, Lasso does not necessarily have a unique solution. Nonetheless, it is proved in \cite[Lemma 7]{tibshirani_lasso_2013} that the Lasso solution by the LARS algorithm satisfies $\eparam^{\text{lasso}}(\lambda) \rightarrow \eparam^{\text{min}-\ell_1}$ as  $\lambda \rightarrow 0^+$.

For a sufficiently small $\delta$, the solution of $\ell_2$-adversarial training equals the minimum $\ell_2$-norm solution; and, the solution of $\ell_\infty$-adversarial training equals the minimum $\ell_1$-norm solution. There is a notable difference though: while this happens \emph{only in the limit} for ridge regression and Lasso, for adversarial training this happens \emph{for all values $\delta$ smaller than} the threshold $\bar{\delta}$. We illustrate this phenomenon next. Unlike ridge regression and Lasso which converge towards the interpolation solution, adversarial training goes through abrupt transitions and suddenly starts to interpolate the data.
Figure~\ref{fig:distance-min-norm-sol} illustrates this phenomenon in synthetically generated data (with isotropic Gaussian feature, see Section~\ref{sec:examples}).

\begin{figure}[t]
    \centering
    \includegraphics[width=0.45\textwidth]{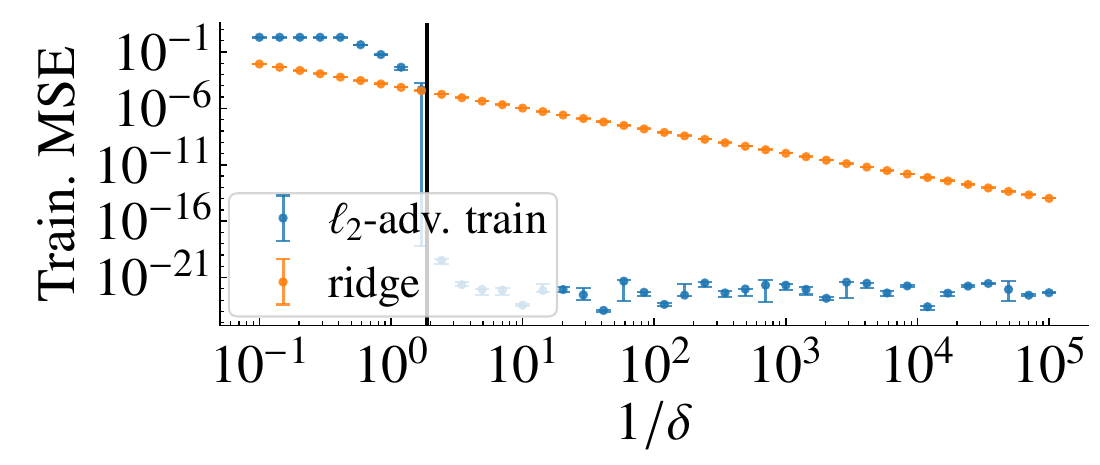}
    \includegraphics[width=0.45\textwidth]{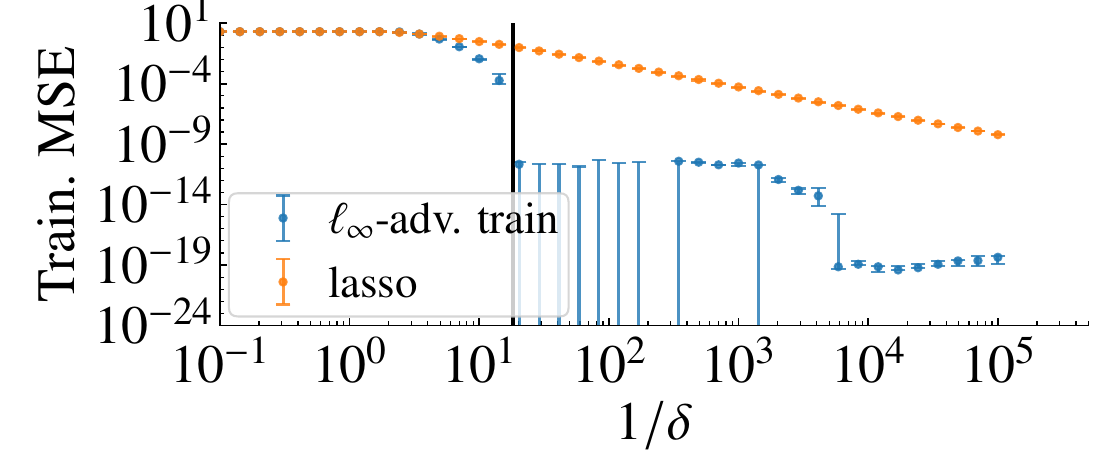}
    \caption{\textbf{Training mean squared error \textit{vs} inverse adversarial radius}. \emph{Left:} for ridge and $\ell_2$-adversarial training. \emph{Right:}  for Lasso and $\ell_\infty$-adversarial training. The error bars give the median and the 0.25 and 0.75 quantiles from 5 realizations. The vertical black lines show $\bar \delta$ in~\eqref{eq:delta}.~\Cref{fig:test_mse_vs_delta} (appendix) shows the test MSE. For ridge regression or Lasso (see \Cref{sec:background}), the $x$-axis should be read as $1/\lambda$, rather than $1/\delta$. We generate the data synthetically using an isotropic Gaussian feature model (see Section~\ref{sec:examples}) with  $\ntrain =60$ training data points and $\nfeatures = 200$ features. }
    \label{fig:distance-min-norm-sol}\vspace{-10pt}
\end{figure}

\begin{wrapfigure}{r}{0.4\textwidth}
    \vspace{-10pt}
   \includegraphics[width=0.4\textwidth]{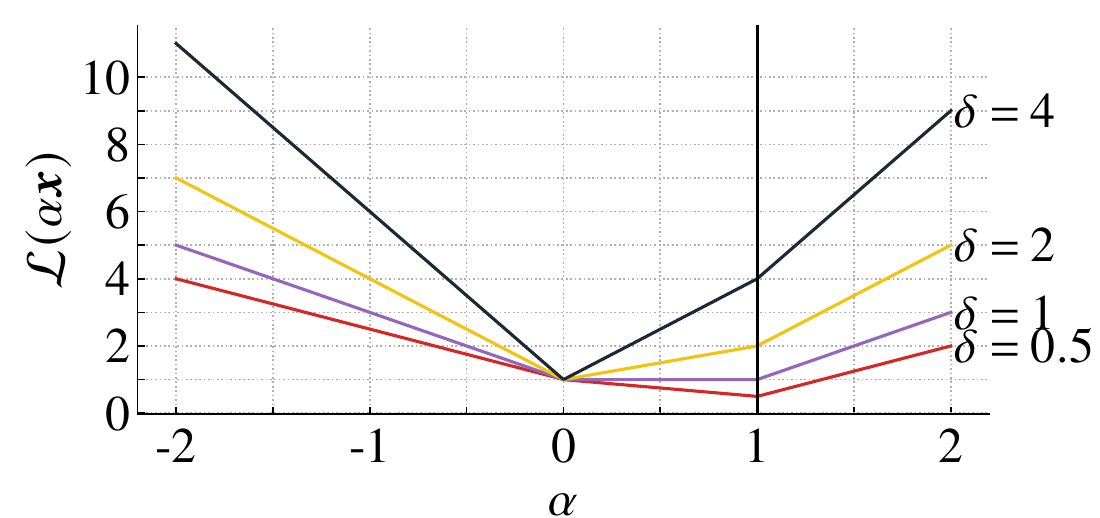}\vspace{-7pt}
    \caption{\textbf{Function $\loss(\param)$}  for $\param = \alpha \x$.}
    \label{fig:fi_plot}
\end{wrapfigure}
\textbf{Intuitive explanation for abrupt transitions.}
To get insight into why the abrupt transition occurs we present the case where  $n=1$, i.e., the dataset has a single data point $(\x, y)$, where $y = 1$ and $\|\x\|_2 = 1$. 
We can write $R^{\text{adv}}(\param; \delta, \|\cdot \|_2) = \left( \loss(\param)\right)^2$, where
\[\loss(\param) = |y - \x^\trnsp\param| + \delta\|\param\|_2.\]
The function $\loss$ is necessarily minimized along the subspace spanned by the vector $\x$. Now, along this line, the function $\loss$ is piecewise linear with three segments, see~\Cref{fig:fi_plot}. One of the three segments changes the slope sign as $\delta$ decreases and the minimum of the function changes abruptly, thus explaining why abrupt transitions occur.

\section{Relation to robust regression and square-root Lasso}
\label{sec:robust-regression-and-sqrtlasso}
In this section we establish that both adversarial training and square-root Lasso can be integrated into the robust regression framework. We further investigate the similarities between the methods by analyzing the prediction error upper bounds.

\textbf{Robust regression framework.}
 Robust linear regression considers the minimization of  the following cost function
\begin{equation}
    R^{\text{robust}}(\param; \mathcal{S}) = \max_{\mm{\Delta} \in \mathcal{S}}\|\y - (\X + \mm{\Delta})\param\|_2,
\end{equation}
where the disturbance matrix $\mm{\Delta}$ is constrained to belong to the  `disturbance set' $\mathcal{S}$.  We connect robust regression, square-root Lasso and adversarial training. The next proposition gives the equivalence between robust regression  for  a  row-bounded disturbance sets $\mathcal{R}_{p, \delta}$ and $\ell_p$-adversarial training, see the appendix for the proof. 

\newpage
\begin{proposition}\label{thm:robust_training_advtrain} 
 For a disturbance set with the rows  bounded by $\delta$:
$$ \mathcal{R}_{p, \delta} = \left\{
  \begin{bmatrix}
    \llongdash & \dx_1 & \rlongdash\\
    &\vdots&\\
    \llongdash & \dx_n & \rlongdash\\
  \end{bmatrix}:
     \|\dx_i\| \le \delta, \forall i \right\},$$
     we have that~
     $\text{\rm arg} \min_{\param}  R^{\text{robust}}(\param, \mathcal{R}_{p, \delta}) = \text{\rm arg}\min_{\param}  R^{\text{adv}}(\param; \delta, \|\cdot\|).$
\end{proposition}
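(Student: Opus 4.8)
The plan is to show that the two optimization problems have the same objective function (up to a monotone transformation), so that their minimizers coincide. The key observation is that $R^{\text{robust}}(\param; \mathcal{R}_{p,\delta})$ involves a maximization over a set $\mathcal{R}_{p,\delta}$ whose rows are \emph{independently} constrained, so the maximization decouples across rows once we take advantage of the structure of the $\ell_2$-norm.

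\textbf{Step 1: decouple the inner maximization.} Writing $\mm{\Delta}$ with rows $\dx_i^\top$, we have $(\X+\mm{\Delta})\param$ is the vector with entries $(\x_i + \dx_i)^\top\param$, so
\[
R^{\text{robust}}(\param; \mathcal{R}_{p,\delta}) = \max_{\|\dx_i\|\le \delta,\, \forall i}\sqrt{\sum_{i=1}^n \big((\x_i + \dx_i)^\top\param\big)^2 \text{ with } y_i \text{ subtracted}} = \max_{\|\dx_i\|\le\delta,\,\forall i}\sqrt{\sum_{i=1}^n \big(y_i - (\x_i+\dx_i)^\top\param\big)^2}.
\]
Since the square root is increasing and each term in the sum depends on a different $\dx_i$, the maximum of the sum equals the sum of the maxima: $\max_{\|\dx_i\|\le\delta,\,\forall i} \sum_i (\cdots)_i = \sum_i \max_{\|\dx_i\|\le\delta} (\cdots)_i$. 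Hence
\[
R^{\text{robust}}(\param; \mathcal{R}_{p,\delta}) = \sqrt{\sum_{i=1}^n \max_{\|\dx_i\|\le\delta}\big(y_i - (\x_i+\dx_i)^\top\param\big)^2} = \sqrt{\sum_{i=1}^n \max_{\|\dx_i\|\le\delta}\big|y_i - (\x_i+\dx_i)^\top\param\big|^2}.
\]

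\textbf{Step 2: identify with the adversarial risk.} Comparing with the definition of $R^{\text{adv}}(\param;\delta,\|\cdot\|)$ in~\eqref{eq:AdvTrainOrig}, we see that $R^{\text{robust}}(\param;\mathcal{R}_{p,\delta}) = \sqrt{n\, R^{\text{adv}}(\param;\delta,\|\cdot\|)}$. (Alternatively, one can first invoke \Cref{thm:advtraining-closeform} on both sides: the inner max of $|y_i - (\x_i+\dx_i)^\top\param|$ over $\|\dx_i\|\le\delta$ equals $|y_i - \x_i^\top\param| + \delta\|\param\|_*$, so $R^{\text{robust}}(\param;\mathcal{R}_{p,\delta}) = \sqrt{\sum_i (|y_i-\x_i^\top\param| + \delta\|\param\|_*)^2} = \sqrt{n\,R^{\text{adv}}(\param;\delta,\|\cdot\|)}$.)

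\textbf{Step 3: conclude.} The map $t \mapsto \sqrt{n\,t}$ is strictly increasing on $[0,\infty)$, and $R^{\text{adv}}(\param;\delta,\|\cdot\|)\ge 0$ for all $\param$. Therefore $\param$ minimizes $R^{\text{robust}}(\cdot;\mathcal{R}_{p,\delta})$ if and only if it minimizes $R^{\text{adv}}(\cdot;\delta,\|\cdot\|)$, which gives the claimed equality of argmin sets. I do not anticipate a serious obstacle here; the only point requiring a little care is the interchange of $\max$ and $\sum$ in Step 1, which is valid precisely because the feasible set $\mathcal{R}_{p,\delta}$ is a product set (the constraint on each row $\dx_i$ does not couple to the other rows) — worth stating explicitly since it is exactly the structural feature that makes row-bounded robust regression reduce to per-sample adversarial training.
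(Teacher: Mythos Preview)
Your proof is correct and follows essentially the same approach as the paper: both exploit that $\mathcal{R}_{p,\delta}$ is a product set to decouple the inner maximization across rows, obtain $R^{\text{robust}}(\param;\mathcal{R}_{p,\delta}) = \sqrt{n\,R^{\text{adv}}(\param;\delta,\|\cdot\|)}$, and then conclude equality of minimizers via monotonicity of the square/square-root. The paper's write-up is terser (it runs the argument in the reverse direction, starting from $R^{\text{adv}}$), but your more explicit justification of the $\max$--$\sum$ interchange is a welcome clarification.
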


On the other hand, there is an equivalence between square-root Lasso and robust regression  for  column-bounded disturbance sets $\mathcal{C}_{2, \delta}$.
This was established by~\citet[Theorem 1]{xu_robust_2008} and we repeat it below.

\begin{proposition}\label{thm:robust_sqrtlasso} 
  \citep{xu_robust_2008}~For a disturbance set with columns bounded by $\delta$:
  $$\mathcal{C}_{2, \delta} = \left\{
  \begin{bmatrix}
    \bigl| & & \bigl|\\
    \vv{\zeta}_1& \cdots& \vv{\zeta}_m \\
    \bigl| & & \bigl|
  \end{bmatrix}: \|\vv{\zeta}_i\|_2 \le \delta, \forall i \right\},$$
  we have that~
  $\text{\rm arg}\min_{\param}  R^{\text{robust}}(\param; \mathcal{C}_{2, \delta})   =  \text{\rm arg} \min_{\param} R^{\sqrt{\text{lasso}}}(\param, \delta).$
\end{proposition}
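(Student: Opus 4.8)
To prove \Cref{thm:robust_sqrtlasso} I would reproduce the argument of~\citet{xu_robust_2008}: evaluate the inner maximization defining $R^{\text{robust}}(\param; \mathcal{C}_{2,\delta})$ in closed form — here the \emph{column} structure of $\mathcal{C}_{2,\delta}$ is what makes this possible — and recognize the result as the square-root Lasso cost. Write a generic element of the disturbance set as $\mm{\Delta} = [\vv{\zeta}_1\ \cdots\ \vv{\zeta}_\nfeatures]$ with each column $\vv{\zeta}_j \in \R^{\ntrain}$ obeying $\|\vv{\zeta}_j\|_2 \le \delta$, and set $\vv{r} = \y - \X\param$. Since $\mm{\Delta}\param = \sum_{j=1}^{\nfeatures} \param_j \vv{\zeta}_j$,
\[
  R^{\text{robust}}(\param; \mathcal{C}_{2,\delta}) = \max_{\|\vv{\zeta}_1\|_2\le\delta,\ \dots,\ \|\vv{\zeta}_\nfeatures\|_2\le\delta}\ \Big\|\, \vv{r} - \sum_{j=1}^{\nfeatures} \param_j \vv{\zeta}_j \,\Big\|_2 .
\]

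The next step is a two-sided bound on this maximum. For the upper bound, the triangle inequality gives, for every admissible $\mm{\Delta}$,
\[
  \Big\|\, \vv{r} - \sum_{j=1}^{\nfeatures} \param_j \vv{\zeta}_j \,\Big\|_2 \ \le\ \|\vv{r}\|_2 + \sum_{j=1}^{\nfeatures} |\param_j|\,\|\vv{\zeta}_j\|_2 \ \le\ \|\vv{r}\|_2 + \delta\|\param\|_1 .
\]
For the matching lower bound I would exhibit an explicit maximizer: choose a unit vector $\vv{u}\in\R^{\ntrain}$ — take $\vv{u} = \vv{r}/\|\vv{r}\|_2$ if $\vv{r}\neq\vv{0}$, and $\vv{u}$ arbitrary otherwise — and set $\vv{\zeta}_j = -\delta\,\sign(\param_j)\,\vv{u}$. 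Then $\|\vv{\zeta}_j\|_2 \le \delta$, so $\mm{\Delta}\in\mathcal{C}_{2,\delta}$, and a short computation gives $\vv{r} - \sum_j \param_j\vv{\zeta}_j = \vv{r} + \delta\|\param\|_1\,\vv{u}$, a nonnegative combination of two vectors pointing in the same direction, whose $\ell_2$-norm is exactly $\|\vv{r}\|_2 + \delta\|\param\|_1$. (The maximum is attained since $\mathcal{C}_{2,\delta}$ is compact and the objective continuous; this $\mm{\Delta}$ realizes the value.) Hence $R^{\text{robust}}(\param; \mathcal{C}_{2,\delta}) = \|\y - \X\param\|_2 + \delta\|\param\|_1$.

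Finally, the right-hand side is — up to the overall positive normalization, which does not change the set of minimizers — the square-root Lasso objective~\eqref{eq:square-root-lasso}, so $\arg\min_\param R^{\text{robust}}(\param; \mathcal{C}_{2,\delta}) = \arg\min_\param R^{\sqrt{\text{lasso}}}(\param, \delta)$, as claimed. I do not expect a genuine obstacle: the heart of the proof is the closed-form inner maximization, and the one point that needs a little care is achievability — checking that aligning every disturbance column with the current residual, with sign matched to the corresponding coordinate of $\param$, actually saturates the triangle inequality, and in particular handling the degenerate case $\y = \X\param$ in which $\vv{r}$ vanishes and the direction $\vv{u}$ must be picked arbitrarily.
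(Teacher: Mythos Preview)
Your proposal is correct and is precisely the argument of \citet{xu_robust_2008}; the paper itself does not supply a proof of this proposition but merely cites that reference, so there is nothing further to compare. One small caveat: your closing sentence glosses over the $n^{-1/2}$ factor in the paper's definition of $R^{\sqrt{\text{lasso}}}$ --- strictly, $\|\y-\X\param\|_2 + \delta\|\param\|_1$ equals $\sqrt{n}\,R^{\sqrt{\text{lasso}}}(\param,\delta/\sqrt{n})$, so the argmins match only after this harmless reparametrization of the regularization level (an imprecision already present in the paper's statement).
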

The above discussion hints at the similarities between adversarial training and square-root Lasso, as instances of robust regression under different constraints. Square-root Lasso  minimizes ${R^{\sqrt{\text{lasso}}}(\param, \lambda) = n^{-1/2}\|\y - \X \param \|_2+ \lambda \|\param\|_1}$ (See \Cref{sec:background}). 
The main motivation for the square-root Lasso is that it is a `pivotal' method for sparse recovery: that is, it attains near-oracle performance without knowledge of the variance levels to set the regularization parameter~\citep{belloni_square-root_2011}.  As we will show in the next section, a similar property applies to $\ell_\infty$-adversarial training.

\textbf{Fixed-design analysis and similarities with square-root Lasso.}
In this section, we assume that the data was generated as: $y_i = \x_i^\top\param^* + \e_i$
where $\param^*$ is the parameter vector used to generate the data. Under these assumptions, we can derive an upper bound for the (in-sample) prediction error:

\begin{theorem}
\label{thm:pred-error-slowrate}
Let $\delta > \delta^* = 3 \frac{\|\X^\top \ee \|_\infty}{\|\ee \|_1}$, the prediction error of $\ell_\infty$-adversarial training satisfies the bound:
\begin{equation}
\label{eq:pred-error-slowrate}
\tfrac{1}{n}\|\X (\eparam - \param^*)\|_2^2  \le  8 \delta\|\param^*\|_1  \left(\tfrac{1}{n}\|\ee\|_1  + 10 \delta \|\param^*\|_1\right).
\end{equation}
\end{theorem}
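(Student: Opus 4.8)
The plan is to run the ``basic inequality'' argument familiar from slow-rate guarantees for the Lasso, but carried out on the dual form of the adversarial risk and supplemented by a single case distinction that disposes of a term with no Lasso counterpart. By \Cref{thm:advtraining-closeform} applied with $\|\cdot\|=\|\cdot\|_\infty$ (whose dual norm is $\|\cdot\|_1$), the estimator $\eparam$ minimizes $f(\param):=\tfrac1n\sum_{i=1}^n\big(|y_i-\x_i^\top\param|+\delta\|\param\|_1\big)^2$. Put $v:=\eparam-\param^*$ and $r:=\y-\X\eparam$, so that $r=\ee-\X v$ under the assumed model, and expand the squares to get $f(\param)=\tfrac1n\|\y-\X\param\|_2^2+\tfrac{2\delta}{n}\|\param\|_1\,\|\y-\X\param\|_1+\delta^2\|\param\|_1^2$. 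Feeding this into $f(\eparam)\le f(\param^*)$, using $\y-\X\param^*=\ee$ and the identity $\|r\|_2^2-\|\ee\|_2^2=\|\X v\|_2^2-2(\X v)^\top\ee$, yields the basic inequality
\[
\tfrac1n\|\X v\|_2^2\ \le\ \tfrac2n(\X v)^\top\ee\ +\ \tfrac{2\delta}{n}\|\param^*\|_1\|\ee\|_1\ -\ \tfrac{2\delta}{n}\|\eparam\|_1\|r\|_1\ +\ \delta^2\big(\|\param^*\|_1^2-\|\eparam\|_1^2\big).
\]

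Next I will squeeze the cross term. By H\"older, $(\X v)^\top\ee=v^\top(\X^\top\ee)\le\|v\|_1\|\X^\top\ee\|_\infty$; the hypothesis $\delta>\delta^*$ reads exactly $\|\X^\top\ee\|_\infty<\tfrac\delta3\|\ee\|_1$; and $\|v\|_1\le\|\eparam\|_1+\|\param^*\|_1$. Abbreviating $D:=n^{-1/2}\|\X v\|_2$, $Q:=\tfrac1n\|\ee\|_1$, $T:=\tfrac1n\|r\|_1$, $a:=\|\eparam\|_1$, $b:=\|\param^*\|_1$, these three facts turn the basic inequality into
\[
D^2\ \le\ \Phi\ +\ \tfrac83\,\delta Q b\ +\ \delta^2 b^2,\qquad \Phi:=\tfrac23\,\delta Q a\ -\ 2\delta a T\ -\ \delta^2 a^2 .
\]
The term $\Phi$ is the crux: it plays the role of the clean penalty difference ``$\lambda(\|\param^*\|_1-\|\eparam\|_1)$'' in the Lasso proof, and the obvious remedy --- bounding $\|\eparam\|_1$ a priori --- does not help, since it only gives $\delta a\lesssim\delta b+\delta^2 b^2/Q$, which leaks a term absent from the target bound.

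The way around this is a dichotomy on the residual $\ell_1$-norm $T$. If $T\ge\tfrac13Q$, then $\Phi=2\delta a\big(\tfrac13Q-T\big)-\delta^2a^2\le0$ and the basic inequality already gives $D^2\le\tfrac83\delta Qb+\delta^2b^2$, below the claimed bound. If instead $T<\tfrac13Q$, then from $\|r\|_1\ge\|\ee\|_1-\|\X v\|_1$ and Cauchy--Schwarz $\tfrac1n\|\X v\|_1\le n^{-1/2}\|\X v\|_2=D$ we get $T\ge Q-D$, hence $Q<\tfrac32D$; moreover, discarding the non-positive $-2\delta aT$ and maximizing over $a\ge0$, $\Phi\le\tfrac23\delta Qa-\delta^2a^2\le\max_{t\ge0}\big(\tfrac23\delta Qt-\delta^2t^2\big)=\tfrac{Q^2}{9}<\tfrac{D^2}{4}$. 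Substituting into the basic inequality gives $D^2\le\tfrac14D^2+\tfrac83\delta Qb+\delta^2b^2$, i.e.\ $D^2\le\tfrac43\big(\tfrac83\delta Qb+\delta^2b^2\big)$. In both branches $D^2\le 8\delta b(Q+10\delta b)$, with constants to spare; the only role of the factor $3$ in $\delta^*$ is to make the leftover $Q^2/9$ a proper fraction of $D^2$ in the second branch (any factor strictly larger than $2$ works), and the argument uses nothing about the design matrix beyond the dual reformulation. The step I expect to need the most care is precisely this second branch --- the observation that failure of the penalty cancellation $T\ge\tfrac13Q$ forces the residuals to be small, which in turn forces the prediction error to be large enough to absorb the stray quadratic-in-$Q$ term.
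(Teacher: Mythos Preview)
Your argument is correct and complete, but it differs substantially from the route the paper takes. The paper does not use the ``basic inequality'' $f(\eparam)\le f(\param^*)$; instead it writes down the first-order optimality condition $0\in\partial R^{\text{adv}}(\eparam)$, takes the inner product with $\eerror=\eparam-\param^*$, and bounds the resulting three subgradient terms separately. After applying $\|\X\eerror\|_1\le\sqrt{n}\,\|\X\eerror\|_2$ this produces a scalar quadratic inequality of the form $y^2\le by+c$ in $y=n^{-1/2}\|\X\eerror\|_2$; the nonnegativity of the discriminant $b^2+4c\ge 0$ is then exploited to extract the a~priori bound $\|\eparam\|_1\le 8\|\param^*\|_1$, and finally the elementary implication $y^2\le by+c\Rightarrow y^2\le b^2+2c$ closes the proof.

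Your approach is more elementary in two respects: it avoids subgradient calculus altogether, and it sidesteps the need for an a~priori bound on $\|\eparam\|_1$ by the dichotomy on $T=\tfrac1n\|r\|_1$ --- in the ``bad'' branch the surplus term $\Phi$ is simply absorbed into the left-hand side via $Q^2/9<D^2/4$. As a byproduct you obtain noticeably smaller constants (at most $\tfrac{32}{9}\delta Qb+\tfrac43\delta^2b^2$, versus the paper's $8\delta Qb+76\delta^2b^2$). What the paper's route buys, on the other hand, is the intermediate conclusion $\|\eparam\|_1\le 8\|\param^*\|_1$, which is of independent interest and does not fall out of your argument.
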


For comparison, we also provide  the result for Lasso: (Adapted from \citep[Thm. 7.13, p. 210]{wainwright_highdimensional_2019})
\begin{theorem}
\label{thm:pred-error-slowrate-lasso}\citep{wainwright_highdimensional_2019}
Let $\lambda > \lambda^* = 3\|\frac{\X^\top \ee}{n}\|_\infty$, the prediction error of Lasso satisfies the bound:
\begin{equation}
\label{eq:pred-error-slowrate-lasso}
\tfrac{1}{n}\|\X (\eparam - \param^*)\|_2^2  \le  8 \lambda \|\param^*\|_1.
\end{equation}
\end{theorem}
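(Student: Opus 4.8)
The plan is to run the standard ``basic inequality'' argument for the slow rate of the Lasso, exactly as in the cited reference. Write the Lasso objective as $\frac{1}{n}\|\y - \X\param\|_2^2 + \lambda\|\param\|_1$. Since $\eparam$ is a minimizer, evaluating the objective at $\eparam$ cannot exceed its value at the true $\param^*$, which gives
\[
\frac{1}{n}\|\y - \X\eparam\|_2^2 + \lambda\|\eparam\|_1 \le \frac{1}{n}\|\y - \X\param^*\|_2^2 + \lambda\|\param^*\|_1.
\]
First I would substitute the model $\y = \X\param^* + \ee$, so that $\y - \X\eparam = \X(\param^* - \eparam) + \ee$ and $\y - \X\param^* = \ee$. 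Expanding the squared norm on the left and cancelling the common $\frac{1}{n}\|\ee\|_2^2$ term on both sides leaves the key intermediate inequality
\[
\frac{1}{n}\|\X(\eparam - \param^*)\|_2^2 \le \frac{2}{n}\ee^\top\X(\eparam - \param^*) + \lambda\big(\|\param^*\|_1 - \|\eparam\|_1\big).
\]

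Next I would control the cross term involving the noise by H\"older's inequality, $\frac{2}{n}\ee^\top\X(\eparam - \param^*) \le 2\big\|\tfrac{\X^\top\ee}{n}\big\|_\infty \|\eparam - \param^*\|_1$. The hypothesis $\lambda > \lambda^* = 3\|\X^\top\ee/n\|_\infty$ is exactly what is needed to dominate this term: it yields $2\|\X^\top\ee/n\|_\infty < \tfrac{2\lambda}{3}$, so the cross term is at most $\tfrac{2\lambda}{3}\|\eparam - \param^*\|_1$. Applying the triangle inequality $\|\eparam - \param^*\|_1 \le \|\eparam\|_1 + \|\param^*\|_1$ and substituting back collects the $\ell_1$-terms as
\[
\frac{1}{n}\|\X(\eparam - \param^*)\|_2^2 \le \tfrac{2\lambda}{3}\big(\|\eparam\|_1 + \|\param^*\|_1\big) + \lambda\big(\|\param^*\|_1 - \|\eparam\|_1\big) = -\tfrac{\lambda}{3}\|\eparam\|_1 + \tfrac{5\lambda}{3}\|\param^*\|_1.
\]
Since the coefficient of $\|\eparam\|_1$ is negative, I can simply drop that term to conclude $\frac{1}{n}\|\X(\eparam - \param^*)\|_2^2 \le \tfrac{5\lambda}{3}\|\param^*\|_1$, which is comfortably inside the stated bound $8\lambda\|\param^*\|_1$.

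There is essentially no hard obstacle here: the entire argument is a one-line optimality/convexity inequality followed by a single application of H\"older, so the only thing to watch is the bookkeeping of constants. The factor $3$ in the definition of $\lambda^*$ and the factor $8$ in the bound leave generous slack (the sharp constant emerging from this argument is $5/3$), which also means the conclusion is robust to a change in loss normalization (e.g.\ an extra factor $\tfrac12$ in the quadratic term). The one point worth stating explicitly is that this bound is \emph{deterministic} given the noise realization $\ee$; randomness only enters afterwards, when one bounds $\|\X^\top\ee/n\|_\infty$ with high probability to certify that the hypothesis $\lambda > \lambda^*$ actually holds, precisely mirroring the role of $\delta^*$ in \Cref{thm:pred-error-slowrate}.
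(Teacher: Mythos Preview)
Your argument is correct and is precisely the standard ``basic inequality'' proof from the cited reference; the paper does not give its own proof of this statement but simply imports it from \cite{wainwright_highdimensional_2019}, so there is nothing further to compare.
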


Without additional constraints (see \Cref{obs:faster-rates}) it can be shown that for Lasso it is not possible to improve the above bound. To satisfy this bound, however, Lasso requires knowledge of the magnitude of the noise $\ee$.  In \Cref{thm:pred-error-slowrate-lasso}, $\lambda^* = \frac{3}{n}\|\X^\top \ee\|$. Hence, if we rescale $\ee$ (i.e., $\ee \rightarrow \eta \ee$) then a correspondent change in magnitude follows in $\lambda^*$ (i.e., $\lambda^*  \rightarrow  \eta \lambda^*$).  Square-root Lasso~\cite{belloni_square-root_2011} avoids this problem and achieves a similar rate even without knowing the variance: i.e., it is a `pivotal' method. Our method has similar properties and allows us to set $\delta$ without estimating the variance.  This can be seen in \Cref{thm:pred-error-slowrate}, where re-scaling $\ee$ does not alter the value of  
$\delta^* = 3 \nicefrac{\|\X^\top \ee\|}{\|\ee\|_1}$ because it affects the numerator and denominator simultaneously.

For instance,  if we assume $\ee$ has i.i.d.~$\N(0, \sigma^2)$ entries and  that the matrix $\X$ is fixed with
$ \max_{j=1, \dots, m} \|\x_j\|_\infty \le M$. For $\lambda \propto M\sigma \sqrt{\nicefrac{(\log\nfeatures)}{\ntrain}}$, we (with high-probability) satisfy the condition in Theorem~\ref{thm:pred-error-slowrate-lasso}, obtaining:
${\frac{1}{n}\|\X (\eparam - \param^*)\|_2^2\lesssim  M\sigma  \sqrt{\nicefrac{(\log \nfeatures)}{\ntrain}}}$. 
For adversarial training, we can set:
$\delta \propto M \sqrt{\nicefrac{(\log \nfeatures)}{\ntrain}},$ 
and (with high-probability) satisfy the theorem condition, obtaining the same bound. Notice that the choice of $\delta$ is not dependent on $\sigma$. We provide a full analysis in \Cref{sec:high-dimensional-analysis}.

\begin{remark}[On the relation between $\bar \delta$ and $\delta^*$]
For sufficiently large $n$, we have $\delta^*> \bar \delta$  in the scenario above---the noise $\ee$ has i.i.d. normal entries $\N(0, \sigma^2)$ and the matrix $\X$ is fixed.  We can prove it by contradiction: if $\delta^*\le \bar \delta$ then we could apply the bound from \Cref{thm:pred-error-slowrate} to the minimum $\ell_1$-norm interpolator (due to \Cref{thm:dual-minnormsol}). Hence, $\sigma^2 \approx \frac{1}{n} \|\ee\|_2^2  = \frac{1}{n} \|\X (\eparam - \param^*)\|_2^2\lesssim  M\sigma  \sqrt{\nicefrac{(\log \nfeatures)}{\ntrain}}$ and we can always choose a sufficiently large value of $n$ for which the inequality is false.
\end{remark}

\begin{remark}
\label{obs:faster-rates}
In the original square-root Lasso paper~\cite{belloni_square-root_2011}, a bound is obtained for the $\ell_2$ parameter distance: ${\|\eparam - \param^*\|_2^2}$ for the case $\X$ satisfy the restricted eigenvalue condition and $\param^*$ is sparse. Under these more  strict assumptions (restricted eigenvalue condition) we can also  obtain a faster convergence rate for the prediction error. Here we focus on the fixed-design prediction error without additional assumptions on $\X$. For other predictors, these other proofs follow similar steps and yield similar requirements on $\lambda^*$, see~\cite[Chapter 7]{wainwright_highdimensional_2019}. 
However, we leave these and other analysis (such as variable selection consistency) of adversarial training for future work.
\end{remark}

\section{Numerical Experiments}
\label{sec:examples}
We study five different examples. The main goal is to experimentally confirm our theoretical findings.
For each scenario, we compute and plot: train and test MSE for different choices of $p$, $n$ and $\delta$. For comparison purposes, we also compute and plot train and test MSE for Lasso and ridge regression and minimum-norm interpolators. Finally, in line with our discussion in \Cref{sec:overparametrized}, we compute and plot~$\bar \delta$ as a function of $p/n$.
In all the numerical examples the adversarial training solution is implemented by minimizing \eqref{eq:advtraining-closeform} using CVXPY~\citep{diamond_cvxpy_2016}.    The scenarios under consideration are described below (see \Cref{sec:examples-appendix} for additional details) 
\begin{enumerate}
\item \textbf{Isotropic Gaussian feature model}. The output is
a linear combination of the features plus additive noise: 
$y_i = \x_i^\top \param+ \epsilon_i$, for Gaussian noise and covariates: ${\epsilon_i \sim \N(0, \sigma^2)}$ and ${\x_i \sim \N(0, \mm{I}_\nfeatures)}$.

\item \textbf{Latent-space feature model}~\citep[Section 5.4]{hastie_surprises_2022}. The features~$\x$ are noisy observations of a lower-dimensional subspace of dimension~$d$. A vector in this \textit{latent space} is represented by $\vv{z} \in \R^d$.
$\x = \mm{W}\vv{z} + \vv{u}.$
The output is a linear combination of the latent space plus noise.

\item \textbf{Random Fourier features model}~\cite{rahimi_random_2008}. We apply a random Fourier feature map to inputs of the Diabetes dataset~\citep{efron_least_2004}. Random fourier features are obtained by the transformation $\x_i = \sqrt{\nicefrac{2}{m}}\cos(\mm{W} \vv{z}_i + \vv{b})$ where the entries of $\mm{W}$ and  $\vv{b}$ are independently sampled.
It can be seen as a one-layer untrained neural network and approximates the Gaussian kernel feature map~\cite{rahimi_random_2008}.

\item \textbf{Random projection model}~\cite{bach_high-dimensional_2023}. The data is generated as in the isotropic Gaussian feature model, but we only observe  $\mm{S} \x_i$, i.e., a projection of the inputs.

\item \textbf{Phenotype prediction from genotype.}
We illustrate our method on the  Diverse MAGIC wheat dataset~\citep{scott_limited_2021} from the National Institute for Applied Botany. 
We use a subset of the genotype to predict one of the continuous phenotypes.

  \end{enumerate}

\textbf{Results.} We experimentally corroborate \Cref{thm:dual-minnormsol} in all scenarios: In \Cref{fig:distance-min-norm-sol,fig:train_mse_vs_delta} (Appendix) we show the train MSE as we change the adversarial radius $\delta$, confirming the abrupt transitions into the interpolation regime and also that $\bar \delta$ is the transition point. In \Cref{fig:test_mse_vs_delta,fig:adv_mse_vs_delta} we show the corresponding test MSE without and in the presence of an adversary, respectively. Interestingly, the adversarial radius that yields the best results is not always equal to the radius 
 $\delta_{\mathrm{test}}$ the model will be evaluated on.  \begin{wrapfigure}{r}{0.43\textwidth}
\vspace{-10pt}
      \includegraphics[width=0.4\textwidth]{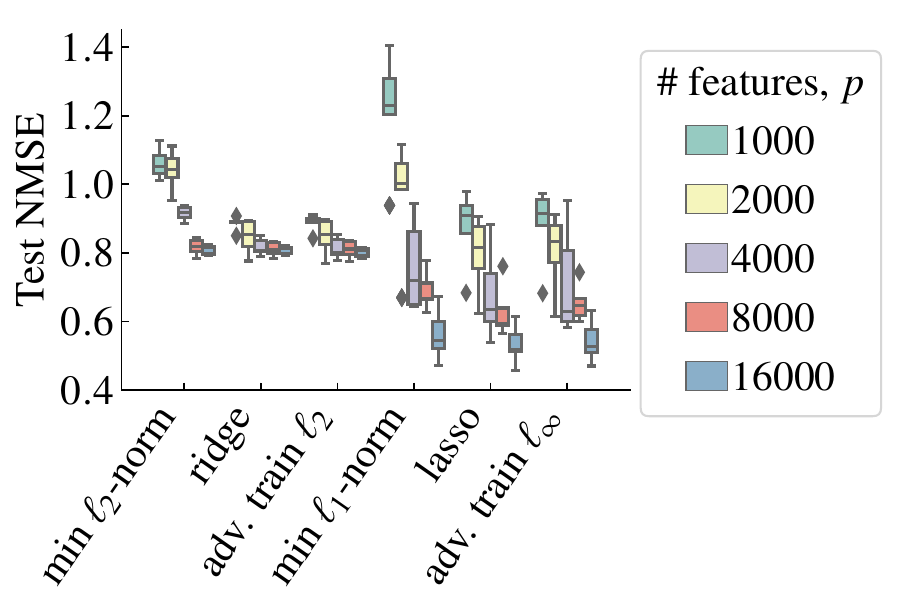}  
      \vspace{-5pt}
  \caption{\textbf{Test normalized MSE (NMSE)} in the MAGIC dataset.}
    \label{fig:magic}\vspace{-10pt}
\end{wrapfigure} \Cref{fig:threshold,fig:threshold-others} (Appendix) display $\bar \delta$ as a function of the ratio $p/n$. We observe that $\bar \delta / \Exp{\|\x\|}$ is growing  in all examples and we would still expect improved robustness in light of~\Cref{thm:robustness-gap-ratio}. The Random Fourier features model is the only case where $\bar \delta$ seems to decrease (in absolute value). However, $\Exp{\|\x\|_1}$ is also decreasing (and at a faster rate). \Cref{fig:adv-mse-l2interp,fig:adv-mse-l1interp} give test MSE without and in the presence of an adversary for the minimum-norm interpolator as a function of the ratio $p/n$, confirming improved adversarial robustness as $p$ grows.

Figure~\ref{fig:magic} provides a comparison of the test error of the different methods under study. For Lasso, ridge and adversarial training, we use the best $\delta$ or $\lambda$ available for each method (obtained via grid search). We note that while for $p=1000$ optimally tuned Lasso and $\ell_\infty$-adversarial training 
significantly outperform the corresponding minimum $\ell_1$-norm interpolator.  
As $p$ increases, the performance of the three different methods becomes quite similar. 
It is also an example where the minimum $\ell_1$-norm outperforms the minimum $\ell_2$-norm interpolator (for large $p$).
In the same setup,~\Cref{fig:magic_deltabar} study a 
choice of adversarial radius inspired by~\Cref{thm:pred-error-slowrate}. We use ${\delta\propto \nicefrac{\|\X \vv{\xi}\|_\infty}{\|\vv{\xi}\|_1}}$ for $\vv{\xi}$ a vector with zero-mean normal entries. We use a random $\vv{\xi}$, since we do not know the true additive noise. Even with this approximation, $\ell_\infty$-adversarial training performs comparably with Lasso with the regularization parameter set using 5-fold cross-validation doing a full search in the hyperparameter space. The figure also provides a comparison with square-root Lasso under a similar setting.



\section{Results for general loss functions}
\label{sec:arbitrary-spaces}

The following theorem can be used to generalize~\Cref{thm:advtraining-closeform}.
\begin{theorem}
\label{thm:rewriting-adv-error}
Let $\loss:\R \rightarrow \R$ be a convex and lower-semicontinuous function, for every $\delta\ge 0$,
\vspace{-1pt}
\begin{equation}
    \label{eq:adversarial-risk}
    \max_{\|\dx\| \le \delta} \loss\left((\x + \dx)^\top \param\right) =  \max_{s \in \{-1, 1\}}  \loss\left(\x^\top \param  + \delta s  \|\param\|_* \right).
\end{equation}
\end{theorem}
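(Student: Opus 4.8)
The plan is to collapse the maximization over the $\delta$-ball of perturbations into a one-dimensional maximization over an interval, and then use convexity of $\loss$ to move that maximum to the two endpoints of the interval.

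First I would fix $\param$ and write $(\x+\dx)^\top\param = \x^\top\param + \dx^\top\param$, so that the inner quantity depends on $\dx$ only through the linear functional $\dx \mapsto \dx^\top\param$. The unit ball $\{\dx : \|\dx\|\le 1\}$ is convex, compact and symmetric ($\dx$ lies in it iff $-\dx$ does), so its image under a linear functional is again convex, compact and symmetric, that is, an interval $[-c,c]$; here $c = \sup_{\|\dx\|\le 1}\dx^\top\param = \sup_{\|\dx\|\le 1}|\dx^\top\param| = \|\param\|_*$, the absolute value being free by symmetry of the ball. Rescaling by $\delta$ and translating by $\x^\top\param$ then shows that $\{(\x+\dx)^\top\param : \|\dx\|\le\delta\}$ is exactly the closed interval $[a,b]$ with $a = \x^\top\param - \delta\|\param\|_*$ and $b = \x^\top\param + \delta\|\param\|_*$. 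Hence the left-hand side of \eqref{eq:adversarial-risk} equals $\max_{t\in[a,b]}\loss(t)$.

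Next I would use convexity to evaluate this scalar maximum: any $t\in[a,b]$ is a convex combination $t=\lambda a+(1-\lambda)b$ with $\lambda\in[0,1]$, so $\loss(t)\le\lambda\loss(a)+(1-\lambda)\loss(b)\le\max\{\loss(a),\loss(b)\}$, while the reverse inequality is trivial. Therefore $\max_{t\in[a,b]}\loss(t)=\max\{\loss(a),\loss(b)\}$, which is precisely $\max_{s\in\{-1,1\}}\loss(\x^\top\param+\delta s\|\param\|_*)$, and combining the two steps gives the identity. Lower-semicontinuity — in the stated setting, where $\loss$ is finite-valued on all of $\R$, convexity already forces continuity — guarantees that the supremum on the left is attained over the compact ball, so that writing ``$\max$'' is legitimate.

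The main point requiring care is not a deep obstacle but the verification that the image of the ball is the \emph{entire} closed interval $[a,b]$: this needs connectedness (convexity) of the image together with attainment of both extreme values $\pm\delta\|\param\|_*$, and one should also dispatch the degenerate cases $\param=\vv 0$ or $\delta=0$, where $[a,b]$ collapses to the single point $\x^\top\param$ and both sides of \eqref{eq:adversarial-risk} reduce to $\loss(\x^\top\param)$. The lower-semicontinuity hypothesis genuinely earns its place only in the more general statements of \Cref{sec:arbitrary-spaces} — infinite-dimensional input spaces, or $\loss$ permitted to take the value $+\infty$ — where the extremal perturbations need not exist and the relevant interval may be open; there lower-semicontinuity combined with convexity is what forces $\loss$ to be continuous ``from the inside'' at the endpoints, hence what still lets one identify the supremum over the open interval with the maximum of the two endpoint values.
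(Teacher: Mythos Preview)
Your argument is correct and takes a genuinely different route from the paper's. The paper proceeds via the Fenchel--Moreau theorem: it writes $\loss(z)=\sup_u\{uz-\loss^*(u)\}$, swaps the order of the two maximizations, evaluates $\max_{\|\dx\|\le\delta}u\,\dx^\top\param=\delta|u|\,\|\param\|_*$, rewrites $|u|=\max_{s\in\{-1,1\}}su$, and then applies Fenchel--Moreau a second time to recover $\loss$. Your proof is more elementary and more transparent: you simply observe that the range of the affine map $\dx\mapsto(\x+\dx)^\top\param$ over the $\delta$-ball is the closed interval $[\x^\top\param-\delta\|\param\|_*,\ \x^\top\param+\delta\|\param\|_*]$, and then use the basic fact that a convex function on a compact interval attains its maximum at an endpoint. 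This avoids Fenchel conjugates entirely and makes the role of the hypotheses clearer --- in particular your remark that, for a finite-valued convex $\loss$ on $\R$, lower-semicontinuity is automatic and only earns its keep in the extended-real or infinite-dimensional variants is exactly right. The paper's approach, by contrast, packages the argument in a way that ports verbatim to those more general settings (extended-real $\loss$, Banach-space inputs) without having to revisit whether the image of the ball is closed; that is the trade-off.
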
\vspace{-4pt}
We can find a closed-formula expression for
$s_* = \arg\max_{s \in \{-1, 1\}} \loss(\param^\top x  + \delta s  \|\param\|_* )$ in many cases of interest. For \textbf{regression problems}, given any non-decreasing function $\ell: \R^+ \rightarrow \R^+$, and $\loss(\x^ \top \param) = \ell(|\x^\top \param - y|)$. If $\ell$ is lower semicontinuous  and convex then so will $\loss$ and the result of the theorem holds (the squared loss $\ell(z) = z^2$ is a special case). Then $s_* = - \sign(y - \x^ \top \param) $  and 
\[\max_{\|\dx\| \le \delta}\ell(|(\x + \dx)^\top \param - y|) = \ell(|y - x^ \top \param| + \delta \|\param\|_*).\]
For \textbf{classification}, let $y \in \{-1, 1\}$ and $\loss(\x^ \top \param) = \ell(y(\x^\top \param))$ where $\ell$ is a non-increasing function.  If $\ell$ is lower semicontinuous  and convex then so will $\loss$ and the result of the theorem holds. Then $s_* = -y$ and we obtain:
\vspace{-6pt}\[\max_{\|\dx\| \le \delta}\ell(y((\x + \dx)^\top \param)) = \ell(y (\x^ \top \param) - \delta \|\param\|_*).\]
The above result can also be applied to unsupervised learning. In the Appendix, we illustrate how \Cref{thm:rewriting-adv-error} can be used for \textbf{dimensionality reduction}. We consider the problem of finding $\mm{P}$ that minimizes the reconstruction error $\|\x + \mm{P}\mm{P}^\top\x\|_2^2$ (that yields PCA algorithm) and derive  an adversarial version of it.

\begin{remark}
    Over this paper we consider $\x, \param\in \R^\nfeatures$, but \Cref{thm:rewriting-adv-error} holds generaly for $\x \in \dom$ a vector in a Banach space endowed with norm $\|\cdot\|$ and $\param\in \dom^*$ a continuous linear map  $\param: \dom \rightarrow \R$. Just define $\x^\top \param \coloneqq  \param(\x)$ and take $\|\cdot\|_*$ to be the norm of the dual space $\dom^*$.
\end{remark}

\section{Conclusion}
We study adversarial training in linear regression. 
Adversarial training allows us to depart from the traditional regularization setting where we can decompose the loss and the regularization terms, minimizing $\frac{1}{n}\sum_{i=0}^n \ell(\x_i^\top \param, y_i) + \Omega(\param)$ for some penalization function $\Omega$.  We show how it provides new insights into the minimum-norm interpolator, by proving a new equivalence between the two methods~(\Cref{sec:overparametrized}).
While adversarially-trained linear regression arrives at similar solutions to parameter shrinking methods in some scenarios (\Cref{sec:regularization}), it also has some advantages, for instance, the adversarial radius might be set without knowing the noise variance (\Cref{sec:robust-regression-and-sqrtlasso}). Unlike, squared-root Lasso it achieves this while still minimizing  a sum of squared terms. We believe a natural next step is to provide a tailored solver that can allow for efficient solutions for models with many features, rendering adversarially-trained linear regression useful, for instance, in modeling genetics data (see one minimal example of phenotype prediction from the genotype in \Cref{sec:examples}).

Another interesting direction is generalizing our results to classification and nonlinear models.
Indeed, adversarial training is very often used in the context of neural networks and it is natural to be interested in the behavior of this class of models. Our work allows for the analysis of simplified theoretical models commonly used to study neural networks. Random Fourier feature models can be analyzed using our theory and are they studied in the numerical examples. These models can be seen as a simplified neural network model (one-layer neural network where only the last layer weights are adjusted). In \Cref{{sec:linear-maps-extra}}, we provide adversarial training after linear projections, and could, for instance, be used to analyze deep linear networks, as the ones studied in~\citep{saxe_exact_2014, laurent_deep_2018}. Finally,~\Cref{sec:arbitrary-spaces} provides a result for infinite spaces, and one could attempt to use it to analyze kernel methods and infinitely-wide neural networks~\citep{lee_wide_2020, jacot_neural_2018}. Overall we believe that our results provide an interesting set of tools also for the analysis of nonlinear models, and could provide insight into the empirically observed interplay (see~\cite{jakubovitz_improving_2018}) between robustness and regularization in deep neural networks.

\section*{Acknowledgments}
The authors would like to thank Carl Nettelblad for very fruitful discussions throughout the work with this research. We also thank Daniel Gedon and Dominik Baumann for their feedback on the first version of the manuscript.

TBS and DZ are financially supported by the Swedish Research Council, with the projects \emph{Deep probabilistic regression -- new models and learning algorithms} (contract number: 2021-04301) and \emph{Counterfactual Prediction Methods for Heterogeneous Populations} (contract number:2018- 05040) and  by the \emph{Wallenberg AI, Autonomous Systems and Software Program (WASP)} funded by Knut and Alice Wallenberg Foundation. TBS, AHR and DZ by the \emph{Kjell och M{\"a}rta Beijer Foundation}. FB by the Agence Nationale de la Recher-che as part of the “Investissements d’avenir” program, reference ANR-19-P3IA0001 (PRAIRIE 3IA Institute); and by the European Research Council (grant SEQUOIA 724063).

The research was partially conducted during AHR's research visit to INRIA. The visit was financially supported by the Institute 
Fran\c{c}ais
de Su\`ede through the SFVE-A mobility program; and, by the European Network of AI Excellence Centres through ELISE mobility program.

\printbibliography

\newpage

\part{} 


\appendix

\pagenumbering{roman} 

\setcounter{page}{1}

\setcounter{equation}{0}
\renewcommand{\theequation}{S.\arabic{equation}}%

\setcounter{figure}{0}
\renewcommand{\thefigure}{S.\arabic{figure}}%

\part{Appendix} 
\parttoc 

\section{Adversarial training in the overparametrized regime}
\subsection{Additional proofs}
The next lemma was used in the proof of \Cref{thm:dual-minnormsol}. It formalizes the equivalence between the minimum-norm solution of $\y = \X \param$ and the dual problem we need to solve to obtain $\edualp$. 
\begin{lemma} 
\label{thm:dual-equivalence}
The following equivalence hold
\begin{equation}
    \label{eq:dual_minnorm}
    \min_{y = \X \param}  \|\param\|_*  = \max_{ \|\dualp^\top \X \| \le 1} \dualp^\top \y.
\end{equation}
Furthermore, $\eparam$ and $\edualp$ be the arguments at optimality if and only if $\edualp^\top \X \in \partial \|\eparam\|_*$.
\end{lemma}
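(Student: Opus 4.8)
The plan is to read \eqref{eq:dual_minnorm} as a convex-duality statement and to establish it together with the ``furthermore'' clause in one pass, using the subgradient characterization of norms instead of invoking an interior-point constraint qualification (the feasible set $\{\param:\X\param=\y\}$ is an affine subspace, hence has empty interior). Two preliminary facts would be recorded first. (i) Since $\X$ has full row rank the primal problem is feasible; because the sublevel sets of $\param\mapsto\|\param\|_*$ are compact and the feasible set is closed and nonempty, the primal infimum is attained at some $\eparam$. (ii) For any norm $n$ with dual norm $n^{\circ}$ one has $\partial n(\param)=\{\vv{g}: n^{\circ}(\vv{g})\le 1,\ \vv{g}^{\top}\param=n(\param)\}$: indeed $\vv{g}\in\partial n(\param)$ iff $\param$ minimizes $n(\cdot)-\vv{g}^{\top}(\cdot)$, i.e. $n^{*}(\vv{g})=\vv{g}^{\top}\param-n(\param)$, and the conjugate $n^{*}$ is the indicator of the dual unit ball. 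Applying (ii) with $n=\|\cdot\|_{*}$, whose dual norm is $\|\cdot\|$, yields $\partial\|\param\|_{*}=\{\vv{g}:\|\vv{g}\|\le 1,\ \vv{g}^{\top}\param=\|\param\|_{*}\}$, and in particular the generalized Hölder inequality $\vv{v}^{\top}\param\le\|\vv{v}\|\,\|\param\|_{*}$.

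\textbf{Weak duality.} For any $\param$ with $\X\param=\y$ and any $\dualp$ with $\|\dualp^{\top}\X\|\le 1$, Hölder gives $\dualp^{\top}\y=(\dualp^{\top}\X)\param\le\|\dualp^{\top}\X\|\,\|\param\|_{*}\le\|\param\|_{*}$. Taking the supremum over $\dualp$ and the infimum over $\param$ gives $\max_{\|\dualp^{\top}\X\|\le 1}\dualp^{\top}\y\le\min_{\X\param=\y}\|\param\|_{*}$.

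\textbf{Strong duality and the subgradient characterization.} For the reverse inequality I would use the first-order optimality condition at $\eparam$: minimizing $\|\cdot\|_{*}$ over the affine subspace $\{\X\param=\y\}=\eparam+\ker\X$ is equivalent to $0\in\partial\|\eparam\|_{*}+N(\eparam)$, where the normal cone is $N(\eparam)=(\ker\X)^{\perp}=\mathrm{range}(\X^{\top})$. Hence there exists $\vv{v}\in\partial\|\eparam\|_{*}$ with $\vv{v}\in\mathrm{range}(\X^{\top})$, say $\vv{v}=\X^{\top}\edualp$. Then $\|\edualp^{\top}\X\|=\|\vv{v}\|\le 1$ (dual feasibility) and $\edualp^{\top}\y=\edualp^{\top}\X\eparam=\vv{v}^{\top}\eparam=\|\eparam\|_{*}$; together with weak duality this shows $\edualp$ is dual-optimal and that the two optimal values coincide, proving \eqref{eq:dual_minnorm}. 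For the ``furthermore'', with $\eparam$ primal-feasible and $\edualp$ dual-feasible: if both are optimal then $\edualp^{\top}\y=\|\eparam\|_{*}$ by the equality just shown, so $(\edualp^{\top}\X)\eparam=\edualp^{\top}\y=\|\eparam\|_{*}$ with $\|\edualp^{\top}\X\|\le 1$, which by (ii) is exactly $\edualp^{\top}\X\in\partial\|\eparam\|_{*}$; conversely, if $\edualp^{\top}\X\in\partial\|\eparam\|_{*}$ then $\|\edualp^{\top}\X\|\le 1$ and $(\edualp^{\top}\X)\eparam=\|\eparam\|_{*}$, so using $\X\eparam=\y$ we get $\edualp^{\top}\y=\|\eparam\|_{*}$, and weak duality forces both points to be optimal.

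\textbf{Main obstacle.} The one delicate point is the assertion that a subgradient of $\|\cdot\|_{*}$ at $\eparam$ can be chosen inside $\mathrm{range}(\X^{\top})$ — equivalently, that strong duality holds despite the primal feasible set having empty interior. This is exactly what the affine-constraint form of the optimality conditions provides (the normal cone of an affine subspace is its orthogonal complement), and I would cite this from a standard convex-analysis reference; the remaining steps are routine manipulations with Hölder's inequality and the formula in (ii).
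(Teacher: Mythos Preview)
Your proof is correct and in fact more complete than the paper's. The paper simply writes the Lagrangian chain
\[
\min_{\y=\X\param}\|\param\|_* \;=\; \max_{\dualp}\min_{\param}\bigl(\|\param\|_*+\dualp^{\top}(\y-\X\param)\bigr)
\;=\;\max_{\dualp}\Bigl(\dualp^{\top}\y+\min_{\param}(\|\param\|_*-\dualp^{\top}\X\param)\Bigr),
\]
invokes ``strong duality'' without further comment, and then recognizes the inner minimum as minus the Fenchel conjugate of $\|\cdot\|_{*}$, which is the indicator of the $\|\cdot\|$-unit ball. It does not prove the ``furthermore'' clause at all.

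Your argument is essentially the same duality, but you make explicit the step the paper glosses over: instead of asserting strong duality, you obtain it constructively from the first-order optimality condition $0\in\partial\|\eparam\|_{*}+N_{\{\X\param=\y\}}(\eparam)$, using that the normal cone of an affine subspace is $\mathrm{range}(\X^{\top})$ and that the subdifferential sum rule applies because the norm has full domain. This both justifies the zero duality gap and produces the optimal dual variable $\edualp$ in one stroke, and it immediately yields the subgradient characterization $\X^{\top}\edualp\in\partial\|\eparam\|_{*}$ that the paper states but never argues. The price is a slightly longer write-up; the gain is that every step is self-contained and the second half of the lemma is actually proved.
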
\begin{proof}[Proof of Lemma~\ref{thm:dual-equivalence}]
By strong duality:
$$\min_{\y = \X \theta}  \|\theta\|_*  = \max_\alpha \min_\theta ( \|\theta\|_* + \alpha^T(y - \X \theta)) = \max_\alpha\left( \alpha^T \y +\min_\theta ( \|\theta\|_* - \alpha^T \X \theta)\right)$$
Now, the Fenchel conjugate of $\|\cdot\|_*$ is the indicator function on the ball of the norm $\|\cdot\|$. Hence, we obtain the result.
\end{proof}

\subsection{Results on the robustness gap and adversarial training}
\label{sec:robustness-gap}
Next we provide a proof of \Cref{thm:robustness-gap-ratio}.

\begin{proof}[Proof of \Cref{thm:robustness-gap-ratio}] Using \Cref{thm:rewriting-adv-error}, for any distribution on $(\x, y)$
\begin{align*}
\E{\x, y}{\max_{\|\dx\| \le \delta} (y - (\x + \dx)^\top\param|^2)} &= \E{\x, y}{(|y - \x^\top\param| +  \delta\|\param\|_*)^2} \\
&= \E{\x, y}{(y - \x^\top\param)^2} + 2 \delta\|\param\|_*  \E{\x, y}{|y - \x^\top\param|} + \delta^2\|\param\|_*^2 ,
\end{align*}
Now, since $0 \le \E{\x, y}{|y_i - \x_i^\top\param|} \le \sqrt{\E{\x, y}{|y_i - \x_i^\top\param|^2}}$ (by Jensen inequality), we have that
$$\E{\x, y}{(y - \x^\top\param)^2}  + \delta^2\|\param\|_* ^2 \le \E{\x, y}{(|y - \x^\top\param| +  \delta\|\param\|_*)^2}\le  \left(\sqrt{\E{\x, y}{(y_i - \x_i^\top\param)^2}} +  \delta\|\param\|_*\right)^2.$$
If we denote, $R^\text{adv}_*(\param; \delta_{\textrm{test}}, \|\cdot\|)=  \E{y, \x}{\max_{\|\dx_i\| \le \delta_{\mathrm{test}}} (y - (\x + \dx)^\top\param)^2}$ i.e., the expected adversarial squared error and $R_*(\param)= (y_0 - (\x_0 + \dx_0)^\top\param)^2$ the expected squared error in the absence of an adversary on a new test point. We can rewrite it as:
$$R_*(\param)  + \delta^2\|\param\|_* ^2 \le R^\text{adv}_*(\param; \delta, \|\cdot\|) \le  \left(\sqrt{R_*(\param)} +  \delta\|\param\|_*\right)^2.$$
Rearranging:

\begin{equation}
\label{eq:ineq_robustness_gap}
\sqrt{R^\text{adv}_*(\param; \delta, \|\cdot\|)}- \sqrt{R_*(\param)}  \le \delta\|\param\|_* \le   \sqrt{R^\text{adv}_*(\param; \delta, \|\cdot\|) - R_*(\param)}.
\end{equation}
Now for the empirical adversarial distribution:
\begin{align*}
R^\text{adv}(\eparam; \bar \delta, \|\cdot\|) &= \frac{1}{n}\sum_{i=1}^n (|y_i - \x_i^\top\eparam| +  \bar \delta\|\param\|_*)^2\\
&= \bar \delta^2\|\param\|_*^2
\end{align*}
where the last step follows from considering that $\eparam$ is an interpolator (hence $y_i - \x_i^\top\eparam = 0, \forall i$). Plugging this result back into \eqref{eq:ineq_robustness_gap}. Let $\delta = \delta_{\text{test}}$ and $\delta' = \delta_{\text{train}}$
\end{proof}

The next result holds for mismatched $\ell_\infty$ and $\ell_2$-adversarial attacks
\begin{proposition}
\label{thm:robustness-gap-ratio-l2-linf}

Let  $\eparam$ be the minimum $\|\cdot\|_*$-norm interpolator, than
\begin{equation} 
\label{eq:robustness-gap-ratio-mismatch}
\sqrt{R^\text{adv}_*(\param; \delta_{\textrm{test}}, \|\cdot\|_{\infty})} - \sqrt{R_*(\param)}\le \sqrt{p} \frac{\delta_{\mathrm{test}}}{\bar\delta}\sqrt{R^\text{adv}(\param; \bar\delta, \|\cdot\|_2)}
\end{equation}
\end{proposition}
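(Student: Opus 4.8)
The plan is to mirror the two-stage structure of the proof of \Cref{thm:robustness-gap-ratio}, paying attention to the fact that here the geometry is "mismatched": $\eparam$ is the minimum $\ell_2$-norm interpolator (so $\bar\delta$ and $R^\text{adv}(\eparam;\bar\delta,\|\cdot\|_2)$ refer to $\ell_2$), while the test-time adversary acts in $\ell_\infty$, whose dual norm is $\ell_1$.

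First I would reuse the chain of inequalities \eqref{eq:ineq_robustness_gap}, which was obtained from \Cref{thm:rewriting-adv-error} and Jensen's inequality for an \emph{arbitrary} norm: instantiating it with $\|\cdot\|=\|\cdot\|_\infty$ (dual norm $\|\cdot\|_1$) on a fresh test point $(\x_0,y_0)$ gives
\begin{equation*}
\sqrt{R^\text{adv}_*(\eparam;\delta_{\mathrm{test}},\|\cdot\|_\infty)}-\sqrt{R_*(\eparam)}\ \le\ \delta_{\mathrm{test}}\,\|\eparam\|_1 .
\end{equation*}
Second, I would convert the right-hand side into the claimed form. Since $\eparam$ interpolates the data, $y_i-\x_i^\top\eparam=0$ for all $i$, so exactly as in the proof of \Cref{thm:robustness-gap-ratio} the empirical $\ell_2$-adversarial risk collapses, $R^\text{adv}(\eparam;\bar\delta,\|\cdot\|_2)=\bar\delta^2\|\eparam\|_2^2$, i.e.\ $\sqrt{R^\text{adv}(\eparam;\bar\delta,\|\cdot\|_2)}=\bar\delta\,\|\eparam\|_2$. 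Combining this with the finite-dimensional comparison $\|\eparam\|_1\le\sqrt{p}\,\|\eparam\|_2$ (Cauchy--Schwarz) yields
\begin{equation*}
\delta_{\mathrm{test}}\,\|\eparam\|_1\ \le\ \sqrt{p}\,\delta_{\mathrm{test}}\,\|\eparam\|_2\ =\ \sqrt{p}\,\frac{\delta_{\mathrm{test}}}{\bar\delta}\,\bar\delta\,\|\eparam\|_2\ =\ \sqrt{p}\,\frac{\delta_{\mathrm{test}}}{\bar\delta}\sqrt{R^\text{adv}(\eparam;\bar\delta,\|\cdot\|_2)},
\end{equation*}
and chaining the two displays gives \eqref{eq:robustness-gap-ratio-mismatch}.

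There is no genuine obstacle here; the proof is short, and the only delicate point is the bookkeeping of dual norms — the population robustness gap is controlled by $\|\eparam\|_1$ (the dual of the $\ell_\infty$ test perturbation), whereas the threshold $\bar\delta$ and the in-training quantity $R^\text{adv}(\eparam;\bar\delta,\|\cdot\|_2)$ live in the $\ell_2$ world in which $\eparam$ is minimum-norm. The factor $\sqrt p$ is exactly the price of the inclusion $\ell_2^p\hookrightarrow\ell_1^p$, which is why the upper bound grows with $\sqrt p$ as noted in \Cref{cmt:mismatched-norm}; one may also observe that the step $\|\eparam\|_1\le\sqrt p\,\|\eparam\|_2$ is essentially tight when the coordinates of $\eparam$ are of comparable magnitude, so the $\sqrt p$ growth is not an artifact of the argument.
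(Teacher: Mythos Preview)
Your proposal is correct and follows essentially the same approach as the paper: instantiate the robustness-gap inequality \eqref{eq:ineq_robustness_gap} with the $\ell_\infty$ norm to bound the left-hand side by $\delta_{\mathrm{test}}\|\eparam\|_1$, use interpolation to rewrite $R^{\text{adv}}(\eparam;\bar\delta,\|\cdot\|_2)=\bar\delta^2\|\eparam\|_2^2$, and bridge the two via $\|\eparam\|_1\le\sqrt{p}\,\|\eparam\|_2$. The paper's own proof is a terse two-line version of exactly this argument.
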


\begin{proof}
A similar derivation as in the proof of \Cref{thm:robustness-gap-ratio} yields:
\begin{align*}
\sqrt{R^\text{adv}_*(\eparam; \delta, \|\cdot\|_\infty)}- \sqrt{R_*(\param)}  &\le \delta\|\eparam\|_1 \le   \sqrt{R^\text{adv}_*(\eparam; \delta, \|\cdot\|_\infty) - R_*(\eparam)}\\
R^\text{adv}(\eparam; \bar \delta, \|\cdot\|_2)& =   \bar \delta^2\|\eparam\|_2^2.
\end{align*}
Now since $\|\eparam\|_2 \le \|\eparam\|_1 \le \sqrt{p}\|\eparam\|_2$ the result follows for $\delta = \delta_{\text{test}}$.
\end{proof}

\subsection{Bounds on $\bar \delta$}
\label{sec:bounds_on_bardelta}

We point that although the exact value of $\bar \delta$ requires the solution of the dual problem. It is easy to come up with upper and lower bounds that depend only on $\X$.  On the one hand,  by construction
$\|\X^\top\edualp\|\le 1$, hence, 
$$\frac{1}{\|\edualp\|_\infty} \ge \frac{\|\X^\top\edualp\|}{ \|\edualp\|_\infty}.$$ 
On the other hand, $\X^\top \edualp \in \partial \|\eparam\|_*$ such that $\edualp^\top \X\eparam = \|\eparam\|_*$. Hence a simple application of H\"older inequality
yields:
$$\frac{\|\X\eparam\|_1}{\|\eparam\|_*} \ge \frac{1}{\|\edualp\|_\infty}$$
Now since, $n \bar \delta = \frac{1}{\|\edualp\|_\infty}$ we have that:
$$\inf_{\vv{u}\in \R^n}  \frac{\|\X^\top\vv{u}\|}{ \|\vv{u}\|_\infty} \le n  \bar \delta \le \sup_{\vv{v}\in \R^m} \frac{\|\X\vv{v}\|_1}{\|\vv{v}\|_*} $$

For instance, for $\ell_\infty$-adversarial attacks, this speciallize to
$$\inf_{\vv{u}\in \R^n}  \frac{\|\X^\top\vv{u}\|_\infty}{ \|\vv{u}\|_\infty} \le n  \bar \delta \le \sup_{\vv{v}\in \R^m} \frac{\|\X\vv{v}\|_1}{\|\vv{v}\|_1} $$

Let, $\sigma_1(\X) \ge \cdots \ge \sigma_n(\X)$ singular values of the matrix $\X\in \R^{n\times m}$. We have that
$\sigma_1(\X) = \sup_{\vv{v}\in \R^m} \frac{\|\X\vv{v}\|_2}{\|\vv{v}\|_2}$ and
$\sigma_n(\X) = \inf_{\vv{u}\in \R^n}  \frac{\|\X^\top\vv{u}\|_2}{ \|\vv{u}\|_2} $  Now we can use standard norm inequalities, let $\vv{w} \in \R^m$: $\|\vv{w}\|_2 \le \|\vv{w}\|_1 \le \sqrt{m}\|\vv{w}\|_2$ and 
 $\|\vv{w}\|_\infty \le \|\vv{w}\|_2 \le \sqrt{m}\|\vv{w}\|_\infty$ to obtain, that for for $\ell_\infty$-adversarial attacks:
$$\frac{1}{\sqrt{m}}\sigma_n(\X) \le  n \bar \delta \le \sqrt{m}\sigma_1(\X)$$
Similarly $\ell_2$-adversarial attacks one can show that $\sigma_n \le n \bar\delta \le \sqrt{m}\sigma_1$.

It is also possible to establish a relationship with the minimum-norm interpolator. Let $\eparam$  be the minimum norm interpolators, using Hölder inequality
$$\|\eparam\|_* = \edualp^\top \y \le \|\edualp\|_\infty \|\y\|_1$$
And it follows that :
$$\bar\delta\|\eparam\|_* \le \frac{1}{n} \|\y\|_1$$

Now, if  we assume that the data was generated as $y_i = \x_i^\top\param^* + \e_i$ for $i = 1, \cdots, \ntrain$.
From Hölder inequality and the definition of the dual problem 
\begin{equation}
\label{eq:ineq-edualXparam}
\edualp^\trnsp \X \param^* \le \|\edualp^\trnsp \X \| \|\param^*\|_*\le \|\param^*\|_*
\end{equation}
Now:
\begin{align*}
\|\eparam\|_* & \labelrel={duality} \edualp^\trnsp \y \labelrel={datamodel} \edualp^\trnsp \X \param^* + \edualp^\trnsp \ee\\
&\labelrel\le{ineq-edualXparam2} \| \param^*\|_* + \edualp^\trnsp \ee  \\
&\le \| \param^*\|_* + \|\edualp\|_\infty \|\ee\|_1\\
&= \| \param^*\|_* + \frac{1}{n} \frac{\|\ee\|_1 }{\bar \delta }
\end{align*}
where~\eqref{duality} follows Lemma~\ref{thm:dual-equivalence}, \eqref{datamodel} follows from the data model definition and finaly \eqref{ineq-edualXparam2}  follows from Eq.~\eqref{eq:ineq-edualXparam}.

\subsection{Linear maps and random projections}
\label{sec:linear-maps-extra}

As an extension, we consider the following framework: We consider a matrix $\mm{S}\in \R^{\nfeatures\times \inpdim}$ that maps from the input space $\R^\inpdim$ to a feature space $\R^\nfeatures$. For instance, in~\citet{bach_high-dimensional_2023} this scenario---with the entries of $\mm{S}$ sampled from a Rademacher distribution---is used to study the phenomena of double-descent. For this case, the adversarial problem consists of finding a parameter $\eparam$ that minimizes:
\begin{equation}
  \label{eq:AdvTrainOrig-linmap}
    R^{\text{adv}}_{\mm{S}}(\param; \delta, \|\cdot\|) = \frac{1}{\ntrain}\sum_{i=1}^\ntrain{\max_{\|\dx_i\| \le \delta} |y_i - (\x_i + \dx_i)^\top \mm{S}^\top \param|^2}.
\end{equation}
 \Cref{eq:AdvTrainOrig-linmap} also allow for reformulation:
\begin{proposition}[Dual formulation: linear maps]
\label{thm:advtraining-closeform-proj-linmap}
 Let $\|\cdot\|_*$ be the dual norm of $\|\cdot\|$,  then 
\begin{equation}
\label{eq:advtraining-closeform-linmap}
R^{\text{adv}}_{\mm{S}}(\param; \delta, \|\cdot\|) =   \frac{1}{n}\sum_{i=1}^n\left(|y_i - \x_i^\top \mm{S}^\top \param| + \delta\| \mm{S}^\top \param\|_*\right)^2.
\end{equation}
\end{proposition}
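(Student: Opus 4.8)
The plan is to reduce Proposition~\ref{thm:advtraining-closeform-proj-linmap} to the already-established Proposition~\ref{thm:advtraining-closeform} by a simple change of variables. Observe that the inner maximization in~\eqref{eq:AdvTrainOrig-linmap} involves the term $(\x_i + \dx_i)^\top \mm{S}^\top \param = (\x_i + \dx_i)^\top (\mm{S}^\top \param)$, so if we set $\param' = \mm{S}^\top \param \in \R^\inpdim$ then each summand becomes exactly $\max_{\|\dx_i\| \le \delta} |y_i - (\x_i + \dx_i)^\top \param'|^2$, which is the summand appearing in the definition of $R^{\text{adv}}(\param'; \delta, \|\cdot\|)$ in the \emph{input} space $\R^\inpdim$. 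Here the norm $\|\cdot\|$ is the one on $\R^\inpdim$ in which the disturbances $\dx_i$ are measured, and $\|\cdot\|_*$ is its dual norm on $\R^\inpdim$.

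First I would state this substitution explicitly: $R^{\text{adv}}_{\mm{S}}(\param; \delta, \|\cdot\|) = R^{\text{adv}}(\mm{S}^\top \param; \delta, \|\cdot\|)$, where the right-hand side is the ordinary adversarial risk of Proposition~\ref{thm:advtraining-closeform} applied to the vector $\mm{S}^\top\param$. Then I would invoke Proposition~\ref{thm:advtraining-closeform} directly with $\param$ replaced by $\mm{S}^\top\param$, which gives
\[
R^{\text{adv}}(\mm{S}^\top\param; \delta, \|\cdot\|) = \frac{1}{n}\sum_{i=1}^n\left(|y_i - \x_i^\top \mm{S}^\top\param| + \delta\|\mm{S}^\top\param\|_*\right)^2,
\]
which is precisely~\eqref{eq:advtraining-closeform-linmap}. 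The only thing to be careful about is that Proposition~\ref{thm:advtraining-closeform} is stated for a generic vector in the space where the perturbations live, so one should note that nothing in its proof uses that the vector being optimized is the free variable — the identity $\max_{\|\dx\|\le\delta}|y - (\x+\dx)^\top\vv{v}|^2 = (|y-\x^\top\vv{v}| + \delta\|\vv{v}\|_*)^2$ holds pointwise in $\vv{v}$, and we are simply applying it with $\vv{v} = \mm{S}^\top\param$.

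There is essentially no obstacle here; the result is a corollary of Proposition~\ref{thm:advtraining-closeform} obtained by precomposition with the linear map $\param \mapsto \mm{S}^\top\param$. If one wanted to be fully self-contained one could instead repeat the one-line argument: by the triangle inequality and the definition of the dual norm, $\max_{\|\dx_i\|\le\delta} |y_i - (\x_i+\dx_i)^\top\mm{S}^\top\param| = |y_i - \x_i^\top\mm{S}^\top\param| + \delta\|\mm{S}^\top\param\|_*$ (the maximizing $\dx_i$ aligns with $\mm{S}^\top\param$ in the dual-norm sense and has the sign of $-(y_i - \x_i^\top\mm{S}^\top\param)$), and then square and average over $i$. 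Either route is short; I would present the change-of-variables version since it makes transparent why every downstream result in the paper extends verbatim to the random-projection model of Remark~\ref{cmt:random-proj}.
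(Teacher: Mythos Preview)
Your argument is correct: the substitution $\param' = \mm{S}^\top\param$ reduces the statement to Proposition~\ref{thm:advtraining-closeform} verbatim, and the pointwise identity you spell out is exactly what is needed. The paper's route is only cosmetically different: it states a linear-map version of the general-loss result (Theorem~\ref{thm:rewriting-adv-error-linmap}) and then says the proposition is a direct consequence of that, whereas you reduce directly to the squared-loss Proposition~\ref{thm:advtraining-closeform}; both are one-line reductions to the same underlying dual-norm identity.
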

this is a direct consequence of the more general result~\Cref{thm:rewriting-adv-error-linmap}. We also have the following theorem relating the adversarial training solution to the minimum $\|\cdot\|_{*}$-norm interpolator.
\begin{theorem}
\label{thm:dual-minnormsol-linmap}
Assume the matrix $\X \mm{S}^\top \in \R^{\ntrain \times \nfeatures}$ has full row rank (i.e., $\text{rank}(\X \mm{S}^\top) = \ntrain$). The minimum-norm solution
\begin{equation}
\label{eq:min-norm-interp-linmap}
\eparam = \mathrm{arg}\min_{\param}  \|\mm{S}^\top \param\|_* \mathrm{~~~subject~to~~~} \X \mm{S}^\top \param = \y,
\end{equation}
minimizes the adversarial risk $R^{\text{adv}}_{\mm{S}}(\theta, \delta, \|\cdot\| )$ if and only if $\delta \in (0, \bar \delta]$.
For $\bar \delta = \frac{1} {n\|\edualp\|_\infty}$
where $\edualp$  denote the solution  of the dual problem $\max_{ \|\dualp^\top \X \mm{P} \| \le 1} \dualp^\top y$, where $\mm{P} = \mm{S}^\top (\mm{S} \mm{S}^\top)^{-1} \mm{S}$.
\end{theorem}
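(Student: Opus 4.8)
The plan is to obtain \Cref{thm:dual-minnormsol-linmap} from \Cref{thm:dual-minnormsol} by absorbing $\mm{S}^\top$ into a change of variables. Because $\mm{S}\mm{S}^\top$ is invertible, $\mm{S}^\top$ has trivial kernel, so $\param \mapsto \mm{S}^\top\param$ is injective and $N(\param) \coloneqq \|\mm{S}^\top\param\|_*$ is a genuine norm on $\R^\nfeatures$; write $N^*$ for its dual norm. By \Cref{thm:advtraining-closeform-proj-linmap},
\[
 R^{\text{adv}}_{\mm{S}}(\param; \delta, \|\cdot\|) = \frac{1}{n}\sum_{i=1}^n \bigl( |y_i - (\mm{S}\x_i)^\top\param| + \delta\, N(\param) \bigr)^2 ,
\]
which is exactly the adversarial risk of \Cref{thm:advtraining-closeform} for the dataset $\{(\mm{S}\x_i, y_i)\}_{i=1}^n$, with the role of $\|\cdot\|_*$ now played by $N$. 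The data matrix of this surrogate problem is $\X\mm{S}^\top$, which has full row rank by hypothesis, and its minimum-$N$-norm interpolator is precisely the $\eparam$ of \eqref{eq:min-norm-interp-linmap}. \Cref{thm:dual-minnormsol} applied to the surrogate then gives immediately that $\eparam$ minimizes $R^{\text{adv}}_{\mm{S}}$ if and only if $\delta \in (0, \bar\delta]$, with $\bar\delta = (n\|\edualp\|_\infty)^{-1}$ and $\edualp$ the maximizer of $\dualp^\top\y$ over $\{\dualp : N^*(\dualp^\top\X\mm{S}^\top) \le 1\}$.

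It remains to rewrite this dual feasible set as $\{\dualp : \|\dualp^\top\X\mm{P}\| \le 1\}$. By definition, $N^*(\dualp^\top\X\mm{S}^\top) = \sup\{(\X^\top\dualp)^\top\z : \z \in \mathrm{range}(\mm{S}^\top),\ \|\z\|_* \le 1\}$, i.e.\ a dual-norm condition on $\X^\top\dualp$ measured only against the subspace $W \coloneqq \mathrm{range}(\mm{S}^\top) = \mathrm{range}(\mm{P})$; since $\mm{P}$ fixes $W$ pointwise, $(\X^\top\dualp)^\top\z = (\mm{P}\X^\top\dualp)^\top\z$ there. For $\|\cdot\|_2$ this pins things down exactly: $N^*(\dualp^\top\X\mm{S}^\top)^2 = \dualp^\top\X\mm{S}^\top(\mm{S}\mm{S}^\top)^{-1}\mm{S}\X^\top\dualp = \dualp^\top\X\mm{P}\X^\top\dualp = \|\dualp^\top\X\mm{P}\|_2^2$ using $\mm{P}^2 = \mm{P} = \mm{P}^\top$, so the feasible set is $\{\|\dualp^\top\X\mm{P}\|_2 \le 1\}$ and the $\bar\delta$ formula follows; for a general norm the same computation leaves a dual norm restricted to $W$, which is what $\|\dualp^\top\X\mm{P}\|$ should be read as. One also notes that $\mm{P}\X^\top$ is injective on $\R^\ntrain$ — because $\mm{S}\X^\top = (\X\mm{S}^\top)^\top$ has full column rank — which is what lets the optimal $\edualp$, and hence $\bar\delta$, be determined unambiguously.

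A self-contained alternative is to mirror the proof of \Cref{thm:dual-minnormsol} almost line for line. Write $R^{\text{adv}}_{\mm{S}}(\param) = \tfrac{1}{n}\sum_i g_i(\param)^2$ with $g_i(\param) = |y_i - \x_i^\top\mm{S}^\top\param| + \delta\|\mm{S}^\top\param\|_*$; since $\eparam$ interpolates, the subgradient chain rule gives $\partial R^{\text{adv}}_{\mm{S}}(\eparam) = 2\delta\|\mm{S}^\top\eparam\|_*\, \mm{S}\bigl( \delta\, \partial\|\mm{S}^\top\eparam\|_* - \tfrac{1}{n}\X^\top\vv{\rho} \bigr)$ over $\|\vv{\rho}\|_\infty \le 1$, so $0$ lies in it iff $\delta\vv{v} - \tfrac{1}{n}\X^\top\vv{\rho} \in \ker\mm{S} = \ker\mm{P}$ for some $\vv{v} \in \partial\|\mm{S}^\top\eparam\|_*$ and $\|\vv{\rho}\|_\infty \le 1$; the $\mm{S}$-analogue of \Cref{thm:dual-equivalence} supplies a dual optimizer $\edualp$ with $\mm{P}\X^\top\edualp \in \mm{P}\bigl(\partial\|\mm{S}^\top\eparam\|_*\bigr)$, and then $\vv{\rho} = n\delta\edualp$ handles the ``if'' direction while injectivity of $\mm{P}\X^\top$ together with a norm comparison handles the ``only if'' direction, exactly as in the $\mm{S} = \mm{I}$ case. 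Either route makes clear that the one genuinely new ingredient, and the main obstacle, is the $\mm{S}$-version of the dual-equivalence lemma: it forces one to work with a dual norm \emph{restricted to the subspace} $W \subsetneq \R^\nfeatures$ and to check that projecting with $\mm{P} = \mm{S}^\top(\mm{S}\mm{S}^\top)^{-1}\mm{S}$ represents that restriction faithfully. Strong duality for \eqref{eq:min-norm-interp-linmap}, the conjugacy between $\|\cdot\|_*$ and the indicator of the unit $\|\cdot\|$-ball, and the chain rule through $\mm{S}^\top$ are routine and already appear in the paper for $\mm{S} = \mm{I}$.
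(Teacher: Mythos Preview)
The paper does not actually write out a proof of \Cref{thm:dual-minnormsol-linmap}; it clearly intends the argument to be a rerun of the proof of \Cref{thm:dual-minnormsol} with $\mm{S}^\top\param$ in place of $\param$, which is precisely your ``self-contained alternative,'' so on that level your proposal matches the paper.

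Your first route---defining the norm $N(\param)=\|\mm{S}^\top\param\|_*$ on $\R^\nfeatures$ and invoking \Cref{thm:dual-minnormsol} as a black box for the data matrix $\X\mm{S}^\top$---is a genuinely cleaner reduction that the paper does not take: it avoids redoing the subgradient calculus and makes transparent that the only new content is the identification of the dual feasible set. It relies on \Cref{thm:dual-minnormsol} being valid for an arbitrary norm, which it is, so the reduction is legitimate. Both routes funnel into the same projected dual-equivalence lemma. Your caveat about general norms is well placed and worth keeping: for $\|\cdot\|=\|\cdot\|_2$ one has exactly $N^*(\dualp^\top\X\mm{S}^\top)=\|\dualp^\top\X\mm{P}\|_2$ via $\mm{P}^2=\mm{P}=\mm{P}^\top$, but for a non-Euclidean norm the orthogonal projector $\mm{P}$ need not be a $\|\cdot\|_*$-contraction, so $\|\dualp^\top\X\mm{P}\|$ can strictly dominate the $W$-restricted dual norm and the paper's stated constraint is only literally correct in the $\ell_2$ case (or should, as you say, be read as the restricted dual norm).
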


\section{Adversarial training and parameter shrinking methods}
\label{sec:properties-advattack}

\subsection{Background}

The subderivative of a function $\omega: \R^\nfeatures \rightarrow \R$ evaluated at a point $\param_0$ is the set
$$\partial \omega(\param_0) = \{\vv{v} \in \R^\nfeatures: \omega(\param) - \omega(\param_0) \ge \vv{v}( \param - \param_0)~\forall \param \in \R^\nfeatures  \}.$$ 
In this section, for convenience, we will drop the two last arguments of  $R^{\text{adv}}(\param; \delta, \|\cdot\|)$ and denote it only by $R^{\text{adv}}(\param)$. Let $\loss_{i}(\param) = |y_i - \x_i^\trnsp\param| + \delta\|\param\|_*$, then the partial derivative of $R^{\text{adv}}$ with respect to $\param$ is
\begin{equation}
\label{eq:subderivative}
\partial R^{\text{adv}}(\param) = \frac{2}{n}\sum_{i=1}^n \loss_{i}(\param)    \partial \loss_{i}(\param),\text{ where} \hspace{5pt} \partial L_{i}(\param) = \x_i \partial |\x_i^\top\param - y_i| + \delta \partial \|\param\|_*,
\end{equation}
where
$$\partial |a| = \begin{cases}
\{1\}\text{ if } a > 0  \\\
\{-1\}\text{ if } a < 0 \\
\{\gamma: \gamma \in [-1, 1]\}
\text{ if }  a = 0,
\end{cases}
$$
and:
\begin{equation}
\label{eq:subderivative-norm}
\partial \|\param\|_* = \left\{\dualp:~\|\dualp\|\le 1, \dualp^\trnsp \param = \|\param\|_*\right\}.
\end{equation}
Since, $R^{\text{adv}}(\param)$ is a convex function of $\param$, we have that $\eparam$ is a solution of $\min R^{\text{adv}}(\param)$ iff ${\vv{0} \in \partial R^{\text{adv}}(\eparam)}$. 

\subsection{Zero solution to adversarial training: \Cref{thm:zero-solution}}
\Cref{thm:zero-solution} stated that \emph{The zero solution $\eparam = \vv{0}$ minimizes the adversarial training iff $\delta \ge  \frac{\|\X^\top \y\|}{\|\y\|_1}$.} Indeed, one can notice from~\Cref{fig:diabetes_model} and \Cref{fig:gaussian_regularizationpath-l1,fig:gaussian_regularizationpath-l2} there is a threshold of $\delta$, such that for all $\delta$ above such threshold the solution is identical to zero. We provide proof for this theorem next.

\begin{proof}[Proof of \Cref{thm:zero-solution}] On the one hand,
\begin{align*}
\partial R^{\text{adv}}(\vv{0}) &= \frac{2}{n}\sum_{i=1}^n |y_i| (\x_i \partial |y_i| + \delta \partial\|\vv{0}\|_*).
\end{align*}
In matrix form:
$$\partial R^{\text{adv}}(\vv{0}) = \frac{2}{n} \left(\X^\top \y + \delta \|\y\|_1 \partial \|\vv{0}\|_*\right )$$
where we used that $ |y_i| \partial |y_i| = y_i$. Now, $\partial \|\vv{0}\|_* = \{\dualp: \|\dualp\|\le 1\}$, hence for $\vv{z} \in \partial \|\vv{0}\|_*$, we obtain from triangular inequality that:
$$\|\X^\top \y \| - \delta \|\y\|_1  \le \left\|\X^\top \y + \delta \|\y\|_1 \vv{z}\right\|.$$
On the one hand, if $\frac{\|\X^\top \y \|}{\|\y\|_1} > \delta$, the left-hand-side of the above inequality is larger than zero, and $\vv{0} \not \in \partial R^{\text{adv}}(\vv{0})$. On the other hand, if $\frac{\|\X^\top \y \|}{\|\y\|_1} \le \delta$, we have $\vv{z} = \X^T\y /(\delta\|\y\|_1)\in \partial \|\vv{0}\|_*$ (since $\|\vv{z}\|\le 1$), and:
$$\X^\top \y + \delta \|\y\|_1 \vv{z} = \vv{0}.$$
Therefore $\vv{0} \in \partial R^{\text{adv}}(\vv{0})$ and the proof is complete.
\end{proof}

\subsection{On the similarities of regularization paths: \Cref{thm:proxy-paramshriking3} and extensions}
The next phenomenon we want to explain is the similarity between  Lasso and $\ell_\infty$-adversarial attacks regularization paths. As well as the similarities between ridge and $\ell_2$-adversarial attacks regularization paths.

\begin{theorem}
\label{thm:proxy-paramshriking}
Let $\eparam(\delta)$ be the minimizer of $R^{\text{adv}}(\param; \delta, \|\cdot\|)$, define the vector $\vv{s}(\delta) = \sign(\y - \X\eparam(\delta))$. Assume that $y_i \not= \x_i^\top \eparam(\delta)$ for every $i$. If $\X^\top \vv{s}(\delta) = \vv{0}$ than $\eparam(\delta)$ is a minimizer of
\begin{equation}
\label{eq:proxy-paramshriking}
\frac{1}{n} \|\X \param - \y\|_2^2 + \left(\delta \|\param\|_* + \frac{1}{n} \vv{s}(\delta)^\top \y\right)^2,
\end{equation}
\end{theorem}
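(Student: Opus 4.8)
The plan is to verify the first-order optimality condition directly: since $R^{\text{adv}}(\cdot;\delta,\|\cdot\|)$ is convex and $\eparam\coloneqq\eparam(\delta)$ minimizes it, we have $\vv{0}\in\partial R^{\text{adv}}(\eparam)$, so it suffices to show that $\partial R^{\text{adv}}(\eparam)$ coincides, as a set, with the subdifferential at $\eparam$ of the target objective
\[
g(\param)=\tfrac{1}{n}\|\X\param-\y\|_2^2+\big(\delta\|\param\|_*+\tfrac{1}{n}\vv{s}(\delta)^\top\y\big)^2 ,
\]
where in $g$ the vector $\vv{s}(\delta)=\sign(\y-\X\eparam(\delta))$ is treated as a fixed constant. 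Once this identity of subdifferentials is established we get $\vv{0}\in\partial g(\eparam)$, and since $g$ is convex this means $\eparam$ minimizes $g$, which is the claim.

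First I would record a preliminary identity. Writing $\vv{r}=\y-\X\eparam$, the hypothesis $y_i\neq\x_i^\top\eparam$ makes each sign satisfy $s_i^2=1$ and $|r_i|=s_i r_i$, so, using $\X^\top\vv{s}(\delta)=\vv{0}$,
\[
\tfrac{1}{n}\vv{s}(\delta)^\top\y=\tfrac{1}{n}\vv{s}(\delta)^\top(\vv{r}+\X\eparam)=\tfrac{1}{n}\vv{s}(\delta)^\top\vv{r}=\tfrac{1}{n}\|\vv{r}\|_1\ge 0 .
\]
Next, using the subdifferential formula \eqref{eq:subderivative}--\eqref{eq:subderivative-norm} together with the fact that $r_i\neq0$ makes $\param\mapsto|y_i-\x_i^\top\param|$ differentiable at $\eparam$ with gradient $-s_i\x_i$, I would write $\partial R^{\text{adv}}(\eparam)=\tfrac{2}{n}\sum_i L_i(\eparam)\,\partial L_i(\eparam)$ with $L_i(\eparam)=|r_i|+\delta\|\eparam\|_*\ge 0$ and $\partial L_i(\eparam)=-s_i\x_i+\delta\,\partial\|\eparam\|_*$. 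Because the coefficients $L_i(\eparam)$ are nonnegative and $\partial\|\eparam\|_*$ is convex, one may pull the convex set out of the sum (using $\sum_i a_i(\vv{v}_i+B)=\sum_i a_i\vv{v}_i+(\sum_i a_i)B$ for $a_i\ge 0$ and $B$ convex) and then simplify the two resulting coefficients via $\X^\top\vv{s}(\delta)=\vv{0}$: the linear part equals $\sum_i L_i(\eparam)s_i\x_i=\sum_i r_i\x_i+\delta\|\eparam\|_*\X^\top\vv{s}(\delta)=\X^\top\vv{r}$, while the scalar multiplier equals $\tfrac{1}{n}\sum_i L_i(\eparam)=\tfrac{1}{n}\|\vv{r}\|_1+\delta\|\eparam\|_*=\delta\|\eparam\|_*+\tfrac{1}{n}\vv{s}(\delta)^\top\y$. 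This yields
\[
\partial R^{\text{adv}}(\eparam)=-\tfrac{2}{n}\X^\top\vv{r}+2\delta\big(\delta\|\eparam\|_*+\tfrac{1}{n}\vv{s}(\delta)^\top\y\big)\,\partial\|\eparam\|_* .
\]

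Finally I would compute $\partial g(\eparam)$: the quadratic term contributes $\tfrac{2}{n}\X^\top(\X\eparam-\y)=-\tfrac{2}{n}\X^\top\vv{r}$, and for the second term, setting $h(\param)=\delta\|\param\|_*+\tfrac{1}{n}\vv{s}(\delta)^\top\y$, the preliminary identity shows the additive constant is nonnegative, so $h\ge 0$ is convex, $h^2$ is convex, and $\partial(h^2)(\eparam)=2h(\eparam)\,\partial h(\eparam)=2\big(\delta\|\eparam\|_*+\tfrac{1}{n}\vv{s}(\delta)^\top\y\big)\delta\,\partial\|\eparam\|_*$; adding the two contributions reproduces exactly the displayed expression for $\partial R^{\text{adv}}(\eparam)$, which closes the argument. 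I expect the main obstacle to be precisely this last step: the chain rule $\partial(h^2)=2h\,\partial h$ — and indeed the convexity of $g$ itself — breaks down when $h$ can be negative, so the argument genuinely reuses the hypotheses $\X^\top\vv{s}(\delta)=\vv{0}$ and $y_i\neq\x_i^\top\eparam(\delta)$ a second time through the inequality $\tfrac{1}{n}\vv{s}(\delta)^\top\y=\tfrac{1}{n}\|\y-\X\eparam\|_1\ge 0$. The only other delicate point is the bookkeeping with Minkowski sums of the convex set $\partial\|\eparam\|_*$ when collapsing the per-sample subdifferentials into the closed-form expression for $\partial R^{\text{adv}}(\eparam)$.
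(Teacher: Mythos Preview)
Your proof is correct and follows essentially the same approach as the paper: both compute the subdifferential of $R^{\text{adv}}$ at $\eparam$, use the hypotheses $y_i\neq\x_i^\top\eparam$ and $\X^\top\vv{s}(\delta)=\vv{0}$ to collapse it to the form $\tfrac{2}{n}\X^\top(\X\eparam-\y)+2\delta\big(\delta\|\eparam\|_*+\tfrac{1}{n}\vv{s}(\delta)^\top\y\big)\partial\|\eparam\|_*$, and then recognize this as the subdifferential of the target objective $g$ at $\eparam$. You are in fact more careful than the paper on two points it glosses over: the Minkowski-sum bookkeeping when combining the per-sample terms, and the observation that $\tfrac{1}{n}\vv{s}(\delta)^\top\y=\tfrac{1}{n}\|\y-\X\eparam\|_1\ge 0$, which is what makes $g$ convex and validates the chain rule $\partial(h^2)=2h\,\partial h$.
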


\begin{proof}
We will prove first that if $\X^\top \vv{s} (\delta)= 0$, than a  minimizer of $R^{\text{adv}}(\param; \delta, \|\cdot\|)$ is also a minimizer of:
\begin{equation}
\label{eq:proxy-paramshriking2}
\frac{1}{n} \|\X \param - \y\|_2^2 +\left( \delta \|\param\|_* + \frac{1}{n}\vv{s}(\delta)^\top \y \right)^2
\end{equation}
is also a minimizer of  $R^{\text{adv}}(\param; \delta, \|\cdot\|)$.
Multiplying the terms in~\eqref{eq:subderivative} and putting into matrix form:
$$\frac{1}{2}\partial R^{\text{adv}}(\param) =  \frac{1}{n} \X^\top (\X \param - \y) + 
\frac{\delta\|\param\|_*}{n} \X^\top \partial ||\X\param - \y||_1  + \left(\frac{1}{n} \|\X \param - \y\|_1 + \delta \|\param\|_*\right) \delta \partial \|\param\|_*$$
Now, we have that $\partial ||\X\eparam - \y||_1 = \{-\vv{s}(\delta)\}$ and that $\|\X \eparam - \y\|_1 = \vv{s}(\delta)^\top (\y- \X \eparam)= \vv{s} (\delta)^\top \y$, hence:
$$\frac{1}{2}\partial R^{\text{adv}}(\eparam) = \frac{1}{n} \X^\top (\X \eparam - \y) +  \left(\delta \|\eparam\|_*+\frac{1}{n}\vv{s}(\delta)^\top \y  \right) \delta \partial \|\eparam\|_*$$
The left-hand side is the subderivative of~\eqref{eq:proxy-paramshriking2} evaluated at $\eparam$. Since $\eparam$ is the minimizer of $R^{\text{adv}}(\eparam)$, than $\vv{0} \in\partial R^{\text{adv}}(\eparam)$ and it follows that $\eparam$  is also a minimizer of~\eqref{eq:proxy-paramshriking2}.
\end{proof}

\Cref{thm:proxy-paramshriking3} is  weaker than necessary. But we choose it to be part of the main text instead instead of Theorem~\ref{thm:proxy-paramshriking} because it has a cleaner interpretation:  it only depends on the norm of $\eparam$ and not on its direction. And $\X^\top \vv{1}$ = 0 has the interpretation that the data has been normalized. The proof follows directly from the Theorem.

\begin{proof}[Proof of Proposition~\ref{thm:proxy-paramshriking3}]
We can use the Hölder inequality $|\x_i^\top \eparam|\le \|\eparam\|_* \|\x_i\|$, to show that if $\|\eparam\|_* \|\x_i\| \le |y_i|$ than $ |\x_i^\top \eparam(\delta)|\le |y_i|$ for all $i$. In this case, $\vv{s}(\delta) =  \sign(\y)$. If additionally if $\y > \vv{0}$, the Theorem implies that as long as $\X^\top \vv{1} = 0$ then $\eparam(\delta)$ is the minimizer of:
$$\frac{1}{n} \|\X \param - \y\|_2^2 + \left(\delta \|\param\|_* + \frac{1}{n}  \|\y\|_1\right)^2.$$
\end{proof}

We notice that the theorem conclusion holds even under the (less strict) condition, $|\x_i^\top \eparam(\delta)|\le |y_i|$ for every $i$. Figure~\ref{fig:diabetes_model_l1} highlights the part of the regularization path for which this condition holds. Showing that even for values of $\delta$ for which this condition does not hold, the regularization paths are still extremely similar.
\begin{figure}[H]
    \vspace{-10pt}
    \centering
    \subfloat[Lasso regression]{\includegraphics[width=0.42\textwidth]{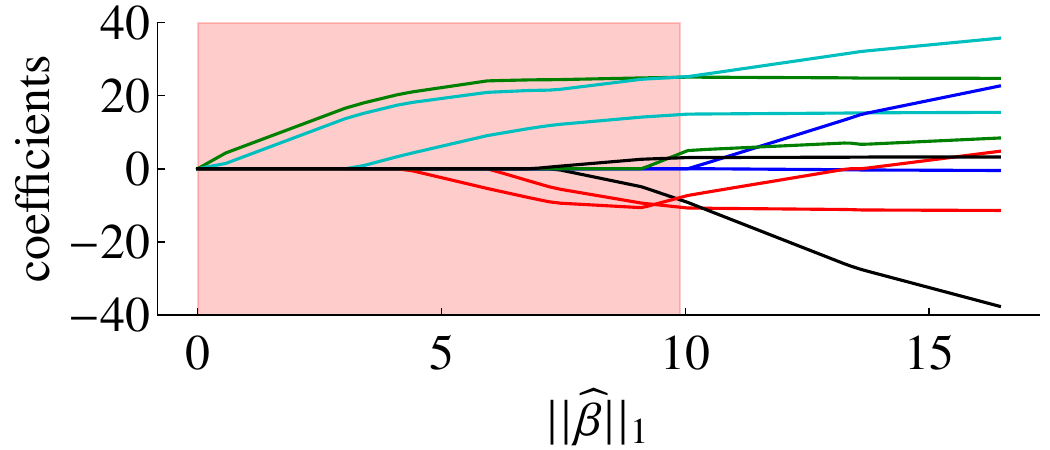}}
    \subfloat[$\ell_\infty$-adversarial training]{\includegraphics[width=0.42\textwidth]{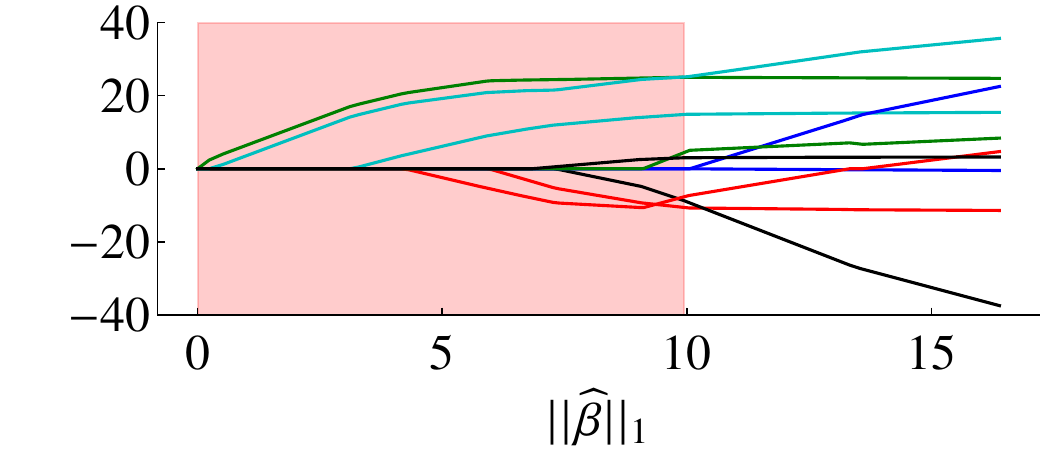}} 
    \caption{\textbf{Regularization paths: diabetes dataset}. On the horizontal axis, we give the $\|\eparam\|_1$. On the vertical axis, we show the coefficients of the learned linear model. The hashed area gives the values of $\eparam$ for which  $|\x_i^\top \eparam(\delta)|\le |y_i|$ for every $i$.}
    \label{fig:diabetes_model_l1_with_highlights}
\end{figure}

This can be explained from the same argument use in Theorem~\ref{thm:proxy-paramshriking}.  Assume the hypothesis that the covariates $\{\x_i\}_{i=1}^{\ntrain}$ are i.i.d. and are sampled from a symmetric and zero-mean distribution, i.e. $\x \sim -\x$ and $\Exp{\x} = 0$. In this case, we notice that if $s \in \{-1, 1\}$ then by symmetry of the distribution $s\x \sim \x$, and we have that $\Exp{s \x} = \vv{0}$. Now, from the law of large numbers, $\frac{1}{n}\sum s_i \x_i \rightarrow 0$ as $n\rightarrow 0$. Let $\eparam$ be the solution of
$$\frac{1}{n} \|\X \param - \y\|_2^2 + \left(\delta \|\param\|_* + \frac{1}{n} \vv{s}^\top \y\right)^2.$$
The same argument used in the proof of Theorem can than be used to show that in this case $\partial R^{\text{adv}}(\eparam)\rightarrow 0$ as $n\rightarrow \infty$. Hence, for large enought $n$ both problems have approximately the same solution.

\subsection{Regularization path for Gaussian covariates}

\begin{figure}[H]
    \vspace{-10pt}
    \centering
    \subfloat[Lasso regression]{\includegraphics[width=0.42\textwidth]{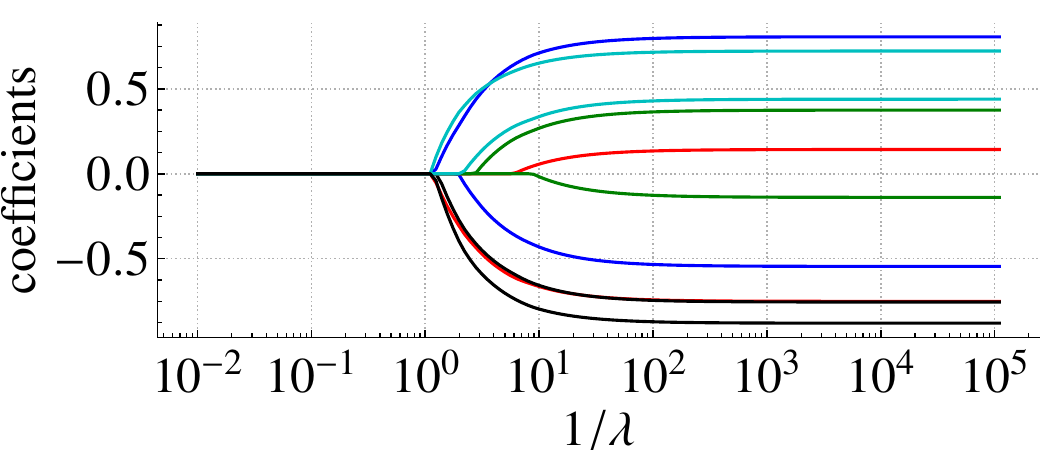}}
    \subfloat[$\ell_\infty$-adversarial training  ]{\includegraphics[width=0.42\textwidth]{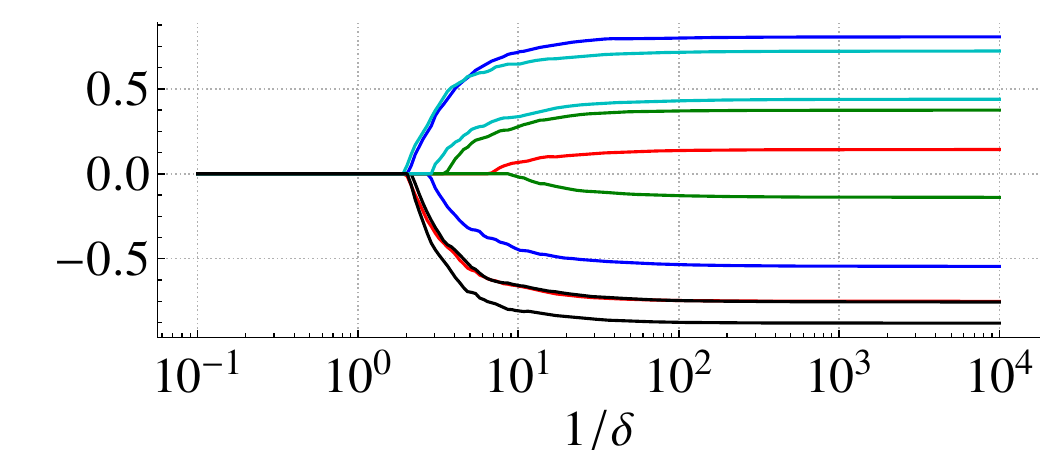}} 
    \caption{\textbf{Regularization paths: Gaussian covariates.} Lasso and $\ell_\infty$-adversarial training. On the horizontal axis, we give the inverse of the regularization parameter (in log scale). On the vertical axis we show the coefficients of the learned linear model.  }
    \label{fig:gaussian_regularizationpath-l1}
\end{figure}

\begin{figure}[H]
    \vspace{-10pt}
    \centering
     \subfloat[Ridge regression]{\includegraphics[width=0.42\textwidth]{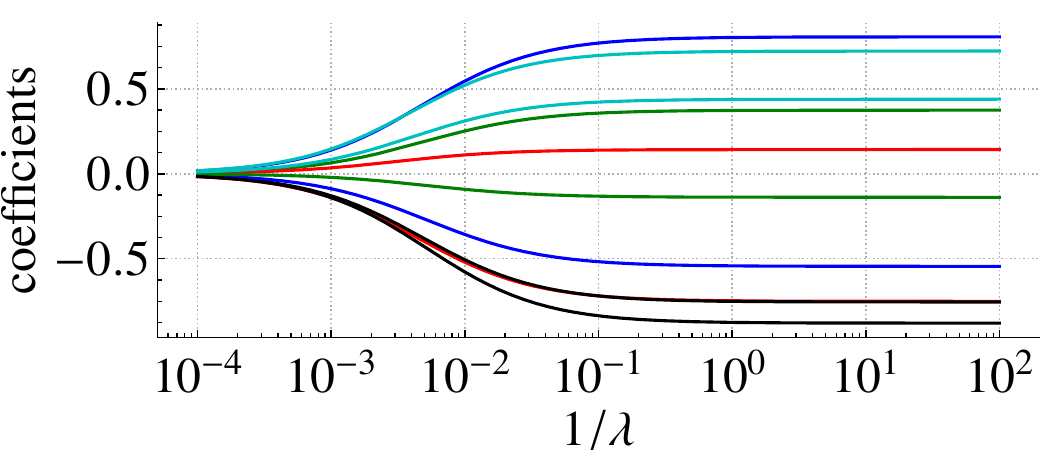}}
    \subfloat[$\ell_2$-adversarial training  ]{\includegraphics[width=0.42\textwidth]{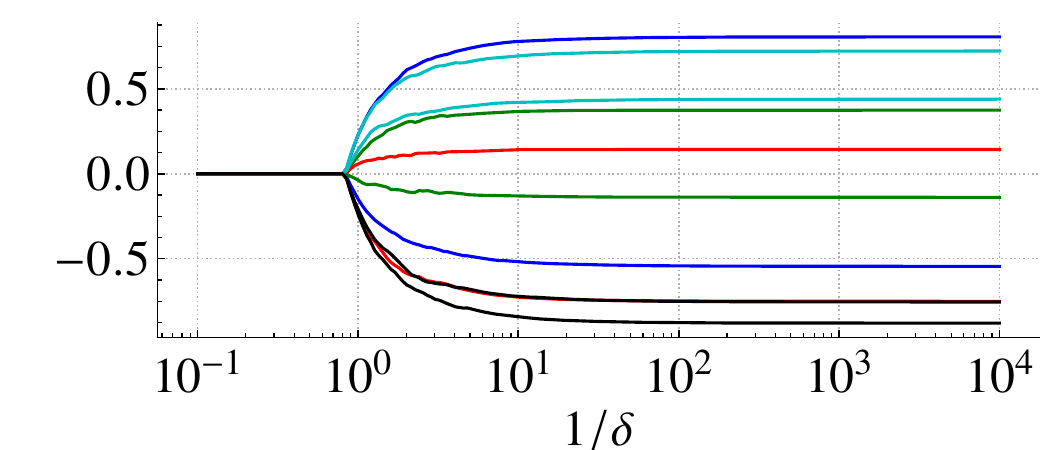}}
    \caption{\textbf{Regularization paths: Gaussian covariates}. Ridge regression and $\ell_2$-adversarial training. On the horizontal axis, we give the inverse of the regularization parameter (in log scale). On the vertical axis we show the coefficients of the learned linear model.  }
    \label{fig:gaussian_regularizationpath-l2}
\end{figure}

\newpage
\subsection{Additional details on~\Cref{fig:diabetes_model_l1}: relationship between $\|\eparam\|_1$ and $\lambda$ and $\delta$}
\label{sec:additional_details}

\Cref{fig:diabetes_model_rel}(a) illustrate the relationship between $\|\eparam\|_1$ and $\lambda$ and $\delta$. \Cref{fig:diabetes_model_rel}(b-c) shows the same coefficients as in \Cref{fig:diabetes_model_l1}, but considers  $1/\lambda$ and $1/\delta$ is the $x$-axis instead of $\|\eparam\|_1$.
\begin{figure}[H]
    \centering
    \subfloat[Regularization parameter vs $\|\eparam\|_1$]{\includegraphics[width=0.6\textwidth]{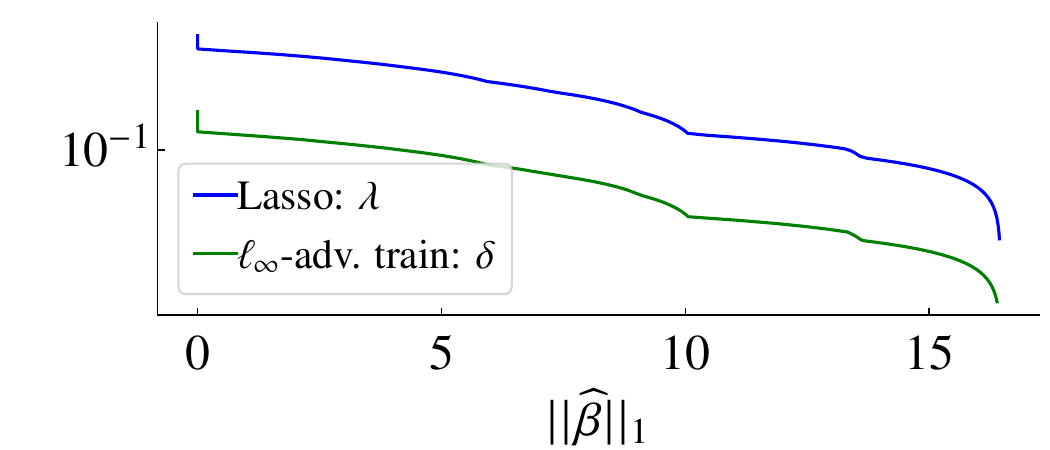}}\\
    \subfloat[Lasso regression]{\includegraphics[width=0.42\textwidth]{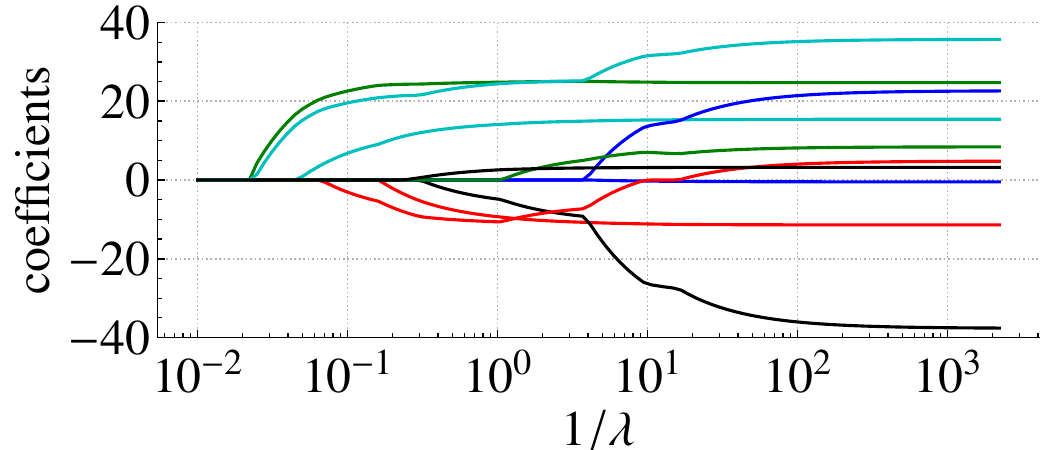}}
    \subfloat[$\ell_\infty$-adversarial training]{\includegraphics[width=0.42\textwidth]{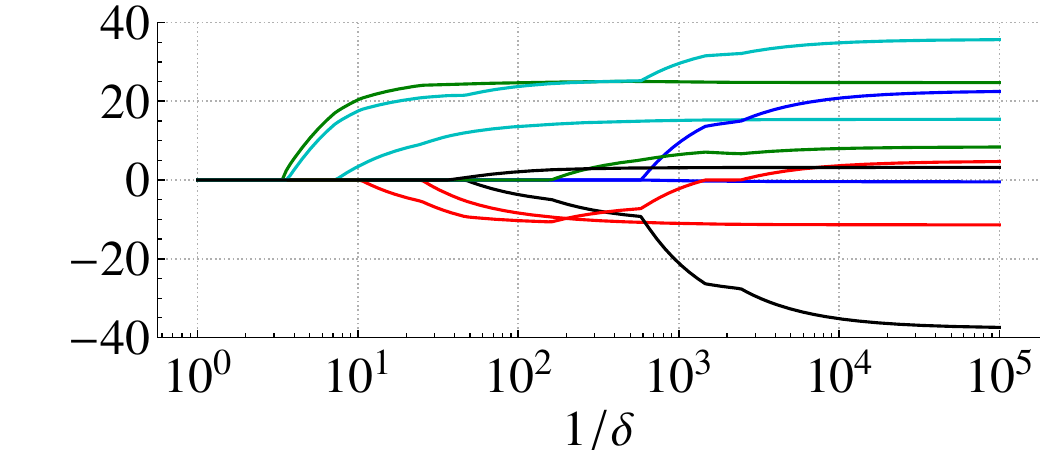}}
    \caption{\textbf{Regularization parameter vs $\|\eparam\|_1$}. In (a), we show the relationship between $\lambda$ and $\delta$ and $\|\eparam\|_1$. In (b),  we show the regularization paths  estimated in the Diabetes dataset~\citep{efron_least_2004} for Lasso with $1/\lambda$ in the $x$-axis; and, in (c) for $\ell_\infty$-adversarial attacks, with $1/\delta$ in the $x$-axis. }
    \label{fig:diabetes_model_rel}\vspace{-10pt}
\end{figure}

 \paragraph{}
\section{Relation to robust regression and square-root Lasso}
\label{sec:sparse-recovery}

\subsection{Proof of \Cref{thm:robust_training_advtrain}}

\begin{proof}
It follows from noting that 
\begin{equation*}
R_p^{\text{adv}}(\param; \delta)
  = \frac{1}{n}\max_{\mm{\Delta} \in  \mathcal{R}_{p}(\delta)}  \|\y - (\X + \mm{\Delta} )\param\|^2_2
  = \frac{1}{n}\left(\max_{\mm{\Delta} \in  \mathcal{R}_{p}(\delta)} \|\y - (\X + \mm{\Delta} )\param\|_2\right) ^2.
\end{equation*}
Where the first equality follows from the definition of adversarial training and the last equality from the fact that the function $h(z) = \tfrac{1}{n} z^2$ is monotonically increasing for $z \ge 0$. Repeating the same argument, but now for the minimization, implies that $R_p^{\text{adv}}(\param; \delta)$ has the same minimizer as $\max_{\mm{\Delta} \in  \mathcal{R}_{p}} \|y - (\X + \mm{\Delta})\param\|_2$. 
\end{proof}
\subsection{Error bounds for $\ell_\infty$-adversarial training for sparse recovery (Proof of~\Cref{{thm:pred-error-slowrate}})}

In this section, we will consider exclusively the empirical $\ell_\infty$-adversarial risk:
\begin{equation*}
R^{\text{adv}}(\param; \delta, \|\cdot\|_\infty) =   \frac{1}{n}\sum_{i=1}^n\left(|y_i - \x_i^\top\param| + \delta\|\param\|_1\right)^2.
\end{equation*}
For convenience, we will denote it only  $R^{\text{adv}}(\param)$ in this section and we denote $\eparam$ the minimizer of this optimization problem. Moreover, we assume  that the data was generated as:
\begin{equation}
    y_i = \x_i^\top\param^* + \e_i,
\end{equation}
where $\param^*$ is the parameter vector used to generate the data. Again we denote $\X$ the matrix of vectors $\x_i$ stacked and $\y$ and $\ee$ the vectors of stacked outputs and noise components respectively.  In this case, \Cref{thm:pred-error-slowrate} states that for 
\emph{
$\delta > \delta^* = 3 \frac{\|\X^\top \ee \|_\infty}{\|\ee \|_1}$, the prediction error of $\ell_\infty$-adversarial training satisfies the bound:}
$$
\frac{1}{n}\|\X (\eparam - \param^*)\|_2^2  \le  8 \delta\|\param^*\|_1  \left(\frac{1}{n}\|\ee\|_1  + 10 \delta \|\param^*\|_1\right).$$
We present the proof for this result next.
\begin{proof}[Proof of \Cref{thm:pred-error-slowrate}.]
Let $\loss_{i}(\param) = |y_i - \x_i^\trnsp\param| + \delta\|\param\|_1$, then
$$\partial R^{\text{adv}}(\param) = \frac{2}{n}\sum_{i=1}^n \loss_{i}(\param)    \partial \loss_{i}(\param),\text{where} \hspace{5pt} \partial L_{i}(\param) = \x_i \partial |y_i - \x_i^\top\param| + \delta \partial \|\param\|_1.$$
Multiplying the terms and expanding
\begin{equation*}
\frac{1}{2}\partial R^{\text{adv}}(\param) =  \frac{1}{n} \X^\top (\X \param - \y) + 
\frac{\delta\|\param\|_1}{n} \X^\top \partial \|\X\param-\y\|_1  + \left(\frac{1}{n} \|\X \param - \y\|_1 + \delta \|\param\|_1\right) \delta \partial \|\param\|_1.
\end{equation*}
Now, since $\vv{0} \in \partial R^{\text{adv}}(\eparam)$ we must have $\widehat{\vv{z}} \in \partial \|\X\param-\y\|_1$  and $\widehat{\vv{w}} \in\partial \|\param\|_1$ such that:
\begin{equation}
\label{eq:subderivative-ellinftyattack-iszero}
\vv{0} =  \frac{1}{n} \X^\top (\X \eparam - \y) + 
\frac{\delta\|\eparam\|_1}{n} \X^\top \widehat{\vv{z}}  +  \delta \left(\frac{1}{n} \|\X \eparam - \y\|_1 + \delta \|\eparam\|_1\right)\widehat{\vv{w}}.
\end{equation}
Let us denote,
$$\eerror = \eparam - \param^*.$$
Taking the dot product of both sides of Eq~\eqref{eq:subderivative-ellinftyattack-iszero} with $\eerror$,
making the substitution $\X \eparam - \y = \X \eerror - \ee$ and rearranging we find that
\begin{equation}
\label{eq:subderivative-ellinftyattack}
\frac{1}{n} \|\X \eerror\|_2^2 = \textcolor{blue}{\frac{1}{n} (\X \eerror)^\top \ee} - \frac{\delta\|\eparam\|_1}{n}  \textcolor{red}{(\X \eerror)^\top \widehat{\vv{z}}} - \delta\left(\frac{1}{n} \|\X \eerror - \ee\|_1 + \delta \|\eparam\|_1\right) \textcolor{cyan}{\eerror^\top  \widehat{\vv{w}}}.
\end{equation}
Next, we bound each of the terms highlighted above.
From~\eqref{eq:subderivative-norm}, since  $\widehat{\vv{w}} \in\partial \|\eparam\|_1$, we have that  $\widehat{\vv{w}}^\top \eparam  = \|\eparam\|_1$ and  $\widehat{\vv{w}}^\top \param^* \le \|\widehat{\vv{w}}\|_\infty \|\param^*\|_1\le \|\param^*\|_1$. Therefore:
$$\textcolor{cyan}{- \widehat{\vv{w}}^\top \eerror \le \|\param^*\|_1 - \|\eparam\|_1.}$$
Similarly, since $\widehat{\vv{z}} \in \partial \|\X^\top\param-\y\|_1$,
we have that $$\textcolor{red}{-\widehat{\vv{z}}^\top \X \eerror =  -\widehat{\vv{z}}^\top (\X \eerror  - \ee) -\widehat{\vv{z}}^\top \ee \le -\|\X \eerror  - \ee\|_1 + \|\ee\|_1.}$$
Moreover, we have that:
\[\textcolor{blue}{\frac{1}{n} (\X \eerror)^\top \ee \labelrel\le{holder} \frac{1}{n} \|\X^\top \ee\|_\infty\|\eerror\|_1 \labelrel\le{def-delta} \frac{\delta}{3 n}  \|\ee\|_1\|\eerror\|_1},\]
where step~\eqref{holder} follows from Hölder inequality and step~\eqref{def-delta} from the condition imposed in the theorem that $\|\ee\|_1  \delta > 3\|\X \ee\|_{\infty} $.
Plugging these results back into~\eqref{eq:subderivative-ellinftyattack} and rearranging
\begin{multline*}
\frac{1}{n} \| \X \eerror\|_2^2\le 
\frac{\delta}{3 n}  \|\ee\|_1\|\eerror\|_1 + \frac{\delta}{n} ( \|\ee\|_1 
 - 2  \|\X \eerror - \ee\|_1 )\|\eparam\|_1 + \\
 \frac{\delta}{n} \|\X \eerror - \ee\|_1\|\param^*\|_1  + \delta^2  \|\eparam\|_1 ( \|\param^*\|_1 - \|\eparam\|_1).
\end{multline*}
On the one hand:
\[\|\ee\|_1 
 - 2  \|\X \eerror - \ee\|_1 \labelrel\le{trivial} \|\ee\|_1 
 - \tfrac{3}{2}  \|\X \eerror - \ee\|_1  \labelrel\le{triangular} \tfrac{3}{2}\|\X \eerror\|_1  
 - \tfrac{1}{2} \|\ee\|_1,\]
where, in step~\eqref{trivial} we use the  trivial fact  that $-\frac{3}{2} > -2$. The number $\frac{3}{2}$ is arbitrary and other values $-2 < \alpha < -1$ should also work. In step~\eqref{triangular}, we use the triangular inequality: ${\|\X \eerror - \ee\|_1\ge  \|\ee\|_1 -\|\X \eerror\|_1}$. On the other hand, also using the triangular inequality, we have that: $\|\X \eerror - \ee\|_1\le \|\X \eerror\|_1+ \|\ee\|_1$.  Hence:
\begin{multline*}
\frac{1}{n} \| \X \eerror\|_2^2\le 
\frac{\delta}{3 n}  \|\ee\|_1\|\eerror\|_1 + \frac{\delta}{n} \Big(\tfrac{3}{2}\|\X \eerror\|_1  
 - \tfrac{1}{2} \|\ee\|_1\Big)\|\eparam\|_1 + \\
 \frac{\delta}{n} (\|\X \eerror\|_1+ \|\ee\|_1)\|\param^*\|_1  + \delta^2  \|\eparam\|_1 ( \|\param^*\|_1 - \|\eparam\|_1).
\end{multline*}
Rearranging the right-hand-side of the above inequality:
\begin{multline*}
\frac{1}{n} \| \X \eerror\|_2^2\le  \frac{\delta}{3 n}  \|\ee\|_1\left(\|\eerror\|_1 + 3\|\param^*\| - \frac{3}{2}\|\eparam\|_1\right) + \\ \frac{\delta}{n} \|\X \eerror\|_1  \left(\|\param^*\|_1 + \frac{3}{2} \|\eparam\|_1\right) + \delta^2 \|\eparam\|_1 ( \|\param^*\|_1 - \|\eparam\|_1).
\end{multline*}
Finally, using that the norm inequality : $\|\X \eerror\|_1 \le \sqrt{n}\|\X \eerror\|_2$, we obtain
\begin{multline*}
\frac{1}{n} \| \X \eerror\|_2^2\le  \frac{\delta}{3 n}  \|\ee\|_1\left(\|\eerror\|_1 + 3\|\param^*\| - \frac{3}{2}\|\eparam\|_1\right) + \\ \frac{\delta}{\sqrt{n}} \|\X \eerror\|_2  \left(\|\param^*\|_1 + \frac{3}{2} \|\eparam\|_1\right) + \delta^2 \|\eparam\|_1 ( \|\param^*\|_1 - \|\eparam\|_1) .
\end{multline*}

Notice that the above inequality is a second-order inequality of the type $y^2 \le  by + c$, where $y = \frac{1}{\sqrt{n}} \|\X \eerror\|_2$ and $b$ and $c$ can be read by inspection. 
Since $y\ge 0$ we must have $b^2 + 4 c\ge 0$, hence:
\begin{align*}
0  &\le \frac{4}{3} \frac{\delta}{n}  \|\ee\|_1\left(\|\eerror\|_1 + 3\|\param^*\| - \frac{3}{2}\|\eparam\|_1\right) + 4 \delta^2 \|\eparam\|_1 \left( \|\param^*\|_1 - \|\eparam\|_1\right) + \delta^2\left(\|\param^*\|_1 + \frac{3}{2} \|\eparam\|_1\right)^2\\
&\le \frac{4}{3}  \frac{\delta}{n}  \|\ee\|_1\left(\|\eerror\|_1 + 3\|\param^*\| - \frac{3}{2}\|\eparam\|_1\right) + \delta^2\left(\|\param^*\|_1^2 + 7\|\eparam\|_1\|\param^*\|_1 -  \frac{7}{4}\|\hat \param\|_1^2\right).
\end{align*}
Factoring the leftmost term we obtain:
\begin{equation}
\label{eq:ineq-params}
0 \le \frac{4}{3}  \frac{\delta}{n}  \|\ee\|_1\left(\|\eerror\|_1 + 3\|\param^*\| - \frac{3}{2}\|\eparam\|_1\right) +  \delta^2 \left(\|\param^*\|_1 + c_1 \|\eparam\|_1\right) \left( 
 \|\param^*\|_1 - c_2  \|\eparam\|_1\right),
\end{equation}
where $c_1 = \sqrt{14} + \frac{7}{2} \le  7.5$ and $c_2 = \sqrt{14} -  \frac{7}{2}\ge 0.2$.
Using the triangular inequality ${\|\eerror\|_1 \le \|\param^*\|_1 + \|\eparam\|_1}$, and the above inequality can be written as
\begin{equation*}
0 \le \frac{4}{3}  \frac{\delta}{n}  \|\ee\|_1\left(4\|\param^*\| - \frac{1}{2}\|\eparam\|_1\right) +  \delta^2 \left(\|\param^*\|_1 + 7.5\|\eparam\|_1\right) \left( 
 \|\param^*\|_1 -  0.2 \|\eparam\|_1\right).
\end{equation*}
Hence:
\begin{equation}
\|\eparam\|_1 \le 8\|\param^* \|_1,
\end{equation}
otherwise, the right-hand side of the inequality would be negative. Now we use  the following proposition to obtain a bound on $y^2 = \frac{1}{n} \|\X \eerror\|_2^2$.
\begin{proposition}
\label{thm:techincal-ineq}
Let $y, b, c\in \R$ and $b^2 + 4 c\ge 0$, if $y^2 \le  by + c$ then 
$y^2 \le b^2 + 2 c.$ 
\end{proposition}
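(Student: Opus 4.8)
The plan is to reduce the hypothesis $y^2 \le by + c$ to a one-sided quadratic inequality in the single nonnegative quantity $u = |y|$, solve that inequality explicitly, and then simplify. First I would note that $by \le |b|\,|y| = |b| u$ irrespective of the sign of $y$, so the hypothesis immediately yields $u^2 \le |b| u + c$. Since $b^2 + 4c \ge 0$, the polynomial $t \mapsto t^2 - |b| t - c$ has real roots, and because $u$ is nonnegative and satisfies the inequality it cannot exceed the larger root; that is, $u \le \tfrac12\big(|b| + \sqrt{b^2 + 4c}\big)$.

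Next I would square this bound, obtaining $y^2 = u^2 \le \tfrac12\big(b^2 + 2c + |b|\sqrt{b^2 + 4c}\big)$. It then remains only to show $|b|\sqrt{b^2 + 4c} \le b^2 + 2c$, after which the desired conclusion $y^2 \le b^2 + 2c$ follows at once. To establish this last inequality I would first observe that $b^2 + 4c \ge 0$ forces $b^2 + 2c \ge \tfrac12 b^2 \ge 0$, so both sides are nonnegative and squaring is legitimate; squaring then reduces the claim to $b^2(b^2 + 4c) \le (b^2 + 2c)^2$, which rearranges to the trivially true $0 \le 4c^2$.

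The argument is short, and the only step calling for a little care is the passage from $u^2 \le |b| u + c$ to the explicit bound on $u$: the quadratic inequality by itself only confines $u$ to the closed interval between the two roots, so it is the sign constraint $u \ge 0$ together with the hypothesis $b^2 + 4c \ge 0$ that both makes the roots real and singles out the usable one-sided bound. I would also be careful to invoke $b^2 + 2c \ge 0$ (again a consequence of $b^2 + 4c \ge 0$) before squaring in the final estimate. Everything else is routine algebraic manipulation.
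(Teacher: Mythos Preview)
Your proposal is correct and follows essentially the same route as the paper: bound $|y|$ by the larger root of the associated quadratic and then collapse the resulting square via an elementary inequality. The only difference is cosmetic: the paper writes the final step as $(r+s)^2 \le 2(r^2+s^2)$ with $r=|b|$, $s=\sqrt{b^2+4c}$, which is algebraically identical to your $|b|\sqrt{b^2+4c}\le b^2+2c$. Your version is in fact slightly more careful, since you work with $u=|y|$ and $|b|$ from the outset and thereby cover arbitrary signs of $b$ and $y$, whereas the paper's write-up tacitly restricts to $b,c\ge 0$ (the case that actually arises in its application).
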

Which yields:
\begin{multline*}
\frac{1}{n} \|\X \eerror\|_2^2  \le \frac{2}{3} \frac{\delta}{n}  \|\ee\|_1\left(\|\eerror\|_1 + 3\|\param^*\| - \frac{3}{2}\|\eparam\|_1\right) + \\ 2 \delta^2 \|\eparam\|_1 \left( \|\param^*\|_1 - \|\eparam\|_1\right) + \delta^2\left(\|\param^*\|_1 + \frac{3}{2} \|\eparam\|_1\right)^2.
\end{multline*}
Rearranging it we obtain:
\begin{align*}
\frac{1}{n} \|\X \eerror\|_2^2  &\le \frac{2}{3} \frac{\delta}{n}  \|\ee\|_1\left(\|\eerror\|_1 + 3\|\param^*\| - \frac{3}{2}\|\eparam\|_1\right) + \delta^2\left(\|\param^*\|_1^2 + 7\|\eparam\|_1\|\param^*\|_1 +  \frac{1}{4}\|\hat \param\|_1^2\right)\\
&\le \frac{2}{3} \frac{\delta}{n}  \|\ee\|_1\left(\|\eerror\|_1 + 3\|\param^*\| - \frac{3}{2}\|\eparam\|_1\right) + \delta^2  \left(\|\param^*\|_1 + c_1 \|\eparam\|_1\right) \left( 
 \|\param^*\|_1 + c_2  \|\eparam\|_1\right),
\end{align*}
where $c_1 =\frac{7}{2} +  \sqrt{12} \le  7$ and $c_2 = \frac{7}{2} -  \sqrt{12}\ge 0.04$.
Now since $\|\eparam\|_1 \le 8\|\param^*\|_1$ and $\|\eerror\|_1\le\|\param^*\|_1 + \|\eparam\|_1 \le 9\|\param^*\|_1 $, 
\begin{align*}
\frac{1}{n} \|\X \eerror\|_2^2  &\le \frac{2}{3} \frac{\delta}{n}  \|\ee\|_1\left(9\|\param^*\|_1 + 3\|\param^*\|\right) + \delta^2  \left(\|\param^*\|_1 + (7 \cdot 8) \|\param^*\|_1\right) \left( 
 \|\param^*\|_1 + (0.04 \cdot 8)  \|\param^*\|_1\right)\\
&\le \frac{8}{n}\delta \|\param^*\|_1 \|\ee\|_1+ 76 \delta^2 \|\param^*\|_1^2\\
&\le 8 \delta \|\param^*\|_1 \left(\frac{1}{n}\|\ee\|_1+ 10 \delta \|\param^*\|_1\right).
\end{align*}
\end{proof}

\begin{proof}[Proof of \Cref{thm:techincal-ineq}]
For completeness, we provide proof for the proposition.
The inequality can be rewritten as: $-y^2 + by + c \ge 0$ for $b, c\ge 0$. For this type of inequality, it follows that:
$$y \le \frac{ b + \sqrt{b^2 + 4c}}{2}.$$
Therefore:
$$y^2 \le \left(\frac{ b + \sqrt{b^2 + 4c}}{2}\right)^2 \labelrel\le{traditional-inequaly}  \frac{ b^2 + (b^2 + 4c)}{2} = b^2 + 2 c,$$ 
where \eqref{traditional-inequaly} uses that: $(r + s)^2 \le 2(r^2 + s^2)$ for any $r, s \ge 0$.
\end{proof}

\subsection{High-dimensional analysis}
\label{sec:high-dimensional-analysis}

We follow the same development as \cite[example 7.14]{wainwright_highdimensional_2019}.

\begin{assumption}
Assume $\ee$ has i.i.d.~$\N(0, \sigma^2)$ entries and  that the matrix $\X$ is fixed with maximum entry of $\X$ equals to $M$.
\end{assumption}

\textbf{Satisfying the conditions of~\Cref{thm:pred-error-slowrate-lasso}.} Under our assumptions
\[\max_{j=1, \cdots, m} \frac{\|\x_j\|_2}{\sqrt{n}} \labelrel\le{normsl2linf} \max_{j=1, \cdots, m} \|\x_j\|_\infty \le M.\]
where step~\eqref{normsl2linf} follows from the inequality between the norms.
Now, $\|\frac{\X^\top \ee}{n} \|_\infty$ is a the maximum over $\nfeatures$ zero-mean Gaussian variable with variance at most $\frac{M^2\sigma^2}{n}$. From standard Gaussian tail bounds:
\begin{equation}
\label{eq:upper_bound_Xtee}
\Big\|\frac{\X^\top \ee}{n} \Big\|_\infty \le M \sigma  \sqrt{\frac{2 \log (\nfeatures/\gamma)}{\ntrain}}
\end{equation}
with probability greater than $1 - 2 \gamma$. Hence, if we set $\lambda = K M\sigma\sqrt{\frac{ \log\nfeatures}{\ntrain}}$ for an appropriate constant $K$ we will have with high probability that $\|\frac{\X^\top \ee}{n} \|_\infty  \le \lambda$, satisfying the the condition on Theorem~\ref{thm:pred-error-slowrate-lasso}.

\textbf{Satisfying the conditions of~\Cref{thm:pred-error-slowrate}.}
Now, we will analyze the condition for which the assumption of Theorem~\ref{thm:pred-error-slowrate} is satisfied.  That is, what values of $\delta$ yield with high probability $\|\X^\top \ee \|_\infty \le \delta\|\ee \|_1$. We have that
\[\Exp{ \frac{1}{n}\|\ee\|_1} = \frac{1}{n} \sum_{i=1}^n \Exp{|\e_i|} \labelrel={mean_retified_gaussian} \sqrt{\frac{2}{\pi}}\sigma, \]
where step~\eqref{mean_retified_gaussian} relied on the fact that $|\e_i|$ is a rectified Gaussian variable. Moreover, since the rectified Gaussian is a sub-Gaussian variable with proxy-variance $2 \sigma^2$,  from a Hoeffding-type of bound
\[\frac{1}{n}\|\ee\|_1 \ge \Big(\sqrt{\frac{2}{\pi}} - 2\sqrt{\frac{\log (1/\gamma)}{n}}\Big) \sigma\]
with probability greater than $1- 2\gamma$. 
We combine this result with the~\eqref{eq:upper_bound_Xtee} to obtain that we can set:
$\delta = K M \sqrt{\frac{\log \nfeatures}{\ntrain}},$  for an appropriate constant $K$ and
we will (with high-probability) satisfy the condition $\|\X^\top \ee \|_\infty \le \delta\|\ee \|_1$.

\section{Numerical experiments}
\label{sec:examples-appendix}

Here we provide some additional descriptions of the numerical experiments.

\begin{enumerate}

\item \textbf{Isotropic Gaussian features}
As mentioned in the main text, we consider Gaussian noise and covariates: $\epsilon_i \sim \N(0, \sigma^2)$ and $\x_i \sim \N(0, r^2 \mm{I}_\nfeatures)$  and the output is computed as a linear combination of the features contaminated with  additive noise: $y_i = \x_i^\trnsp \param+ \epsilon_i$.  In the experiments, unless stated otherwise, we use  the parameters $\sigma=1$ and $r=1$.

\item \textbf{Latent-space features model} The ``latent space'' feature model is described in~\citet[Section 5.4]{hastie_surprises_2022}. The features~$\x$ are noisy observations of a lower-dimensional subspace of dimension~$d$. A vector in this \textit{latent space} is represented by $\vv{z} \in \R^d$. This vector is indirectly observed via the features $\x \in \R^p$ according to
\[\x = \mm{W}\vv{z} + \vv{u},\]
where $\mm{W}$ is an $\nfeatures \times \inpdim$ matrix, for $\nfeatures \ge \inpdim$. We assume that the responses are described by a linear model in this latent space
\[y = \vv{\theta}^\top \vv{z} + \xi,\]
where $\xi \in \R$ and $\vv{u}\in \R^\nfeatures$ are mutually independent noise variables. Moreover, $\xi \sim \N(0, \sigma_{\xi}^2)$ and $\vv{u} \sim \N\left(0, \mm{I}_{\nfeatures}\right)$. We consider the features in the latent space to be isotropic and normal $\vv{z} \sim  \N\left(0, \mm{I}_{\inpdim}\right)$ and choose $\mm{W}$ such that its columns are orthogonal, $\mm{W}^\top \mm{W} = \frac{\nfeatures}{\inpdim} \mm{I}_{\inpdim}$, where the factor $\frac{\nfeatures}{\inpdim}$ is introduced to guarantee that the signal-to-noise ratio of the feature vector $\x$ (i.e. $\frac{\|\mm{W} \vv{z}|_2^2}{\|\vv{u}\|_2^2}$) is kept constant. In the experiments, unless stated otherwise, we use  the parameters $\sigma_\xi=1$ and the latent dimension fixed $d=1$.

\item \textbf{Random Fourier features model}~\cite{rahimi_random_2008} The features are obtained by the nonlinear transformation $\vv{z} \mapsto \x$: \[\x = \sqrt{\frac{2}{\nfeatures}}\cos(\mm{W} \vv{z} + \vv{b}),\] where each entry of $\mm{W}$ is independently sampled from a normal distribution $\mathcal{N}(0, \sigma_w)$ and each entry from  $\vv{b}$ is sampled from a uniform distribution $\mathcal{U}[0, 2\pi)$ the pair. We apply the random Fourier feature map to inputs of the Diabetes dataset~\citep{efron_least_2004}. The outputs are kept unaltered. In the experiments, unless stated otherwise, we use  the parameter $\sigma_w = 0.01$.  

\item \textbf{Random projections model}~\cite{bach_high-dimensional_2023} We consider Gaussian noise and covariates: ${\epsilon \sim \N(0, \sigma^2)}$ and ${\x \sim \N(0, \mm{I}_\inpdim)}$  and the output is computed as a linear combination of the features contaminated with  additive noise: $y = \x^\top \mm{S}^\top \param+ \epsilon$, where $\mm{S}$ is a random projection matrix of varying dimension.
The entries of $\mm{S}$ are randomly sampled from Rademacher distribution as in the experiments in~\cite{bach_high-dimensional_2023}. In the experiments, unless stated otherwise, we use the $\sigma = 1$.

\item \textbf{Phenotype prediction from genotype.} We consider Diverse MAGIC wheat dataset~\citep{scott_limited_2021} from the National Institute for Applied Botany. 
The dataset contains the whole genome sequence data and multiple phenotypes for a population of 504 wheat lines.  We use a subset of the genotype to predict one of the continuous phenotypes.
We have integer input with values indicating whether each one of the 1.1 million nucleotides differs or not from the reference value. Closely located nucleotides tend to be correlated and we consider $\vv{z}$ a pruned version provided by~\citep{scott_limited_2021}. To generate the features we subsample $\nfeatures$  from the sequence $\vv{z}$, such that the  input to the model is $\x = \mm{W}\vv{z}$, where $\mm{W}$ is a matrix containing ones or zeros,  such that each row of $\mm{W}\in \R^{p\times d}$ have $p$ nonzero entries, i.e., $\mm{W} \vv{1} = p \vv{1}$.

\end{enumerate}

Examples 1, 2, 4 are synthetic datasets. Examples 3 and 5 are real datasets combined with a feature map strategy. We point out that example 4 requires a (slightly) different mathematical formulation, which we cover in~\Cref{sec:linear-maps-extra}. The results for the random projection model are presented separately in~\Cref{fig:random-projections}. The other figures refer to examples 1, 2, 3, 5.

\begin{figure}[H]
    \centering \vspace{30pt}
    \subfloat[Latent-space features]{\includegraphics[width=0.5\textwidth]{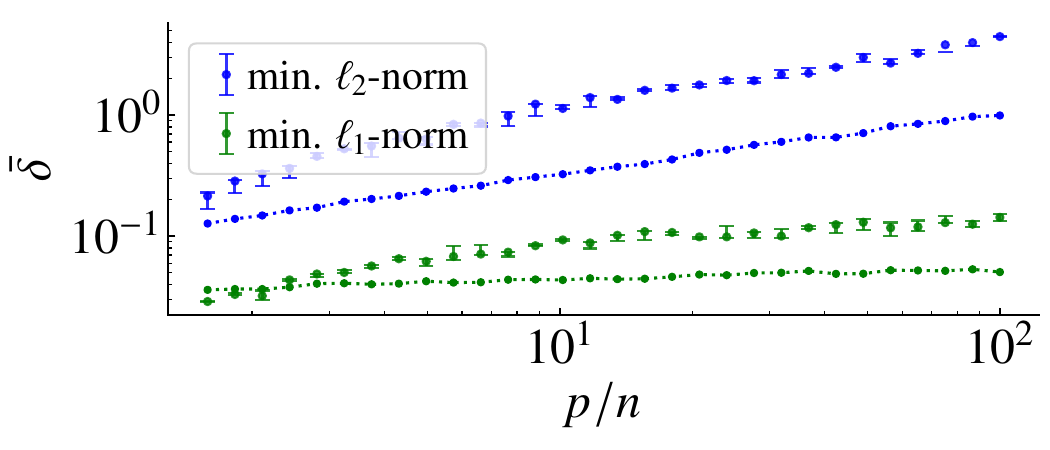}}
    \subfloat[Random Fourier features]{\includegraphics[width=0.5\textwidth]{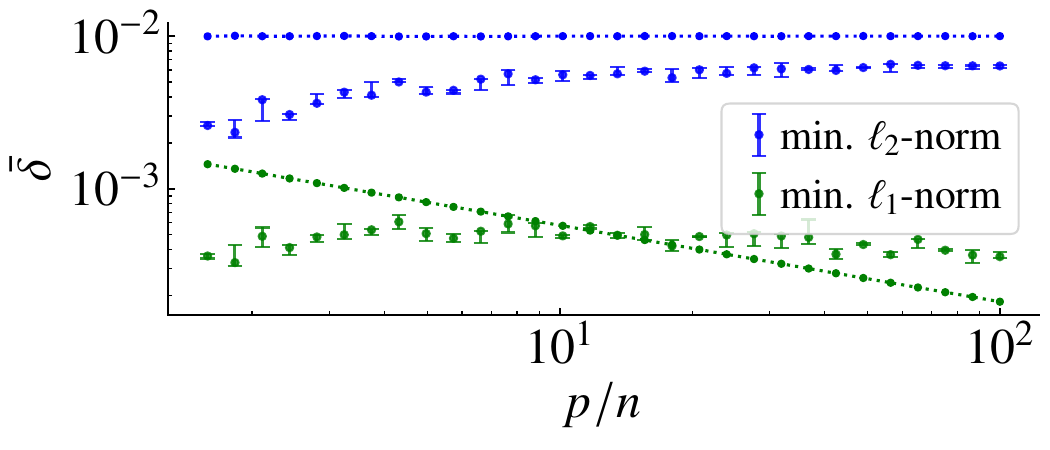}}\\
    \subfloat[Phenotype prediction from genotype]{\includegraphics[width=0.5\textwidth]{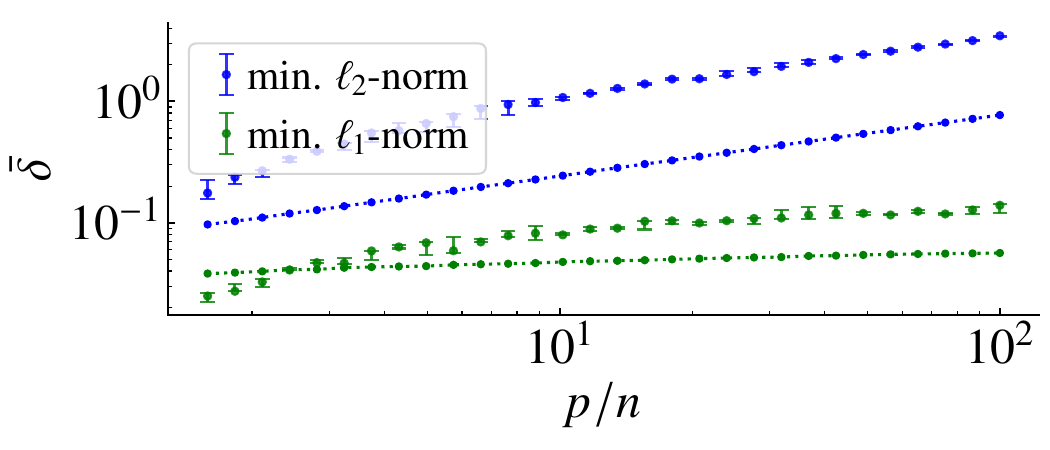}}
    \caption{\textbf{Threshold $\bar \delta$ \textit{vs.} number of features.}}
    \label{fig:threshold-others}
\end{figure}
\newpage

\begin{figure}[H]
    \centering \vspace{50pt}
    \subfloat[Gaussian features]{\includegraphics[width=0.5\textwidth]{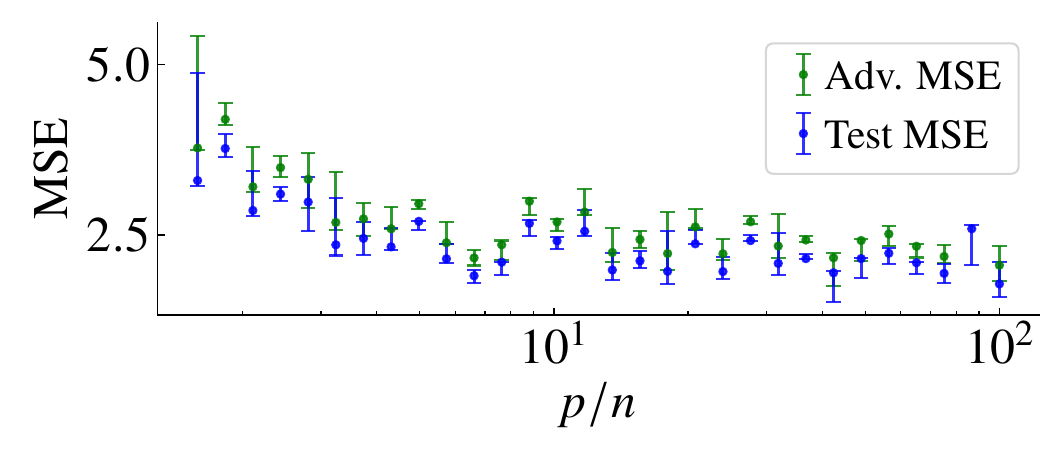}}
    \subfloat[Latent-space features]{\includegraphics[width=0.5\textwidth]{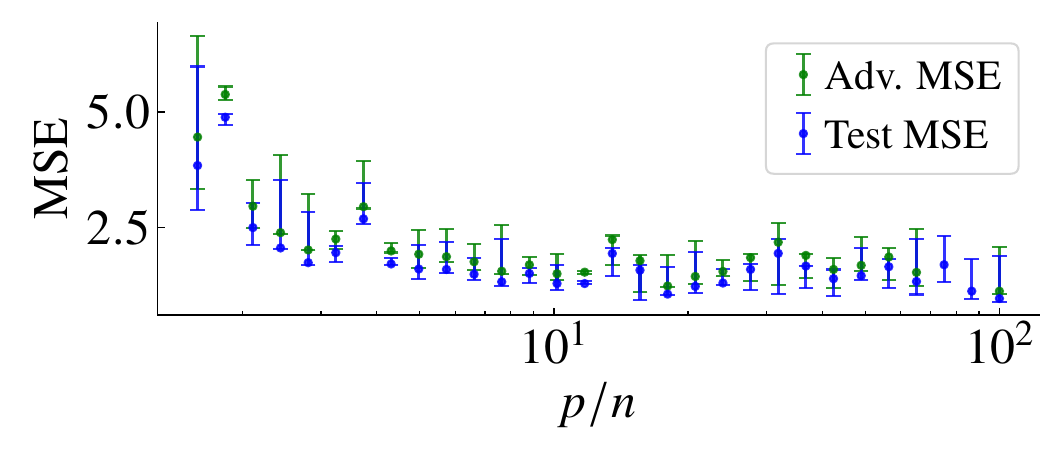}}\\
    \subfloat[Random Fourier features]{\includegraphics[width=0.5\textwidth]{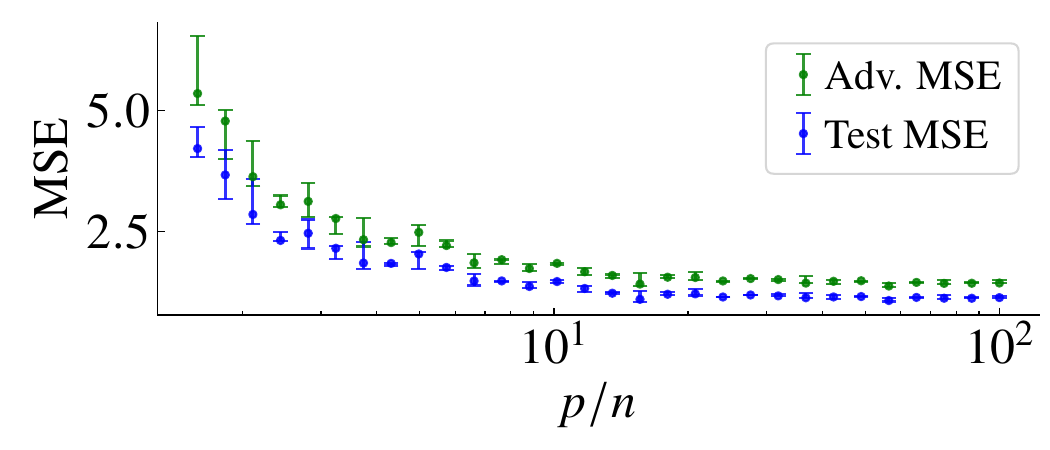}}
    \subfloat[Phenotype prediction from genotype]{\includegraphics[width=0.5\textwidth]{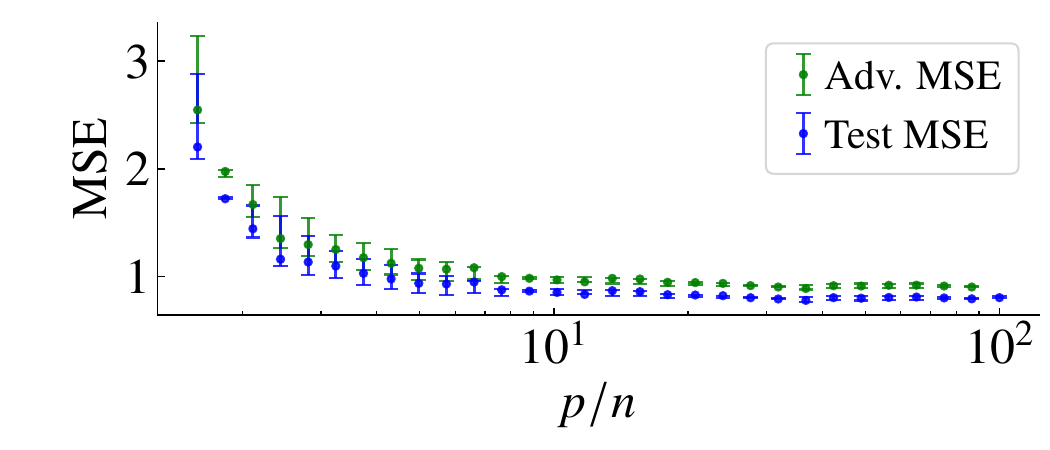}}
    \caption{\textbf{(Minimum $\ell_2$-norm interpolator) MSE on test set \textit{vs.} number of features.} We show both the MSE in the absence of an adversary (Test MSE), and in the presence of an $\ell_2$-adversarial attack (Adv. MSE). The adversarial radius of the evaluation is $\delta_{\mathrm{test}} = 0.01\Exp{\|x\|_2}$}
    \label{fig:adv-mse-l2interp}
\end{figure}

\begin{figure}[H]
    \centering
    \subfloat[Gaussian features]{\includegraphics[width=0.5\textwidth]{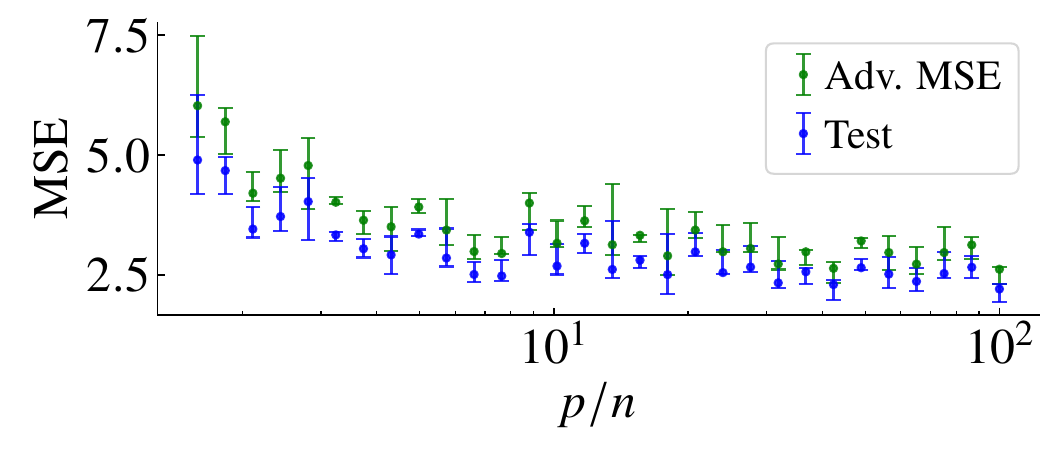}}
    \subfloat[Latent-space features]{\includegraphics[width=0.5\textwidth]{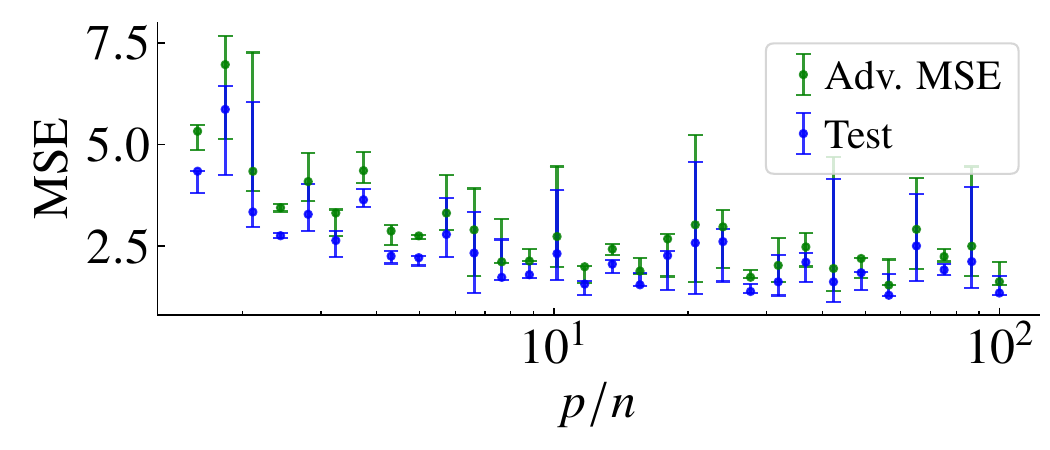}}\\
    \subfloat[Random Fourier features]{\includegraphics[width=0.5\textwidth]{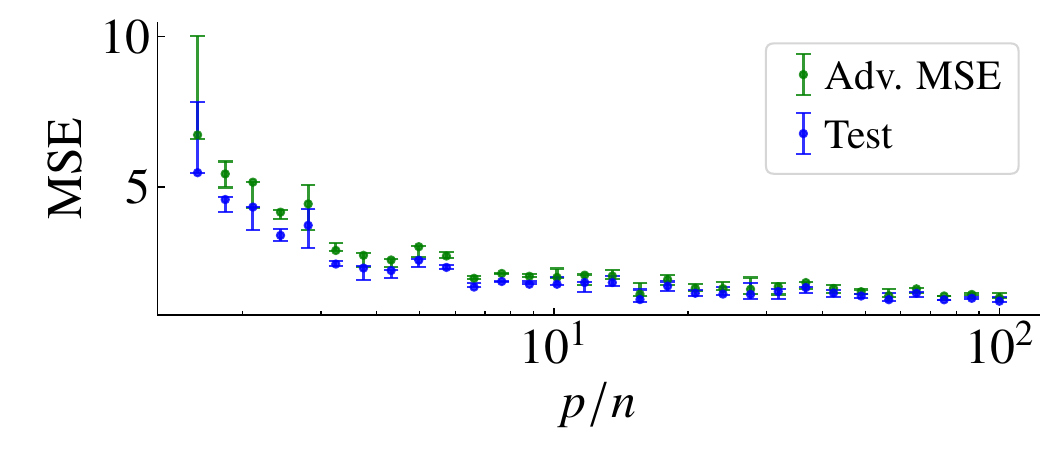}}
    \subfloat[Phenotype prediction from genotype]{\includegraphics[width=0.5\textwidth]{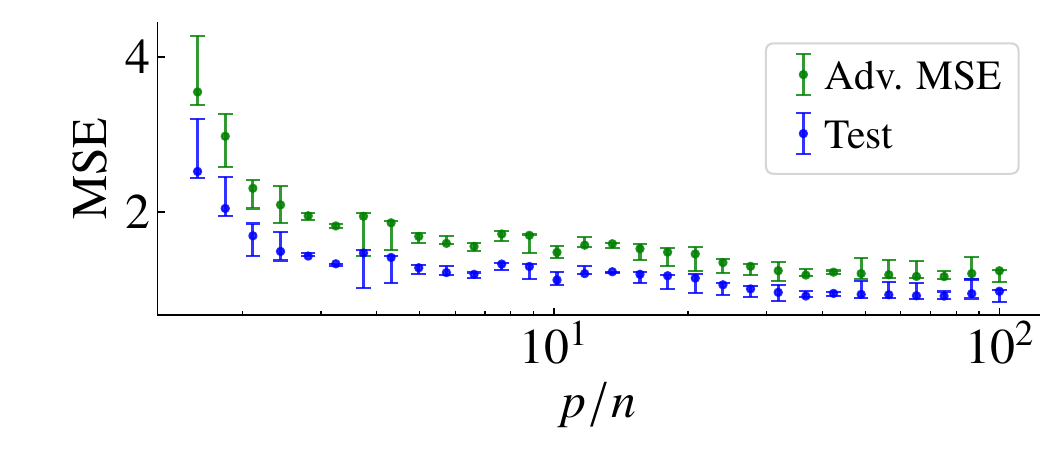}}
    \caption{\textbf{(Minimum $\ell_1$-norm interpolator) MSE on test set \textit{vs.} number of features.} We show both the MSE in the absence of an adversary (Test MSE), and in the presence of an $\ell_{\infty}$-adversarial attack (Adv. MSE). The adversarial radius of the evaluation is $\delta_{\mathrm{test}} = 0.01\Exp{\|x\|_1}$}
    \label{fig:adv-mse-l1interp}\vspace{50pt}
\end{figure}

\begin{figure}[H]
    \centering \vspace{160pt}
    \subfloat[Latent-space features]{
    \includegraphics[width=0.45\textwidth]{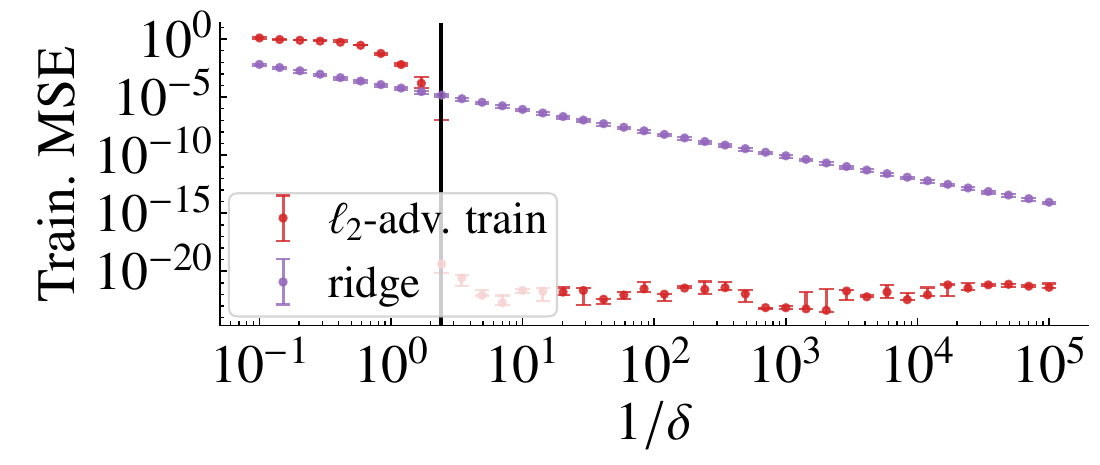}
    \includegraphics[width=0.45\textwidth]{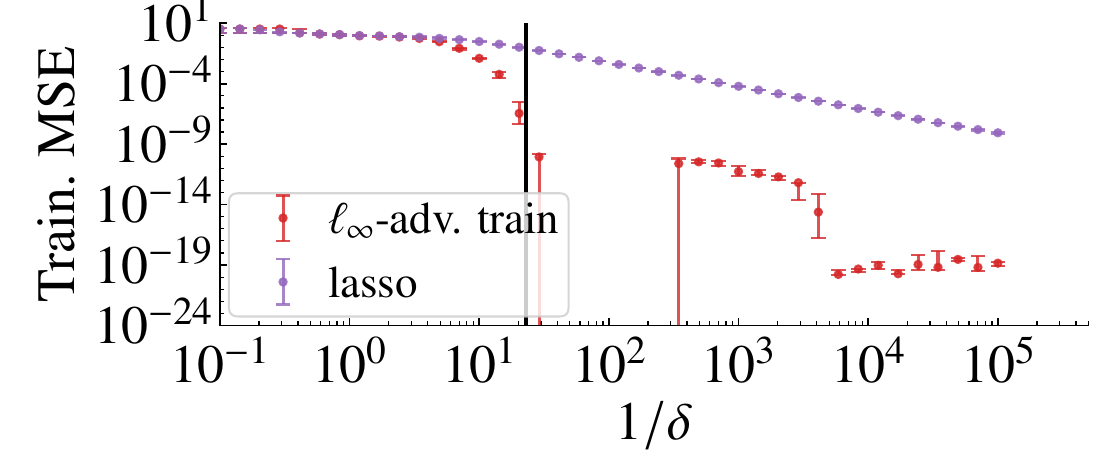}
    }\\
    \subfloat[Random Fourier features]{
    \includegraphics[width=0.45\textwidth]{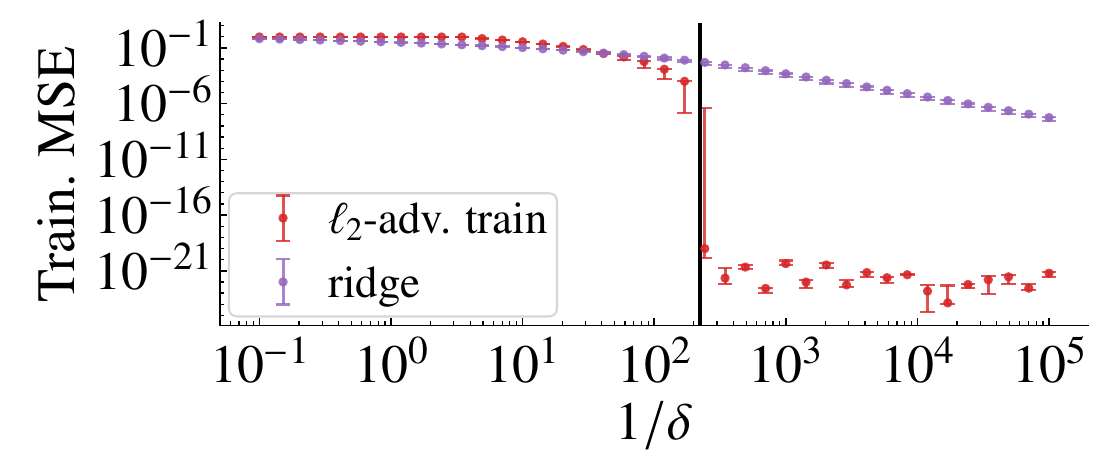}
    \includegraphics[width=0.45\textwidth]{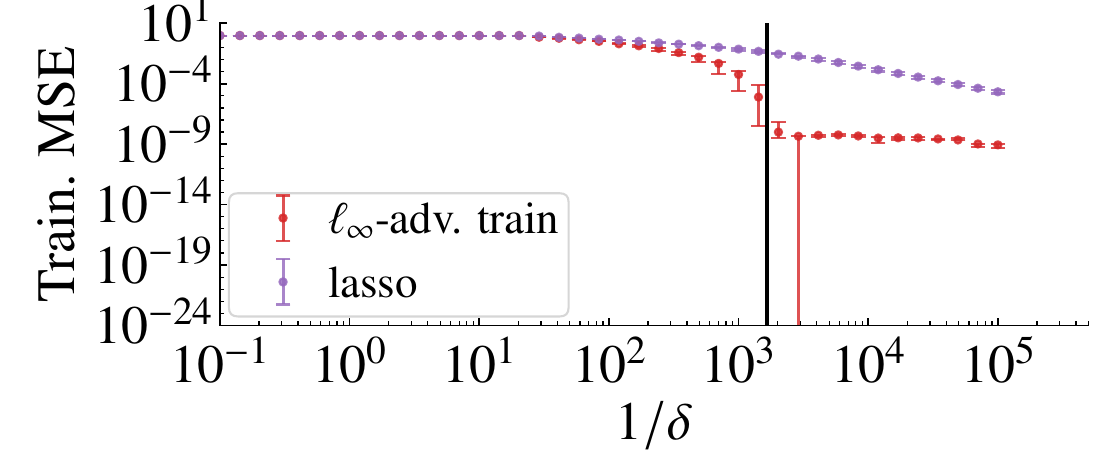}
    }\\
    \subfloat[Phenotype prediction]{
    \includegraphics[width=0.45\textwidth]{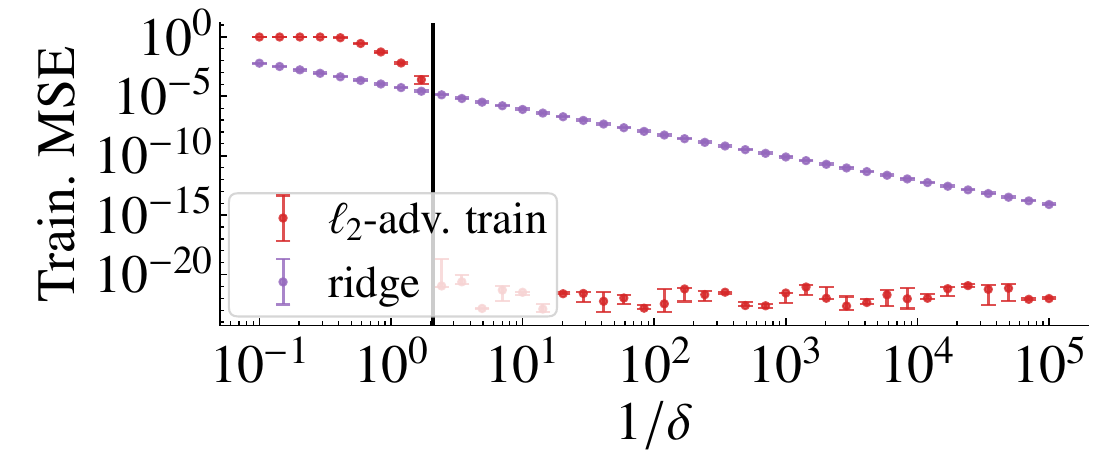}
    \includegraphics[width=0.45\textwidth]{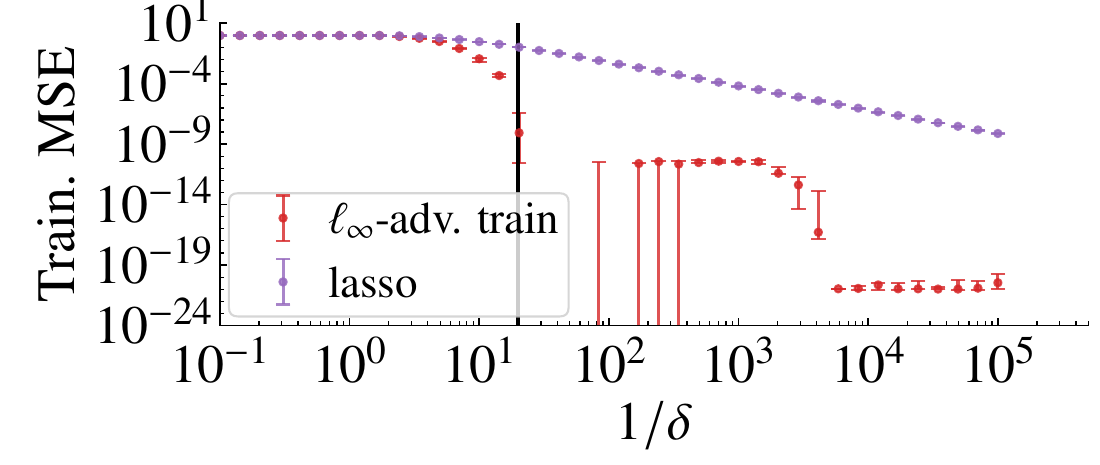}
    }
    \caption{\textbf{Training MSE \textit{vs} regularization parameter}. \emph{Left:} for ridge and $\ell_2$-adversarial training. \emph{Right:}  for Lasso and $\ell_\infty$-adversarial training The error bars give the median and the 0.25 and 0.75 quantiles obtained from numerical experiment (5 realizations).}
    \label{fig:train_mse_vs_delta}
\end{figure}

\begin{figure}[H]
    \centering \vspace{120pt}
    \subfloat[Gaussian features]{
    \includegraphics[width=0.45\textwidth]{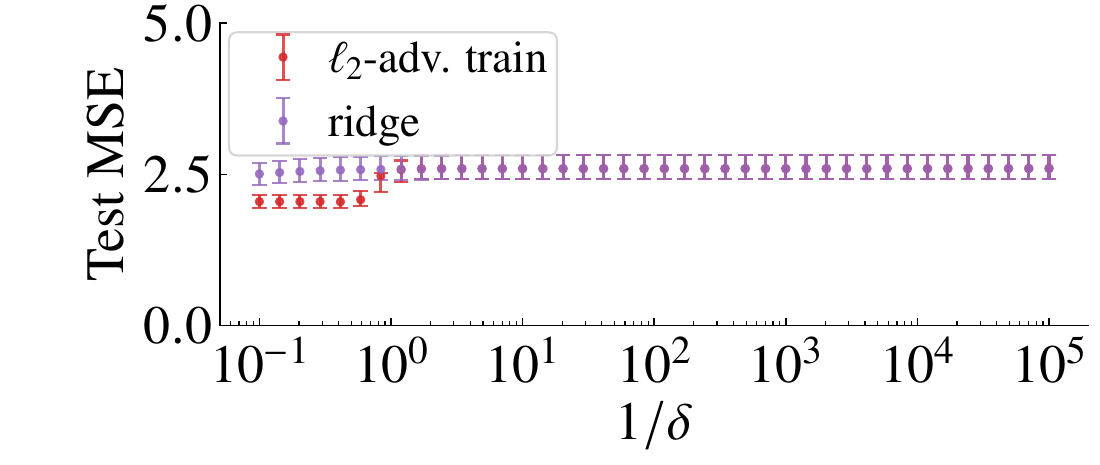}
    \includegraphics[width=0.45\textwidth]{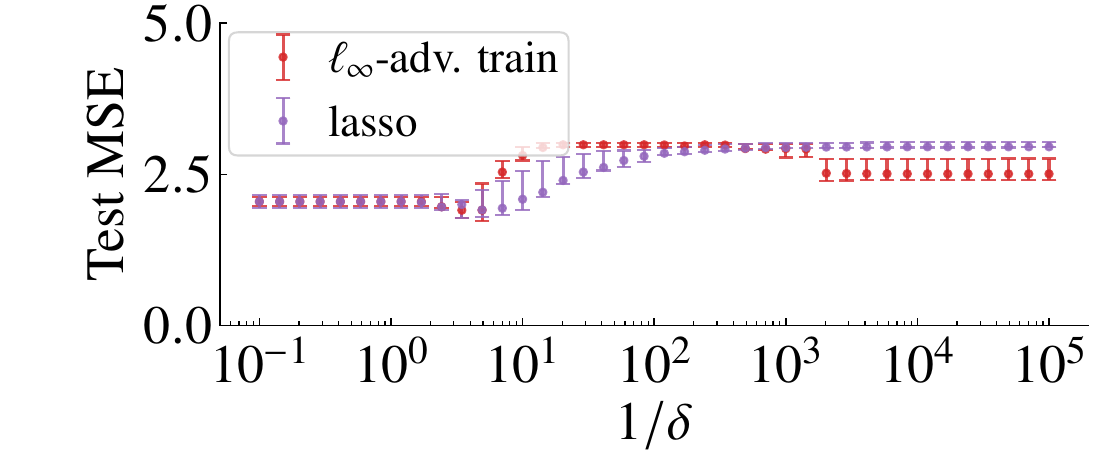}
    }\\
    \subfloat[Latent-space features]{
    \includegraphics[width=0.45\textwidth]{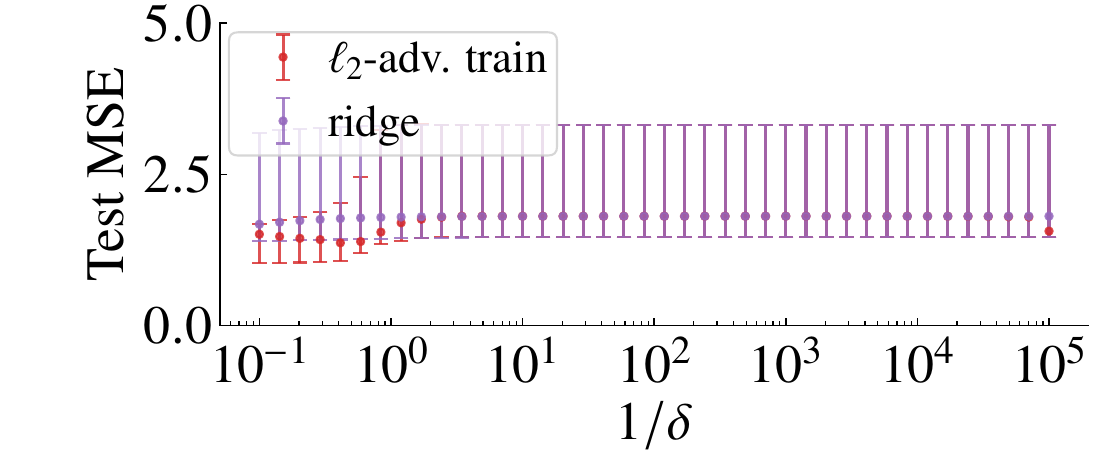}
    \includegraphics[width=0.45\textwidth]{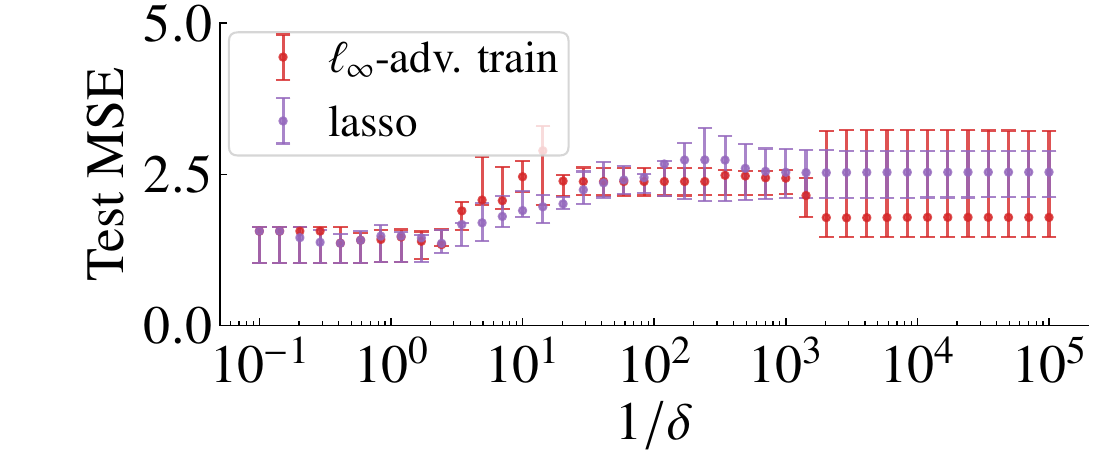}
    }\\
    \subfloat[Random Fourier features]{
    \includegraphics[width=0.45\textwidth]{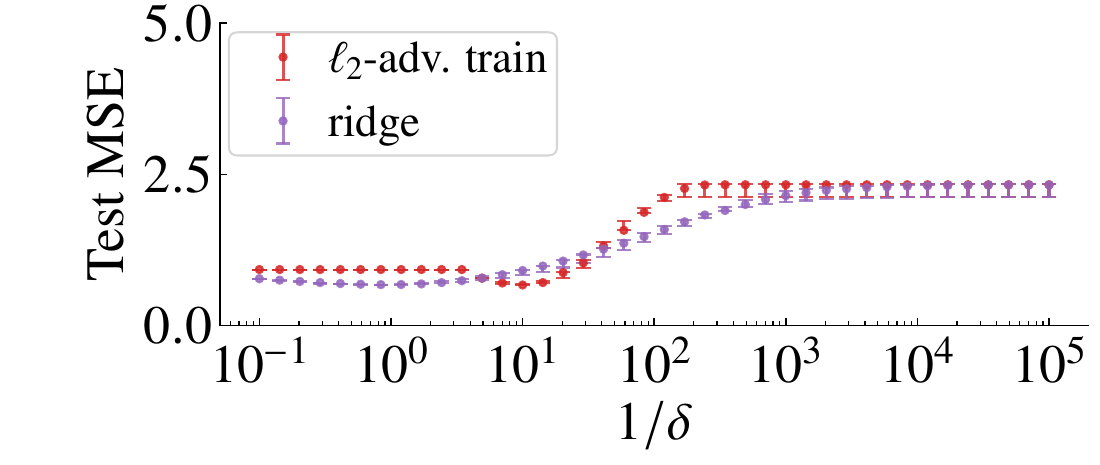}
    \includegraphics[width=0.45\textwidth]{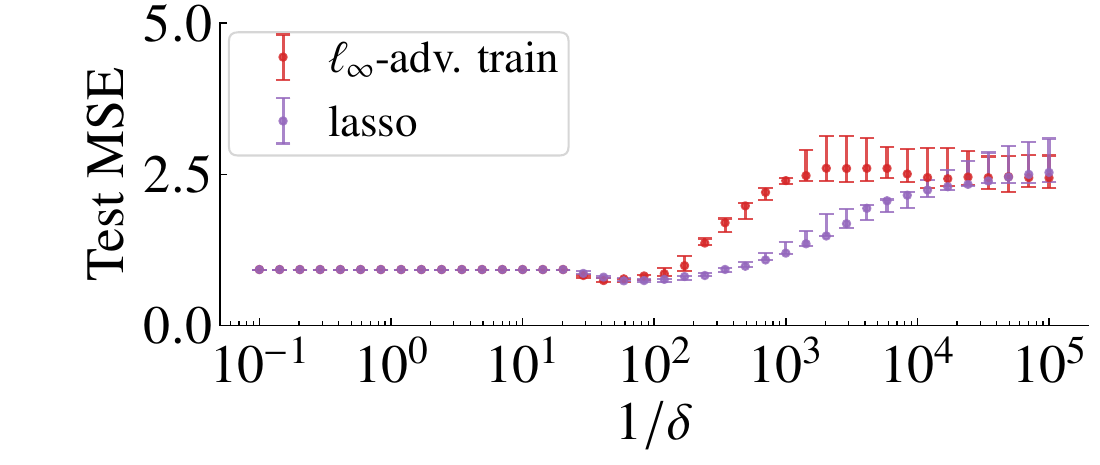}
    }\\
    \subfloat[Phenotype prediction]{
    \includegraphics[width=0.45\textwidth]{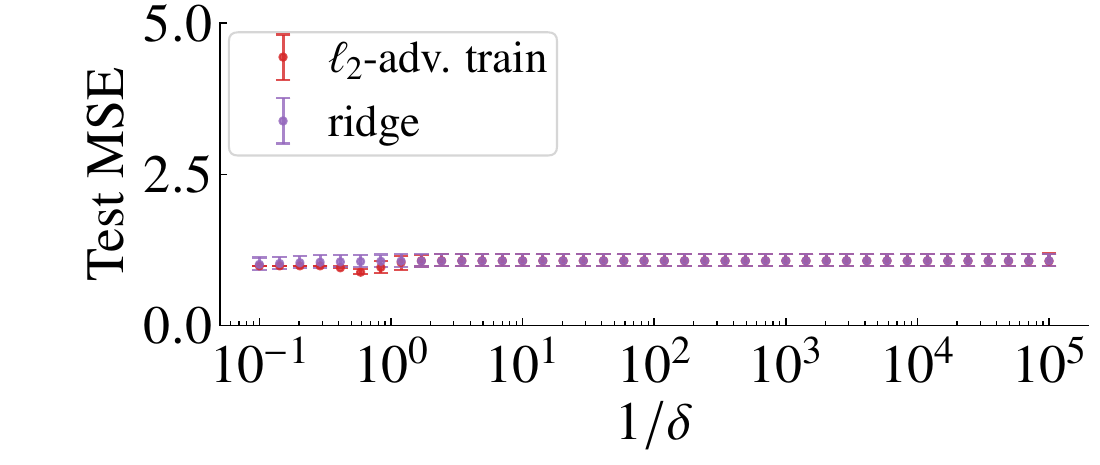}
    \includegraphics[width=0.45\textwidth]{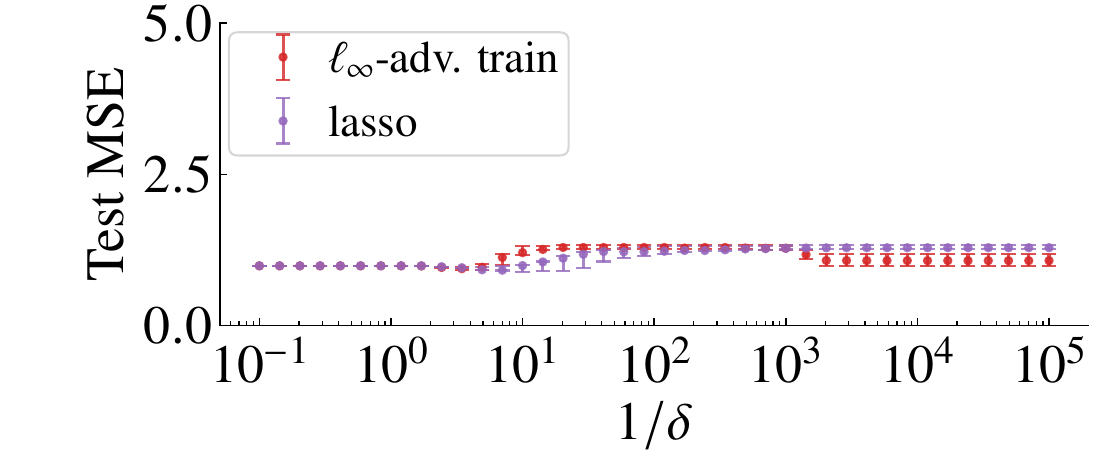}
    }
    \caption{\textbf{Test MSE \textit{vs} regularization parameter}. \emph{Left:} for ridge and $\ell_2$-adversarial training. \emph{Right:}  for Lasso and $\ell_\infty$-adversarial training. The error bars give the median and the 0.25 and 0.75 quantiles obtained from numerical experiment (5 realizations).}
    \label{fig:test_mse_vs_delta}
\end{figure}

\begin{figure}[H]
    \centering \vspace{120pt}
    \subfloat[Gaussian features]{
    \includegraphics[width=0.45\textwidth]{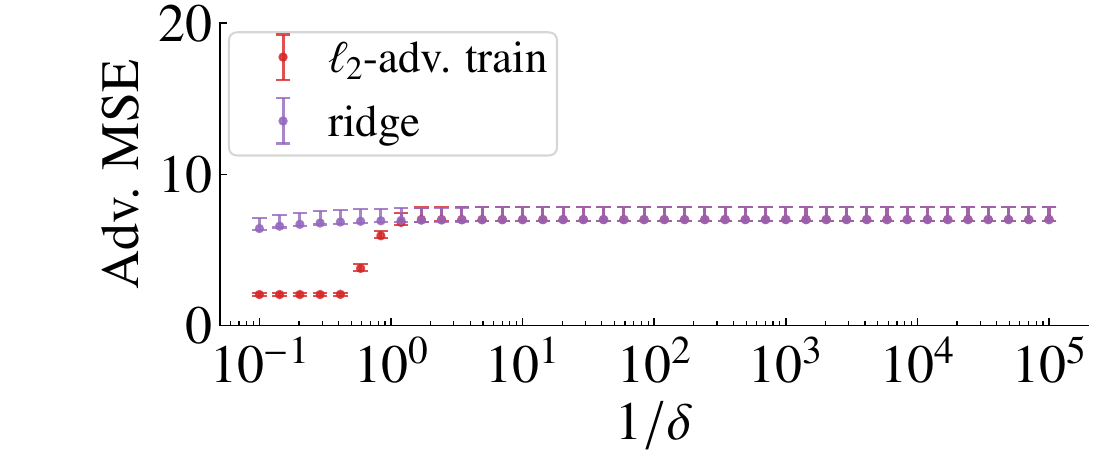}
    \includegraphics[width=0.45\textwidth]{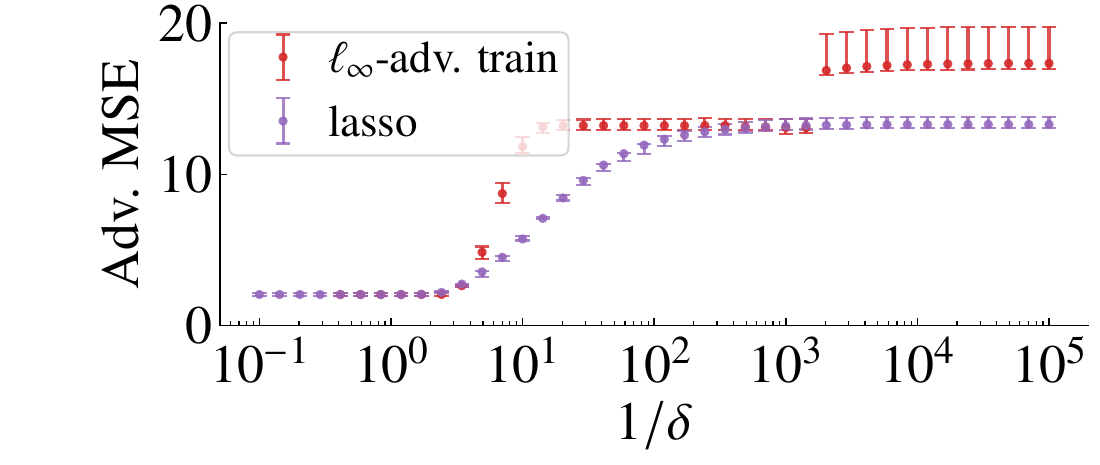}
    }\\
    \subfloat[Latent-space features]{
    \includegraphics[width=0.45\textwidth]{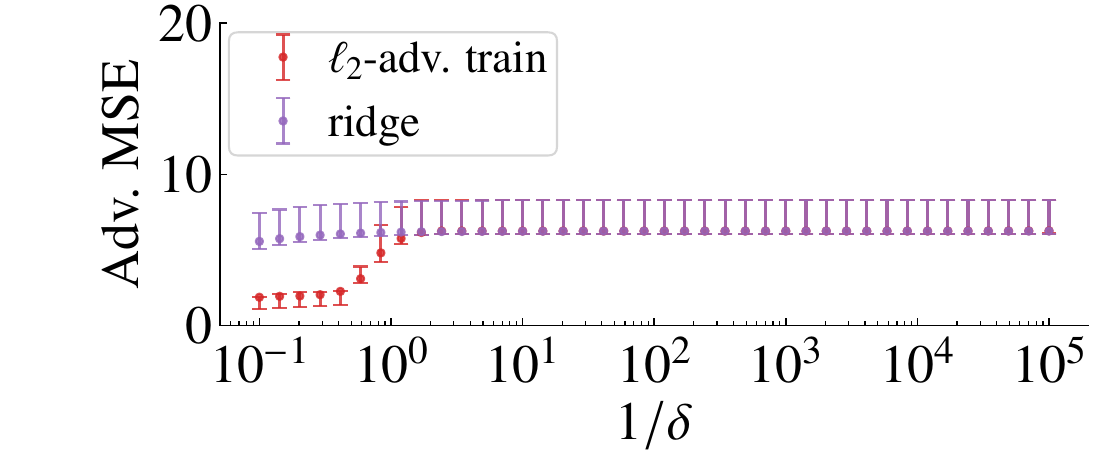}
    \includegraphics[width=0.45\textwidth]{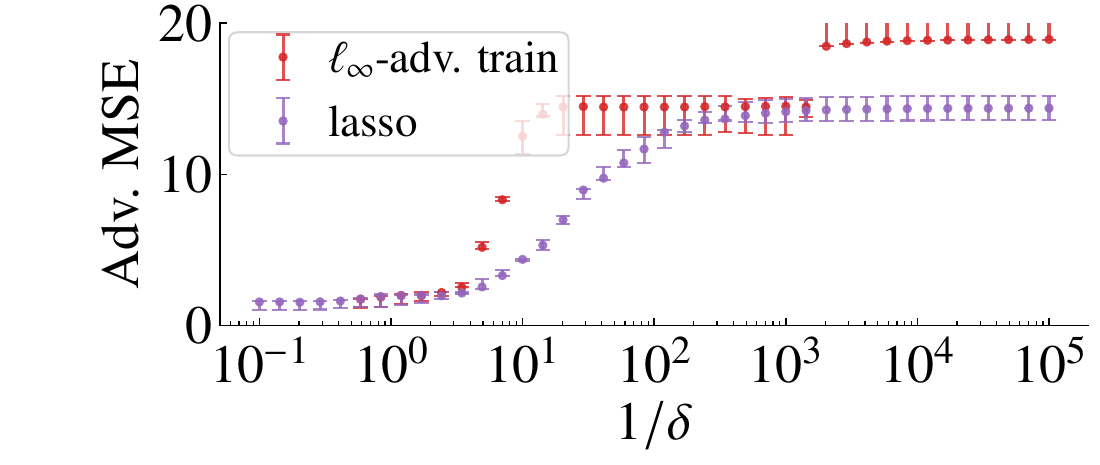}
    }\\
    \subfloat[Random Fourier features]{
    \includegraphics[width=0.45\textwidth]{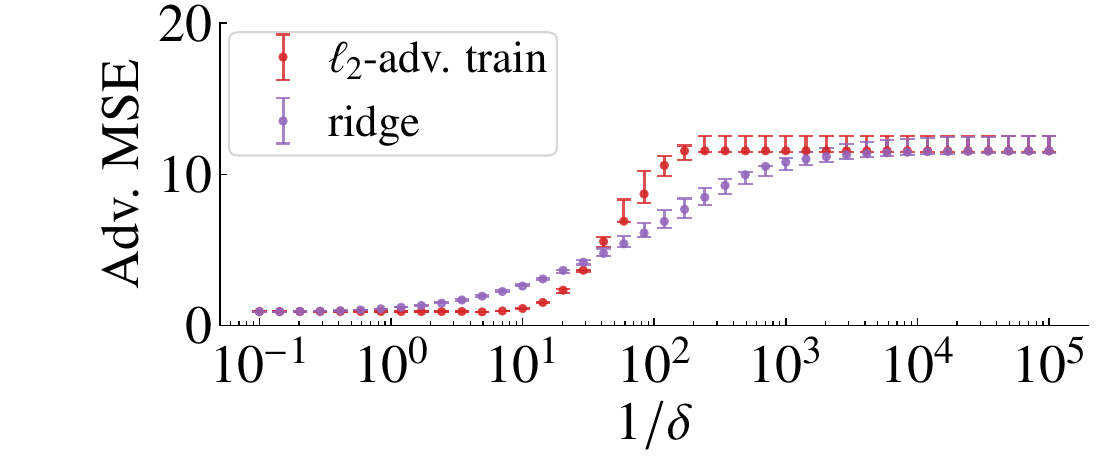}
    \includegraphics[width=0.45\textwidth]{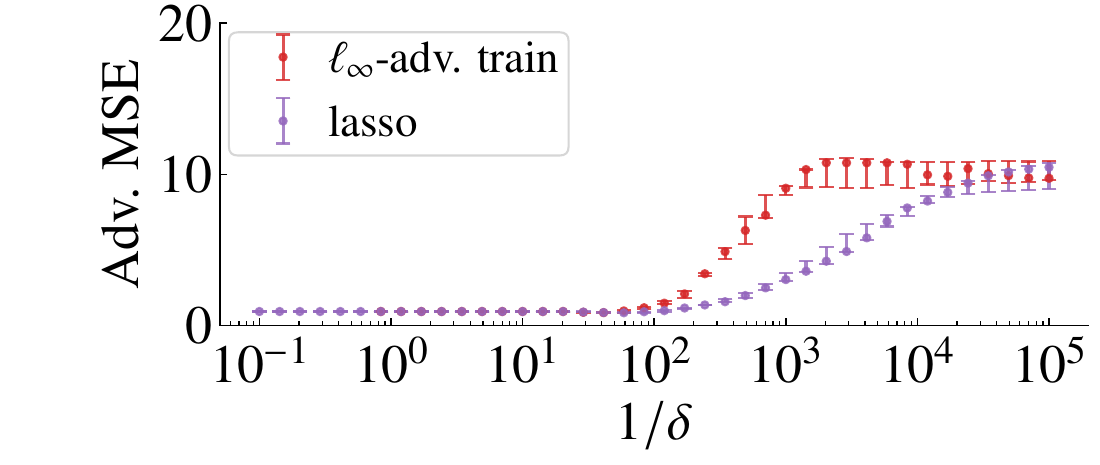}
    }\\
    \subfloat[Phenotype prediction]{
    \includegraphics[width=0.45\textwidth]{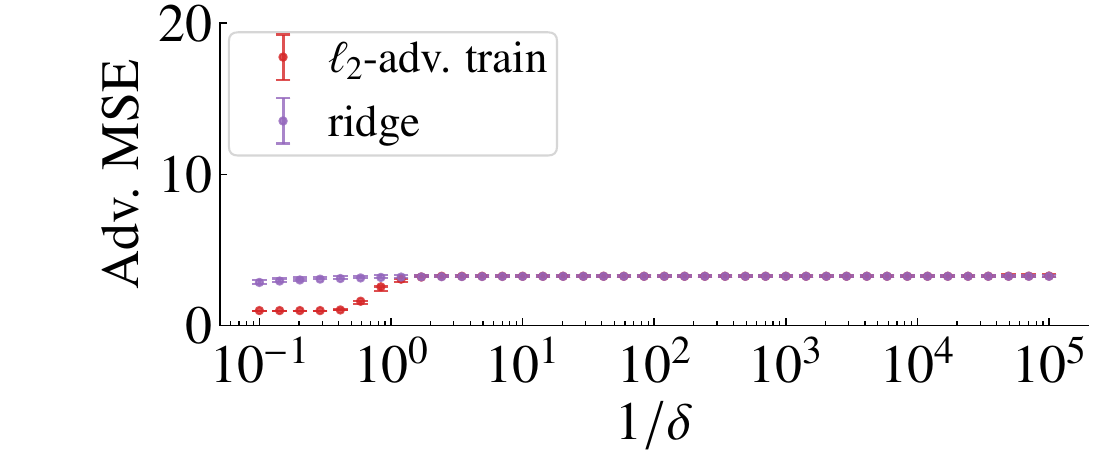}
    \includegraphics[width=0.45\textwidth]{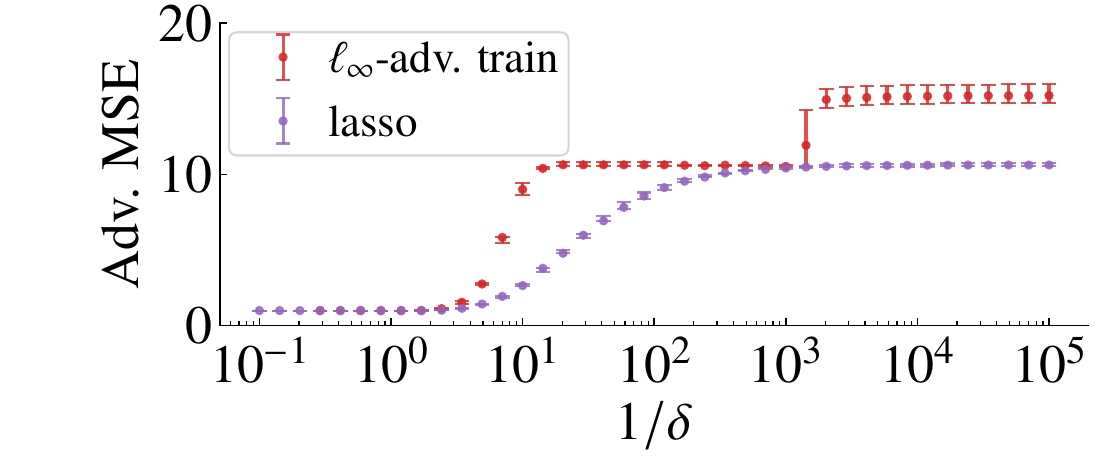}
    }
    \caption{\textbf{Adversarial Test MSE \textit{vs} regularization parameter}. \emph{Left:} for ridge and $\ell_2$-adversarial training. \emph{Right:}  for Lasso and $\ell_\infty$-adversarial training. The error bars give the median and the 0.25 and 0.75 quantiles obtained from numerical experiment (5 realizations).}
    \label{fig:adv_mse_vs_delta}
\end{figure}

\begin{figure}[H]
    \centering
    \subfloat[Threshold $\bar \delta$]{\includegraphics[width=0.5\textwidth]{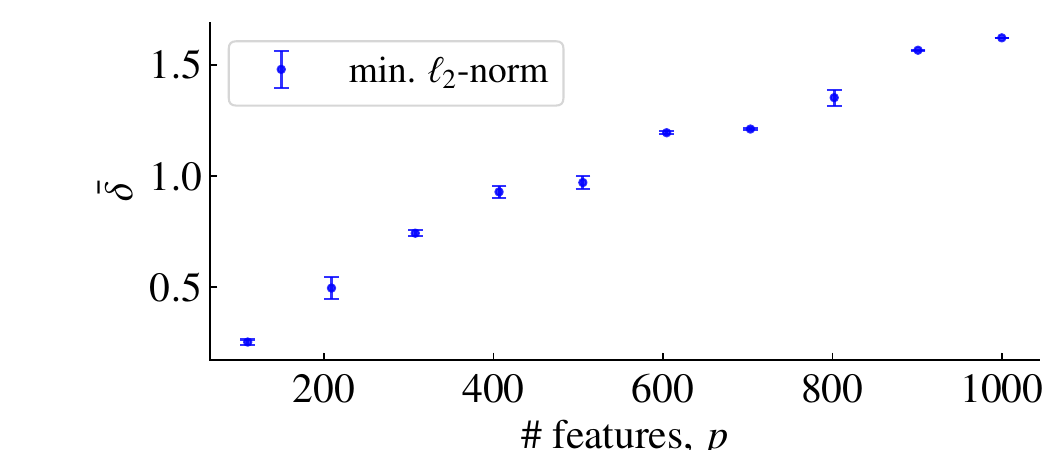}\includegraphics[width=0.5\textwidth]{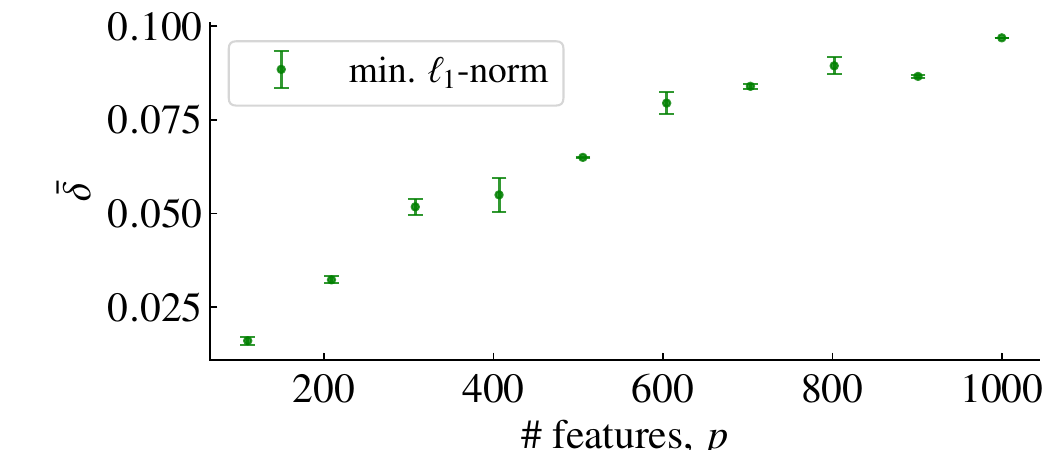}}\\
    \subfloat[Train MSE]{
    \includegraphics[width=0.5\textwidth]{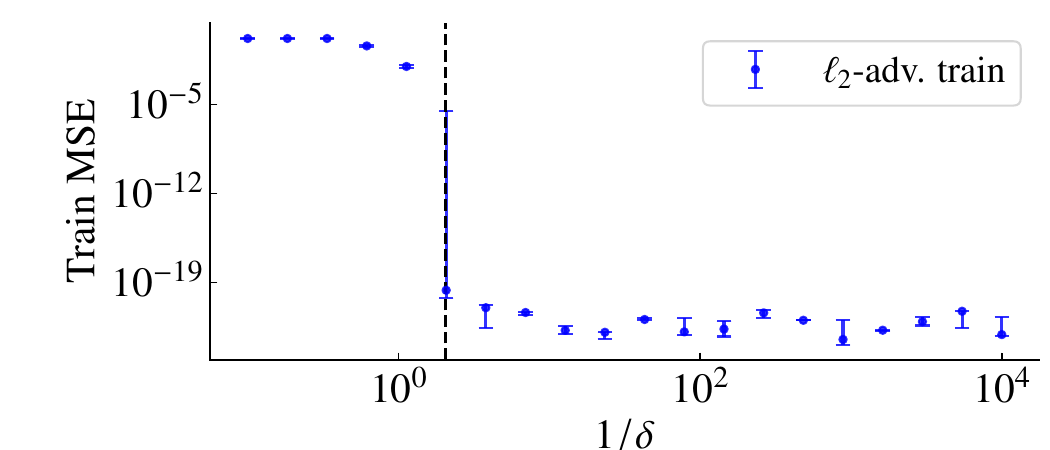}
    \includegraphics[width=0.5\textwidth]{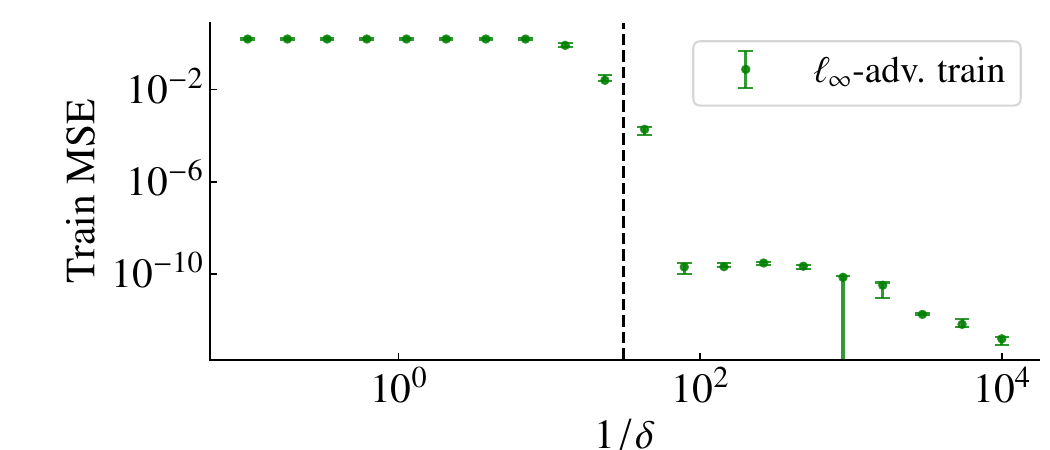}}\\
    \subfloat[Test MSE]{
    \includegraphics[width=0.5\textwidth]{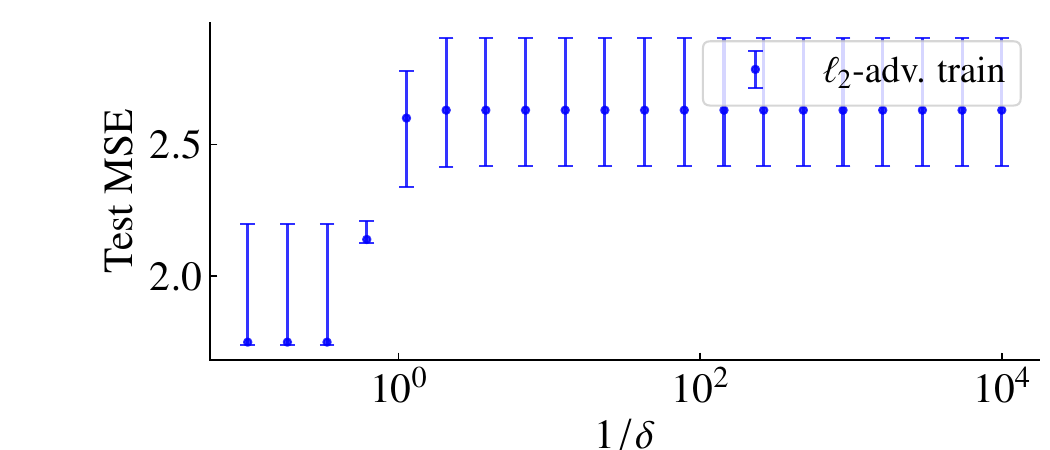}
    \includegraphics[width=0.5\textwidth]{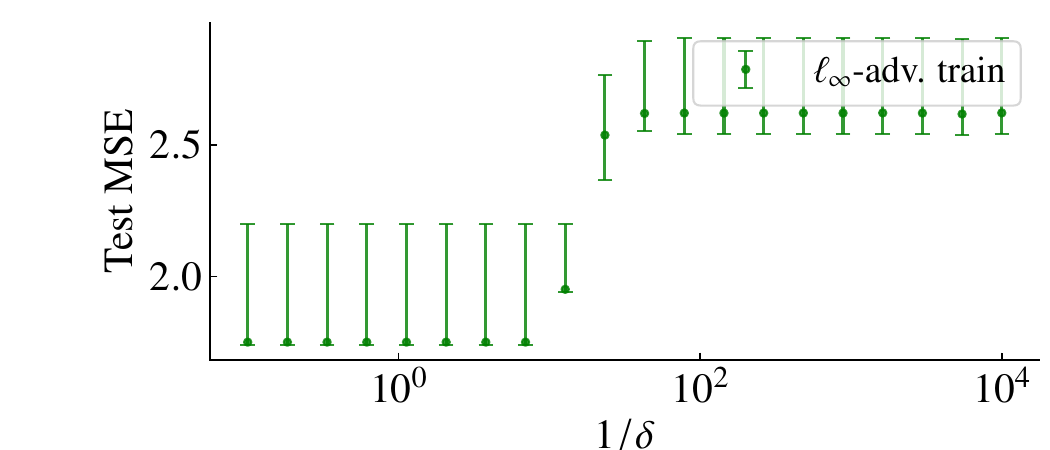}}
    \caption{\textbf{Random projections} \emph{Left:} the results for $\ell_\infty$-adversarial training. \emph{Right:} the results for $\ell_2$-adversarial attacks. In (a) we show the threshold as a function of the number of features.  Unlike \Cref{fig:threshold,fig:threshold-others}, we do not give a reference, that is because the input $x$ is fixed, so it does make sense to consider $\delta$ in absolute terms.  (b) the train MSE as a function of $1/\delta$ for the number of features fixed $p =200$. (c) the test MSE as a function of $1/\delta$. We consider an input of dimension $d= 1000$.}
    \label{fig:random-projections}
\end{figure}

\begin{figure}[H]
    \centering
      \includegraphics[width=0.5\textwidth]{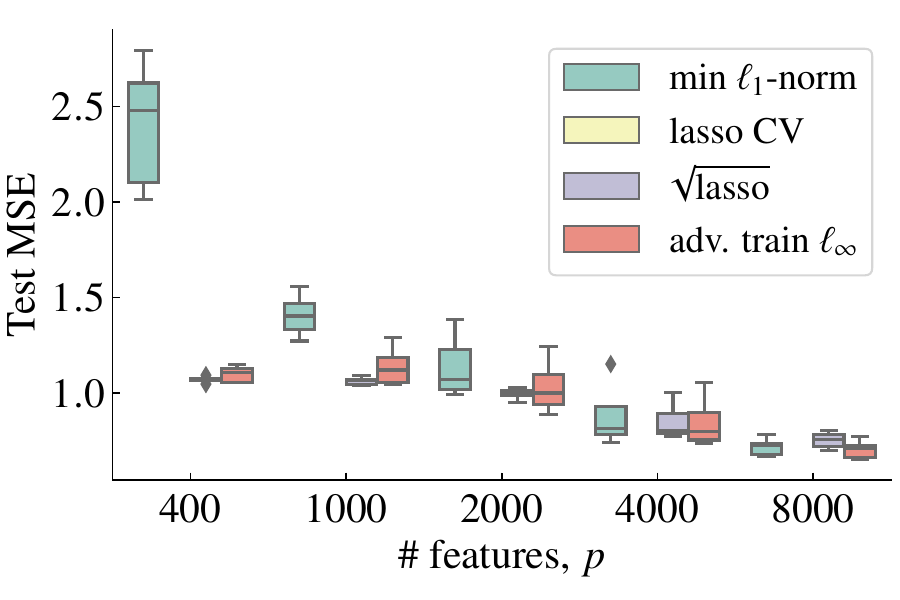}    
  \caption{\textbf{Test MSE (MSE) under fixed adversarial radius} in MAGIC dataset. We study a 
choice of adversarial radius inspired by~\Cref{thm:pred-error-slowrate}. For $\ell_{\infty}$-adversarial training, we use ${\delta = 0.5 \nicefrac{\|\X \vv{\xi}\|_\infty}{\|\vv{\xi}\|_1}}$ for $\vv{\xi}$ a vector with zero-mean normal entries. We use a random $\vv{\xi}$, since we do not know the true additive noise. Even with this approximation, $\ell_\infty$-adversarial training performs comparably with Lasso with the regularization parameter set using 5-fold cross-validation doing a full search among the hyperparameter space. We use a similar method for setting the square-root Lasso parameter, setting ${\lambda = 0.1 \nicefrac{\|\X \vv{\xi}\|_\infty}{\|\vv{\xi}\|_2}}$. The value $0.5$ and $0.1$ were set empirically, after finding out that the value 3 used in the theorem was too conservative.}
    \label{fig:magic_deltabar}
\end{figure}

\section{Results for general loss functions}

\subsection{Proof of Theorem~\ref{thm:rewriting-adv-error}}
In the proof, we will use  the Fenchel conjugate of $L$, where $L^*: \R \rightarrow \R$ is defined as:
$$L^*(u)  \coloneqq \sup_z \{uz - L(z)\}$$

\begin{proof}[]
Using Fenchel–Moreau theorem the assumption on the loss imply that $L(z) = \sup_u \{uz - L^*(u)\}$ for all $z$.  Let $z = \x^\top \param$,
\begin{align*}
    \max_{\|\dx\| \le \delta} \loss((\x + \dx)^\top  \param) &= \max_{\|\dx\| \le \delta}  \sup_{u} (u\x^\top \param+ u \dx^\top \param - L^*(u))\\
    &=  \sup_{u} \left(u\x^\top \param+ \max_{\|\dx\| \le \delta} (u \dx^\top \param) - L^*(u)\right)\\
    &=  \sup_{u} \left(u\x^\top \param+ \delta \|\param\|_* |u| - L^*(u)\right)\\
     &= \max_{s \in \{-1, 1\}} \sup_{u}  \Big( u(\x^\top \param + s \delta \|\param\|_* )- L^*(u) \Big)
\end{align*}
Applying Fenchel–Moreau theorem again we obtain the desired result.
\end{proof}

\subsection{Extension to linear maps}

We consider here a matrix $\mm{S}\in \R^{\nfeatures\times \inpdim}$ maps from the input space $\R^\inpdim$ to a feature space $\R^\nfeatures$ (The setting described in~\Cref{sec:linear-maps-extra}). And that the parameter vector is estimated in this features space. In this case, the following extension holds:
\begin{theorem}
\label{thm:rewriting-adv-error-linmap}
Let $\loss:\R \rightarrow \R$ be convex and lower-semicontinuous, than for every $\delta$
\begin{equation}
    \label{eq:adversarial-risk-linmap}
    \max_{\|\dx\| \le \delta} \loss((\x + \dx)^\top \mm{S}^\top \param) =  \max_{s \in \{-1, 1\}}  \loss\left(\x^\top  \mm{S}^\top \param  + \delta s  \|\mm{S}^\top\param\|_* \right).
\end{equation}
\end{theorem}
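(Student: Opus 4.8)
The plan is to reduce \Cref{thm:rewriting-adv-error-linmap} to \Cref{thm:rewriting-adv-error} by a single change of variables. Since $\mm{S} \in \R^{\nfeatures \times \inpdim}$, the vector $\vv{w} \coloneqq \mm{S}^\top \param$ lies in $\R^\inpdim$, which is exactly the space where the input $\x$ and the disturbance $\dx$ live. Because $(\x + \dx)^\top \mm{S}^\top \param = (\x + \dx)^\top \vv{w}$, the left-hand side of \eqref{eq:adversarial-risk-linmap} is precisely $\max_{\|\dx\| \le \delta} \loss((\x + \dx)^\top \vv{w})$, i.e.\ the left-hand side of \eqref{eq:adversarial-risk} with the parameter $\param$ replaced by $\vv{w}$. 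The hypotheses on $\loss$ (convex, lower-semicontinuous) are untouched, so \Cref{thm:rewriting-adv-error} applies verbatim and gives $\max_{s \in \{-1,1\}} \loss(\x^\top \vv{w} + \delta s \|\vv{w}\|_*)$; substituting back $\vv{w} = \mm{S}^\top \param$ yields the claimed identity.

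The only subtlety worth spelling out is the identity of the dual norm. In \Cref{thm:rewriting-adv-error} the norm $\|\cdot\|_*$ is the dual of the norm $\|\cdot\|$ used to constrain the disturbance; here $\dx$ still ranges over the input space $\R^\inpdim$ (the map $\mm{S}$ acts on $\param$, not on $\dx$), so the relevant dual norm is the same $\|\cdot\|_*$ on $\R^\inpdim$, now evaluated at $\vv{w} = \mm{S}^\top \param$. In particular no operator or induced norm of $\mm{S}$ enters, and one gets exactly $\|\mm{S}^\top \param\|_*$.

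Alternatively, one can mirror the Fenchel--Moreau proof of \Cref{thm:rewriting-adv-error} directly: expand $\loss(z) = \sup_u \{u z - \loss^*(u)\}$ at $z = (\x + \dx)^\top \mm{S}^\top \param$, exchange $\sup_u$ with $\max_{\|\dx\| \le \delta}$, evaluate $\max_{\|\dx\| \le \delta} u\, \dx^\top \mm{S}^\top \param = \delta |u|\, \|\mm{S}^\top \param\|_*$ by definition of the dual norm, write $|u| = \max_{s \in \{-1,1\}} s u$, and apply Fenchel--Moreau once more. I do not anticipate a genuine obstacle: the entire content is bookkeeping about which space the norm lives on, and the change-of-variables route above makes even that transparent.
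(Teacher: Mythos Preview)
Your proposal is correct and matches the paper's own treatment: the paper simply states that ``the proof follows the same steps and is omitted here,'' meaning one re-runs the Fenchel--Moreau argument of \Cref{thm:rewriting-adv-error} with $\mm{S}^\top\param$ in place of $\param$. Your change-of-variables observation ($\vv{w}=\mm{S}^\top\param\in\R^\inpdim$) makes this reduction explicit and shows the statement is literally an instance of \Cref{thm:rewriting-adv-error}, which is if anything slightly cleaner than redoing the steps.
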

the proof follows the same steps and is omitted here.

\subsection{Application to dimensionality reduction}

In this example, we discuss how to formulate an adversarial dimensionality reduction algorithm from principal component analysis (PCA) loss function. PCA finds a projection matrix $\mm{P}$ that transforms a given input $\x\in \R^m$ into a lower-dimensional representation $\vv{z} = \mm{P}\x\in\R^d$, with $d<m$. Conversely, given a lower-dimensional representation, the original input space can be reconstructed with the inverse transformation $\hat{\x} = \mm{P}^\top \vv{z}$. Principal components analysis can be formulated as the minimization of
\begin{equation}
    \frac{1}{n} \sum_{i=1}^n \|\x_i + \mm{P}_d\mm{P}_d^\top\x_i\|_2^2.
\end{equation}
for  $\mm{P} \in \R^{m \times d}$  a matrix with orthonormal columns $\mm{P}_d = [\vv{p}_1, \cdots, \vv{p}_d]$.
Alternatively, it can be viewed as sequentially minimizing:
\begin{equation}
    \label{eq:pca_sequential}
    \frac{1}{n} \sum_{i=1}^n \|\tilde \x_i + \vv{p}_d\vv{p_d}^\top \tilde \x_i\|_2^2~~\text{ subject to }~~ \mm{P}_{(d-1)} \vv{p}_d = 0 \text{ and }  \|\vv{p}_d\|_2 = 1
\end{equation}
where $\tilde \x_i = \x_i - \mm{P}_{(d-1)}\mm{P}_{(d-1)}^\top \x_i$. Moreover, minimizing~\eqref{eq:pca_sequential} is equivalent to minimize $\frac{1}{n} \sum_{i=1}^n \loss(\tilde \x_i^ \top \vv{p_d})$ for $\loss(\tilde \x^ \top \vv{p_d}) = - (\tilde \x^ \top \vv{p_d})^2$. With the formulation of PCA we just described, the adversarial extension follows naturally. One can obtain that for this loss function $s_* = - \text{sign}(\tilde \x^ \top \vv{p_d})$ and that 
$$ \max_{\|\dx\| \le \delta}  \|(1 + \vv{p}_d\vv{p_d}^\top) (\tilde \x + \dx)\|_2^2  = -  \left(|\tilde \x^ \top \vv{p_d}| - \delta \|\vv{p}_d\|_* \right)^2.$$

For the special case of $\ell_2$-adversarial disturbance, $\|\vv{p}_d\|_2 = 1$ hence:
$$ \max_{\|\dx\|_2 \le \delta}  \|(1 + \vv{p}_d\vv{p_d}^\top) (\tilde \x + \dx)\|_2^2 = - \left(|\tilde \x^ \top \vv{p_d}| - \delta \right)^2.$$

\end{document}